\begin{document}

\title{Bayesian subset selection and variable importance for interpretable prediction and classification}

\author{\name Daniel R. Kowal \email daniel.kowal@rice.edu \\
       \addr Department of Statistics\\
       Rice University\\
       Houston, TX 77005, USA
       }

\editor{NA}

\maketitle

\begin{abstract}
Subset selection is a valuable tool for interpretable learning, scientific discovery, and data compression. However, classical subset selection is often avoided due to selection instability, lack of regularization, and difficulties with post-selection inference. We address these challenges from a Bayesian perspective. Given any Bayesian predictive model $\mathcal{M}$, we extract a \emph{family} of  near-optimal subsets of variables for linear prediction or classification. This strategy deemphasizes the role of a single ``best" subset and instead advances the broader perspective that often many subsets are highly competitive. The \emph{acceptable family}  of subsets offers a new pathway for model interpretation and is 
neatly summarized by key members such as the smallest acceptable subset, along with new (co-) variable importance metrics based on whether variables (co-) appear in all, some, or no acceptable subsets. More broadly, we apply Bayesian decision analysis to derive the optimal linear coefficients for \emph{any} subset of variables. These coefficients inherit both regularization and predictive uncertainty quantification via $\mathcal{M}$. For both simulated and real data, the proposed approach exhibits better prediction, interval estimation, and variable selection than competing Bayesian and frequentist selection methods. These tools are applied to a large education dataset with highly correlated covariates. Our analysis provides unique insights into the combination of environmental, socioeconomic, and demographic factors that predict educational outcomes, and identifies over 200 distinct subsets of variables that offer near-optimal out-of-sample predictive accuracy.
\end{abstract}

\begin{keywords}
education; linear regression; logistic regression; model selection; penalized regression
\end{keywords}

\section{Introduction}

Subset and variable selection are essential components of regression analysis, prediction, and classification. 
By identifying subsets of important covariates, the analyst can acquire 
simpler and more interpretable summaries of the data, 
improved prediction or classification, 
reduced estimation variability among the selected covariates, 
lower storage requirements, and 
insights into the factors that determine predictive accuracy \citep{Miller1984}.  Classical subset selection is expressed as the solution to the constrained least squares problem
\begin{equation}\label{classic}
\min_{\bm \beta} \Vert \bm y - \bm X \bm \beta \Vert_2^2  \quad \mbox{subject to} \quad \Vert \bm \beta \Vert_0 \le k
\end{equation}
where $\bm y$ is an $n$-dimensional response, $\bm X$ is an $n \times p$ matrix of covariates, and $\bm \beta$ is the $p$-dimensional vector of unknown coefficients. Traditionally, the goal is to determine (i) the best subset of each size $k$,  (ii) an estimate of the accompanying nonzero linear coefficients, and (iii) the best subset of any size.  Subset selection has garnered additional attention recently, in part due to the algorithmic advancements from \cite{Bertsimas2016} and the detailed comparisons of  \cite{Hastie2020}. More broadly, variable selection has been deployed as a tool for interpretable machine learning, including for highly complex and nonlinear models  
(e.g., \citealp{Ribeiro2016,Afrabandpey2020}). 

Although often considered the ``gold standard" of variable selection, subset selection remains underutilized due to several critical limitations. First, subset selection is inherently unstable: it is common to obtain entirely distinct subsets under perturbations or resampling of the data. This instability undermines the interpretability of a single ``best" subset, and is a significant motivating factor for the proposed methods. Second, the solutions to \eqref{classic} are unregularized. While it is advantageous to avoid \emph{overshrinkage}, \cite{Hastie2020} showed that the lack of any regularization in \eqref{classic} leads to deteriorating performance relative to penalized regression in low-signal settings. Third, inference about $\bm \beta$ requires careful adjustment for selection bias, which limits the available options for uncertainty quantification. Finally, solving \eqref{classic} is computationally demanding even for moderate $p$, which has spawned many algorithmic advancements spanning multiple decades (e.g., \citealp{Furnival2000,Gatu2006,Bertsimas2016}). For these reasons, penalized regression techniques that replace the $\ell_0$-penalty with convex or nonconvex yet computationally feasible alternatives \citep{Fan2010} are often preferred.

From a Bayesian perspective, \eqref{classic} is usually translated into a predictive model via  a Gaussian (log-) likelihood and a sparsity (log-) prior. Indeed, substantial research efforts have been devoted to both sparsity (e.g., \citealp{ishwaran2005spike}) and shrinkage (e.g., \citealp{polson2010shrink}) priors for $\bm \beta$.  Yet the prior alone cannot \emph{select} subsets:  the prior is the component of the data-generating process that incorporates prior beliefs, information, or regularization, while selection is ultimately a decision problem \citep{lindley1968choice}. For instance, \cite{barbieri2004optimal} and \cite{Liang2013} applied sparsity priors to obtain posterior inclusion probabilities, which were then used for \emph{marginal} selection and screening, respectively. 
\cite{Jin2020} selected subsets using marginal likelihoods, but required conjugate priors for Gaussian linear models. \cite{hahn2015decoupling} more fully embraced the decision analysis approach for variable selection, and augmented a squared error loss with an $\ell_1$-penalty for linear variable selection. Alternative loss functions have been proposed for  seemingly unrelated regressions \citep{Puelz2017}, graphical models \citep{Bashir2019}, nonlinear regressions \citep{woody2019model}, functional regression \citep{kowal2019bayesianfosr}, time-varying parameter models \citep{Huber2020}, and a variety of Kullback-Leibler approximations to the likelihood \citep{goutis1998model,nott2010bayesian,tran2012predictive,piironen2018projective}.  

Despite the appropriate deployment of  decision analysis for selection, each of these Bayesian methods relies on $\ell_1$-penalization or forward search. As such, they are restricted to limited search paths that cannot fully solve the (exhaustive) subset selection problem. More critically, these Bayesian approaches---as well as most classical ones---are unified in their emphasis on 
selecting a \emph{single} ``best" subset. However, in practice it is common for many subsets (or models) to achieve near-optimal predictive performance, known as the \emph{Rashomon effect}  \citep{breiman2001statistical}. This effect is particularly pronounced for correlated covariates, weak signals, or small sample sizes. Under these conditions, it is empirically and theoretically possible for the ``true" covariates to be predictively inferior to a proper subset  \citep{Wu2007}. As a result, the ``best" subset is not only less valuable but also less interpretable. Reporting a single subset---or a small number of subsets along a highly restricted search path---obscures the likely presence of many distinct yet equally-predictive subsets of variables. 


We advance an agenda that instead curates and summarizes a \emph{family} of optimal or near-optimal subsets. This broader analysis alleviates the instability issues of a single ``best" subset and provides  a more complete predictive picture. The proposed approach operates within a decision analysis framework and is compatible with \emph{any} Bayesian model $\mathcal{M}$ for prediction or classification. Naturally, $\mathcal{M}$ should represent the modeler's beliefs about the data-generating process and describe the salient features in the data. Several key developments are required:
\begin{enumerate}  \setcounter{enumi}{0}
\item We derive  \emph{optimal}  (in a decision analysis sense) linear coefficients for any subset of variables.  
\end{enumerate}
Crucially, these coefficients inherit regularization \emph{and} uncertainty quantification via $\mathcal{M}$, but avoid the overshrinkage induced by $\ell_1$-penalization. As such, these point estimators resolve multiple limitations of classical subset selection and ($\ell_1$-penalized) Bayesian decision-analytic variable selection, and further are adapted to both regression and classification problems.   Next,
\begin{enumerate}  \setcounter{enumi}{1}
\item We design a modified branch-and-bound algorithm for \emph{efficient exploration} over the space of candidate subsets. 
\end{enumerate}
The search process is a vital component of subset selection, and our modular framework is broadly compatible with other state-of-the-art search algorithms (e.g., \citealp{Bertsimas2016}). Until now, these algorithms have not been fully deployed or adapted for Bayesian subset selection. Additionally,
\begin{enumerate}  \setcounter{enumi}{2}
\item We leverage the predictive distribution under $\mathcal{M}$ to collect the \emph{acceptable family} of near-optimal subsets.
\end{enumerate}
A core feature of the acceptable family is that it is defined using out-of-sample metrics and predictive uncertainty quantification, yet is computed using \emph{in-sample posterior functionals} from a single model fit of $\mathcal{M}$. Hence, we maintain computational scalability and coherent uncertainty quantification that avoids data reuse. Lastly,
\begin{enumerate}  \setcounter{enumi}{3}
\item We \emph{summarize} the acceptable family with key member subsets, such as the ``best" (in terms of cross-validation error) predictor and the smallest acceptable subset, along with new (co-) variable importance metrics that measure the frequency with which variables (co-) appear in all, some, or no acceptable subsets. 
\end{enumerate}
Unlike variable importances based on effect size, this inclusion-based metric effectively measures how many  ``predictively plausible" explanations (i.e., near-optimal subsets)  contain each (pair of) covariate(s) as a member. Notably, each of these developments is presented for both prediction and classification.

The importance of curating and exploring a collection of subsets has been acknowledged previously. Existing approaches are predominantly frequentist, including fence methods \citep{Jiang2008}, Rashomon sets \citep{Semenova2019}, bootstrapped confidence sets \citep{Lei2019}, and subsampling-based forward selection  \citep{Kissel2021}. Although the acceptable family has appeared previously for Bayesian decision analysis \citep{Kowal2020target,KowalPRIME2020}, it was applied only along the $\ell_1$-path which does not enumerate a sufficiently rich collection of competitive subsets. Further, previous applications of the acceptable family did not address points 1., 2., and 4.\! above.


The paper is outlined as follows. Section~\ref{methods} contains the Bayesian subset search procedures, the construction of acceptable families, the (co-) variable importance metrics, and the predictive uncertainty quantification. Section~\ref{sims} details the simulation study. The methodology is applied to a large education dataset in Section~\ref{app}. We conclude in Section~\ref{disc}. The Appendix provides additional algorithmic details, simulation studies, and results from the application.  An \texttt{R} package is available online at \url{https://github.com/drkowal/BayesSubsets}.  Although the education dataset \citep{ChildrensEnvironmentalHealthInitiative2020} cannot be released due to privacy protections, access to the dataset can occur through establishing affiliation with the Children's Environmental Health Initiative (contact \href{mailto:cehi@nd.edu}{cehi@nd.edu}).

\section{Methods}\label{methods}
\subsection{Predictive decision analysis} 
Decision analysis establishes the framework for extracting actions, estimators, and predictions from a Bayesian model (e.g., \citealp{Bernardo2009}). These tools translate probabilistic models into practical decision-making and can be deployed to summarize or interpret complex models. However, additional methodology is needed to convert subset selection into a decision problem, and further to evaluate and collect many near-optimal subsets.

Let  $\mathcal{M}$ denote any Bayesian model with a proper posterior distribution $p_\mathcal{M}(\bm \theta | \bm y)$ and posterior predictive distribution $$p_\mathcal{M}(\bm{\tilde y} | \bm y ) \coloneqq \int p_\mathcal{M}(\bm{\tilde y} | \bm \theta, \bm y ) p_\mathcal{M}(\bm \theta | \bm y) d\bm\theta,$$ where $\bm \theta$ denotes the parameters of the model $\mathcal{M}$ and $\bm y$ is the observed data.  Informally,  $p_\mathcal{M}(\bm{\tilde y} | \bm y )$  defines the distribution of future or unobserved data $\bm{\tilde y}$ conditional on the observed data $\bm y$ and according to the model $\mathcal{M}$. 
Decision analysis evaluates  each \emph{action} $\bm \delta$ based on a loss function $\mathcal{L}(\bm{\tilde y}, \bm \delta)$ that enumerates the cost of each action when $\bm{\tilde y}$ is realized. Examples include point prediction (e.g., squared error loss) or classification (e.g., cross-entropy loss), interval estimation (e.g., minimum length subject to $1-\alpha$ coverage), and selection among a set of hypotheses (e.g., 0-1 loss). Since $\bm{\tilde y}$ is unknown yet modeled probabilistically under $\mathcal{M}$, an optimal action minimizes the posterior predictive expected loss
\begin{equation}\label{action}
\bm{\hat \delta} \coloneqq \arg\min_{\bm\delta} \mathbb{E}_{\bm{\tilde y} | \bm y} \mathcal{L}(\bm{\tilde y}, \bm \delta)
\end{equation}
with the expectation taken under the Bayesian model $\mathcal{M}$. The operation in \eqref{action} averages the predictive loss over the possible realizations of $\bm{\tilde y}$ according to the posterior probability under $\mathcal{M}$ and then minimizes the resulting quantity over the action space.

Yet without careful specification of the loss function $\mathcal{L}$, \eqref{action} does not provide a clear pathway for subset selection. To see this, let $\bm{\tilde y}(\bm { x})$ denote the predictive variable at covariate  $\bm x$. For point prediction of $\bm{\tilde y}(\bm{ x})$ under squared error loss, the optimal action is 
\begin{align*}
\bm{\hat \delta}(\bm{ x}) & \coloneqq \arg\min_{\bm\delta} \mathbb{E}_{\bm{\tilde y} | \bm y}  \Vert\bm{\tilde y}(\bm{ x}) - \bm \delta(\bm{ x})\Vert_2^2 \\
&=  \mathbb{E}_{\bm{\tilde y} | \bm y} \{\bm{\tilde y}(\bm{ x})\},
\end{align*}
i.e., the posterior predictive expectation at $\bm x$. Similarly, for classification of $\bm{\tilde y}(\bm{ x}) \in \{0,1\}$ under cross-entropy loss (see \eqref{cross-ent}), the optimal action  is the posterior predictive probability $\bm{\hat \delta}(\bm{ x}) =  p_\mathcal{M}\{\bm{\tilde y}(\bm{ x})  = 1 | \bm y\}$.  For a generic model $\mathcal{M}$, there is not necessarily a closed form for $\bm{\hat \delta}(\bm{ x})$: these actions are computed separately for each $\bm x$ with no clear mechanism for inducing sparsity or specifying distinct subsets. Hence, additional techniques are needed to supply actions that are not only optimal but also selective and interpretable.

Note that we use observation-driven rather than parameter-driven loss functions. Unlike the parameters $\bm \theta$, which are unobservable and model-specific, the predictive variables $\bm{\tilde y}$ are observable and directly comparable across distinct Bayesian models. A decision analysis based on $\bm{\tilde y}$ operates on the same scale and in the same units as the data that have been---and will be---observed, which improves interpretability. Perhaps most important, an observation-driven decision analysis enables empirical evaluation of  the selected actions.

\subsection{Subset search for linear prediction}\label{sec-linear}
The subset search procedure is built within the decision framework of \eqref{action}. For any Bayesian model $\mathcal{M}$, we consider \emph{linear} actions  $\bm{\delta}(\bm{ x}) = \bm{ x}'\bm \delta$ with $\bm \delta \in \mathbb{R}^p$, which offer both interpretability and the capacity for selection. Let $\bm \delta_\mathcal{S}$ denote the linear action with zero coefficients for all $j \not \in \mathcal{S}$, where  $\mathcal{S} \subseteq \{1,\ldots,p\}$ is a subset of active variables. For prediction, we assemble the aggregate and weighted squared error loss
\begin{equation}\label{sq-loss}
\mathcal{L}(\{{\tilde y}_i\}_{i=1}^{\tilde n}, \bm \delta_\mathcal{S}) = \sum_{i=1}^{\tilde n} \omega(\bm{\tilde x}_i) \Vert {\tilde y}_i - \bm{\tilde x}_i'\bm \delta_\mathcal{S}\Vert_2^2 
\end{equation}
where ${\tilde y}_i \coloneqq \bm{\tilde y}(\bm{\tilde x}_i)$ is the predictive variable at $\bm{\tilde x}_i$ for each $i=1,\ldots, \tilde n$ and $\omega(\bm{\tilde x}_i) > 0$ is a weighting function. The covariate values  $\{\bm{\tilde x}_i\}_{i=1}^{\tilde n}$ can be distinct from the observed covariate values $\{\bm{ x}_i\}_{i=1}^n$, for example to evaluate the action for a different  population.

The loss in \eqref{sq-loss} evaluates linear coefficients $\bm \delta_\mathcal{S}$ for any given subset $\mathcal{S}$ by accumulating the squared error loss over the covariate values $\{\bm{\tilde x}_i\}_{i=1}^{\tilde n}$. Since \eqref{sq-loss} depends on $\{\tilde y_i\}_{i=1}^{\tilde n}$, the loss inherits a joint predictive distribution $p_\mathcal{M}(\tilde y_1, \ldots, \tilde y_{\tilde n} | \bm y)$ from $\mathcal{M}$.  The loss is decoupled from the Bayesian model $\mathcal{M}$: the linear action does not require a linearity assumption for $\mathcal{M}$. The weights $\omega(\bm{\tilde x}_i)$ can be used to target actions locally, which provides a sparse and  local linear approximation to $\mathcal{M}$. For example, we  might parametrize the weighting function as $\omega(\bm{\tilde x}_i) \propto \exp(- \Vert \bm{\tilde x}_i - \bm{x}^* \Vert_2^2/\ell)$ with range parameter $\ell$ in order to weight based on proximity to some particular $\bm{x}^*$ of interest \citep{Ribeiro2016}. Alternatively, $\omega$ can be specified via a probability model for the likelihood of observing each covariate value, including $\omega(\bm{\tilde x}_i) = \tilde{n}^{-1}$ as a simple yet useful example, especially when using the observed covariate values $\{\bm{\tilde x}_i\}_{i=1}^{\tilde n} = \{\bm{ x}_i\}_{i=1}^n$; this is our default choice.

Crucially, the optimal action can be solved directly for any subset $\mathcal{S}$:
\begin{lemma}\label{sq-loss-action}
Suppose $ \mathbb{E}_{\bm{\tilde y} | \bm y} \Vert \bm{\tilde y}(\bm{\tilde x}_i) \Vert_2^2 < \infty$ for $i=1,\ldots, \tilde n$. The optimal action  \eqref{action} for the loss \eqref{sq-loss} is 
\begin{align}\label{sq-loss-action-1}
\bm{\hat\delta}_\mathcal{S} &= \arg\min_{{\bm \delta}_\mathcal{S}}  \sum_{i=1}^{\tilde n} \omega(\bm{\tilde x}_i)\Vert {\hat y}_i - \bm{\tilde x}_i'\bm \delta_\mathcal{S}\Vert_2^2 \\
\label{sq-loss-action-2}
&=( \bm{\tilde X}_\mathcal{S}'\bm \Omega \bm{\tilde X}_\mathcal{S})^{-1}\bm{\tilde X}_\mathcal{S}'\bm\Omega\bm{\hat y},
\end{align}
where ${\hat y}_i \coloneqq \mathbb{E}_{\bm{\tilde y} | \bm y} \{\bm{\tilde y}(\bm{\tilde x}_i)\}$, $\bm{\hat y} \coloneqq ({\hat y}_1,\ldots, {\hat y}_{\tilde n})'$, $\bm \Omega \coloneqq \mbox{diag}\{\omega(\bm{\tilde x}_i)\}_{i=1}^{\tilde n}$, and $\bm{\tilde X}_\mathcal{S}$ is the $\tilde n \times |\mathcal{S}|$ matrix of the active covariates in $\{\bm{\tilde x}_i\}_{i=1}^{\tilde n}$ for subset $\mathcal{S}$.
\end{lemma}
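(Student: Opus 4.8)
The plan is to remove the posterior predictive expectation from the optimization via a bias--variance decomposition, thereby reducing the stochastic problem \eqref{action} to a deterministic weighted least squares problem in the active coefficients, and then to solve that problem in closed form through its normal equations. The two displayed equalities \eqref{sq-loss-action-1} and \eqref{sq-loss-action-2} correspond exactly to these two stages.

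First I would expand the expected loss. Each $\tilde y_i \coloneqq \bm{\tilde y}(\bm{\tilde x}_i)$ is scalar and the sum is finite, so I can interchange expectation and summation to get $\mathbb{E}_{\bm{\tilde y}|\bm y}\mathcal{L} = \sum_{i=1}^{\tilde n}\omega(\bm{\tilde x}_i)\,\mathbb{E}_{\bm{\tilde y}|\bm y}(\tilde y_i - \bm{\tilde x}_i'\bm\delta_\mathcal{S})^2$; the hypothesis $\mathbb{E}_{\bm{\tilde y}|\bm y}\Vert\bm{\tilde y}(\bm{\tilde x}_i)\Vert_2^2 < \infty$ ensures each $\hat y_i$ and each second moment is finite, so every manipulation is valid. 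The key step is to add and subtract $\hat y_i$ inside each square. Writing $\tilde y_i - \bm{\tilde x}_i'\bm\delta_\mathcal{S} = (\tilde y_i - \hat y_i) + (\hat y_i - \bm{\tilde x}_i'\bm\delta_\mathcal{S})$ and squaring, the cross term has expectation zero because $\hat y_i - \bm{\tilde x}_i'\bm\delta_\mathcal{S}$ is nonrandom and $\mathbb{E}_{\bm{\tilde y}|\bm y}(\tilde y_i - \hat y_i) = 0$, yielding
$$\mathbb{E}_{\bm{\tilde y}|\bm y}(\tilde y_i - \bm{\tilde x}_i'\bm\delta_\mathcal{S})^2 = (\hat y_i - \bm{\tilde x}_i'\bm\delta_\mathcal{S})^2 + \mathbb{E}_{\bm{\tilde y}|\bm y}(\tilde y_i - \hat y_i)^2.$$
The second summand is the posterior predictive variance of $\tilde y_i$ and is free of $\bm\delta_\mathcal{S}$. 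Summing against the positive weights shows $\mathbb{E}_{\bm{\tilde y}|\bm y}\mathcal{L}$ equals $\sum_i \omega(\bm{\tilde x}_i)(\hat y_i - \bm{\tilde x}_i'\bm\delta_\mathcal{S})^2$ plus a $\bm\delta_\mathcal{S}$-independent constant, so the two minimizers coincide, establishing \eqref{sq-loss-action-1}.

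Finally I would solve the deterministic problem. Because $\bm\delta_\mathcal{S}$ is constrained to vanish off $\mathcal{S}$, only its $|\mathcal{S}|$ active entries $\bm\gamma$ are free and the fitted values $\bm{\tilde x}_i'\bm\delta_\mathcal{S}$ depend only on the active columns assembled in $\bm{\tilde X}_\mathcal{S}$. In matrix form the objective becomes $(\bm{\hat y} - \bm{\tilde X}_\mathcal{S}\bm\gamma)'\bm\Omega(\bm{\hat y} - \bm{\tilde X}_\mathcal{S}\bm\gamma)$, a convex quadratic that is strictly convex whenever $\bm{\tilde X}_\mathcal{S}$ has full column rank, since then $\bm{\tilde X}_\mathcal{S}'\bm\Omega\bm{\tilde X}_\mathcal{S}$ is positive definite (as $\bm\Omega$ is a positive diagonal matrix). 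Setting the gradient to zero gives the weighted normal equations $\bm{\tilde X}_\mathcal{S}'\bm\Omega\bm{\tilde X}_\mathcal{S}\,\bm\gamma = \bm{\tilde X}_\mathcal{S}'\bm\Omega\bm{\hat y}$, whose unique solution is \eqref{sq-loss-action-2}.

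I do not anticipate a serious obstacle; the only point requiring genuine care is the interchange of expectation and minimization, which the decomposition handles cleanly by isolating the variance term that does not depend on the action. The remaining subtlety is the implicit full-rank assumption on $\bm{\tilde X}_\mathcal{S}$ needed for the stated inverse to exist: without it the minimizer is nonunique, and one would instead report any solution of the normal equations, e.g.\ through the Moore--Penrose pseudoinverse.
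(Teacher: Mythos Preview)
Your proposal is correct and follows essentially the same argument as the paper: add and subtract $\hat y_i$ inside each squared term so that the expected loss decomposes into a $\bm\delta_\mathcal{S}$-independent predictive variance plus a deterministic weighted least squares objective, then solve the latter via the normal equations. The paper likewise notes (immediately after the proof) that the inverse in \eqref{sq-loss-action-2} may be replaced by a generalized inverse when $\bm{\tilde X}_\mathcal{S}'\bm\Omega\bm{\tilde X}_\mathcal{S}$ is singular, matching your closing remark.
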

\begin{proof}
It is sufficient to observe that $\mathbb{E}_{\bm{\tilde y} | \bm y} \Vert \bm{\tilde y}(\bm{\tilde x}_i) - \bm{\tilde x}_i'\bm \delta_\mathcal{S}\Vert_2^2 = \mathbb{E}_{\bm{\tilde y} | \bm y} \Vert \{\bm{\tilde y}(\bm{\tilde x}_i) - \hat y_i\} + (\hat y_i - \bm{\tilde x}_i'\bm \delta_\mathcal{S} )\Vert_2^2 = \mathbb{E}_{\bm{\tilde y} | \bm y} \Vert \bm{\tilde y}(\bm{\tilde x}_i) - \hat y_i  \Vert_2^2 + \Vert\hat y_i - \bm{\tilde x}_i'\bm \delta_\mathcal{S} \Vert_2^2$, where the first term is a (finite) constant that does not depend on $\bm{\delta}_\mathcal{S}$. The remaining steps constitute a weighted least squares solution. 
\end{proof}
The consequence of Lemma~\ref{sq-loss-action} is that the optimal action for each subset $\mathcal{S}$ is simply the (weighted) least squares solution based on pseudo-data $(\bm{\tilde X}_\mathcal{S}, \bm{\hat y})$---i.e., a ``fit to the fit" from $\mathcal{M}$. The advantages of $\mathcal{M}$ can be substantial: the Bayesian model propagates regularization (e.g., shrinkage, sparsity, or smoothness) to the point predictions $\bm{\hat y}$, which typically offers sizable improvements in estimation and prediction relative ordinary (weighted) least squares. This effect is especially pronounced in the presence of high-dimensional (large $p$) or correlated covariates. The optimal action may be non-unique if $\bm{\tilde X}_\mathcal{S}'\bm \Omega \bm{\tilde X}_\mathcal{S}$ is noninvertible, in which case the inverse in \eqref{sq-loss-action-2} can be replaced by a generalized inverse. 

At this stage, the Bayesian model $\mathcal{M}$ is only needed to supply the pseudo-response variable ${\hat y}_i$; different choices of $\mathcal{M}$ will result in distinct values of  ${\hat y}_i$ and therefore distinct actions $\bm{\hat\delta}_\mathcal{S}$. An illuminating special case occurs for  linear regression:
\begin{corollary}\label{cor-linear}
For the linear regression model $\mathcal{M}$ with $\mathbb{E}_{\bm{\tilde y} | \bm \theta} \{\bm{\tilde y}(\bm{ x})\} = \bm{ x}'\bm\beta$ and \emph{any} set of covariate values $\{\bm{\tilde x}_i\}_{i=1}^{\tilde n}$ and weights  $\omega(\bm{\tilde x}_i) > 0$, the optimal action \eqref{action} under \eqref{sq-loss}  for the full set of covariates  is $\bm{\hat \delta}_{\{1,\ldots,p\}} = \bm{\hat \beta}$, where $ \bm{\hat \beta}\coloneqq \mathbb{E}_{\bm \theta | \bm y} \bm \beta $.
\end{corollary}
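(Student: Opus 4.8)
The plan is to reduce the corollary to Lemma~\ref{sq-loss-action} and then exploit the fact that, under a linear model, the pseudo-response $\bm{\hat y}$ lies \emph{exactly} in the column space of the design matrix $\bm{\tilde X}$, so that the weighted least squares fit returns its own coefficient vector.

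First I would compute the pseudo-response $\hat y_i = \mathbb{E}_{\bm{\tilde y} | \bm y}\{\bm{\tilde y}(\bm{\tilde x}_i)\}$ explicitly under $\mathcal{M}$. Applying the tower property over the posterior of $\bm\theta$ and invoking the conditional independence of future data given the parameters, $\mathbb{E}_{\bm{\tilde y} | \bm\theta, \bm y}\{\bm{\tilde y}(\bm{\tilde x}_i)\} = \mathbb{E}_{\bm{\tilde y} | \bm\theta}\{\bm{\tilde y}(\bm{\tilde x}_i)\} = \bm{\tilde x}_i'\bm\beta$ by the stated model assumption. Linearity of expectation then gives $\hat y_i = \bm{\tilde x}_i'\,\mathbb{E}_{\bm\theta | \bm y}\bm\beta = \bm{\tilde x}_i'\bm{\hat\beta}$, or in stacked form $\bm{\hat y} = \bm{\tilde X}\bm{\hat\beta}$.

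Given this identity there are two equivalent routes to finish. The direct route observes that substituting $\bm\delta_\mathcal{S} = \bm{\hat\beta}$ with $\mathcal{S} = \{1,\ldots,p\}$ into the surrogate objective \eqref{sq-loss-action-1} makes every residual $\hat y_i - \bm{\tilde x}_i'\bm{\hat\beta}$ vanish, so the nonnegative objective attains its global minimum of zero; hence $\bm{\hat\beta}$ is optimal. The matrix route plugs $\bm{\hat y} = \bm{\tilde X}\bm{\hat\beta}$ into the closed form \eqref{sq-loss-action-2} with $\mathcal{S} = \{1,\ldots,p\}$ and cancels $(\bm{\tilde X}'\bm\Omega\bm{\tilde X})^{-1}\bm{\tilde X}'\bm\Omega\bm{\tilde X} = \bm I$.

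The proof itself is short, so the substance lies in flagging two subtleties rather than overcoming a hard obstacle. The first is the conditional-independence assumption used in the tower-property step, which is what lets the posterior predictive mean equal the linear predictor evaluated at the posterior mean $\bm{\hat\beta}$. The second, and the point worth emphasizing, is that the conclusion is entirely independent of the choice of $\{\bm{\tilde x}_i\}_{i=1}^{\tilde n}$ and of the weights $\omega$: because $\bm{\hat y}$ already lies in the span of $\bm{\tilde X}$, the weighted projection recovers $\bm{\hat\beta}$ regardless of design or weighting. I would also note that when $\bm{\tilde X}'\bm\Omega\bm{\tilde X}$ is singular one should argue via the zero-loss characterization (the direct route), since $\bm{\hat\beta}$ remains a minimizer even though it need not be the unique generalized-inverse solution.
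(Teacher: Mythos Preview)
Your proposal is correct and follows exactly the route the paper intends: the corollary is stated without an explicit proof, but the sentence immediately following it notes that a linear model for $\mathcal{M}$ implies $\hat y_i = \bm{\tilde x}_i'\bm{\hat\beta}$, which is precisely your tower-property computation, and the rest is a direct application of Lemma~\ref{sq-loss-action}. Your remark on non-uniqueness via the zero-loss characterization also aligns with the paper's caveat that $\bm{\hat\delta}_{\{1,\ldots,p\}}$ may be non-unique depending on $\{\bm{\tilde x}_i\}$ and $\omega$.
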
 
Depending on the choice of $\{\bm{\tilde x}_i\}_{i=1}^{\tilde n}$ and $\omega$, $\bm{\hat \delta}_{\{1,\ldots,p\}}$ may be non-unique. Corollary~\ref{cor-linear} links the optimal action to the model parameters:  the posterior expectation $ \bm{\hat \beta} $ is also the optimal action under the parameter-driven squared error loss $\mathcal{L}(\bm\beta, \bm \delta) = \Vert \bm \beta - \bm \delta \Vert_2^2$. Similarly, a linear model for $\mathcal{M}$ implies that ${\hat y}_i = \bm{\tilde x}_i'\bm{\hat \beta}$, so the optimal action \eqref{sq-loss-action-2} for any subset $\mathcal{S}$ is intrinsically connected to the (regression) model parameters. This persists for other regression models as well. 
By contrast, these restrictions also illustrate the generality of \eqref{sq-loss}-\eqref{sq-loss-action-2}: the optimal linear actions are derived explicitly 
 under any model $\mathcal{M}$ (with $ \mathbb{E}_{\bm{\tilde y} | \bm y} \Vert \bm{\tilde y}(\bm{\tilde x}_i) \Vert_2^2 < \infty$) and using any set of covariate values $\{\bm{\tilde x}_i\}_{i=1}^{\tilde n}$, active covariates $\mathcal{S} \subseteq \{1,\ldots, p\}$, and weighting functions $\omega(\bm{\tilde x}) > 0$.

The critical remaining challenge is optimization---or at least evaluation and comparison---among the possible subsets $\mathcal{S}$. Our strategy emerges from the observation that there may be many subsets that achieve near-optimal predictive performance, often referred to as the \emph{Rashomon effect}  \citep{breiman2001statistical}. The goal is to collect, characterize, and compare these near-optimal subsets of linear predictors.  Hence, there are two core tasks: (i) identify candidate subsets and (ii) filter to include only those subsets that achieve near-optimal  predictive performance. These tasks must overcome both computational and methodological challenges---similar to classical (non-Bayesian) subset selection---which we resolve in the subsequent sections.

An exhaustive enumeration of all possible subsets presents an enormous computational burden, even for moderate $p$. Although tempting, it is misguided to consider direct optimization over all possible subsets of $\{1,\ldots,p\}$,
\begin{equation}\label{opt-subset}
\bm{\hat \delta}_{\mathcal{\widehat S}} \coloneqq \arg\min_{\mathcal{S}, \bm{\delta}_\mathcal{S}} \mathbb{E}_{\bm{\tilde y} | \bm y} \mathcal{L}(\{{\tilde y}_i\}_{i=1}^{\tilde n}, \bm \delta_\mathcal{S}),
\end{equation}
for the aggregate squared error loss \eqref{sq-loss}. 
To see this---and find suitable alternatives---consider the following result:
\begin{lemma}\label{rss-ord-lem}
Let $\mbox{RSS}(\bm y, \bm \mu) \coloneqq \Vert \bm y - \bm \mu \Vert_2^2$ and $\bm{\hat y} \coloneqq \mathbb{E}_{\bm{\tilde y} | \bm y} \bm{\tilde y}$, and suppose $ \mathbb{E}_{\bm{\tilde y} | \bm y} \Vert \bm{\tilde y}\Vert_2^2 < \infty$. For any point predictors $\bm \mu_1$ and $\bm \mu_2$, we have 
\begin{equation} \label{rss-ord}
\mathbb{E}_{\bm{\tilde y} | \bm y} \{\mbox{RSS}(\bm{\tilde y}, \bm \mu_1)\} \le \mathbb{E}_{\bm{\tilde y} | \bm y} \{\mbox{RSS}(\bm{\tilde y}, \bm \mu_2)\}  \iff \mbox{RSS}(\bm{\hat y}, \bm \mu_1) \le \mbox{RSS}(\bm{\hat y}, \bm \mu_2 ).
\end{equation}
\end{lemma}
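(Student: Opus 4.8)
The plan is to prove both directions of the equivalence simultaneously through a single bias--variance-type decomposition, exactly analogous to the one already used in the proof of Lemma~\ref{sq-loss-action}, but now applied to the aggregate residual sum of squares. The central observation is that for any fixed point predictor $\bm\mu$, the expected RSS under the predictive distribution differs from the RSS evaluated at the predictive mean $\bm{\hat y}$ only by an additive constant that does not depend on $\bm\mu$.

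First, I would add and subtract $\bm{\hat y}$ inside the norm and expand the square:
\[
\mbox{RSS}(\bm{\tilde y}, \bm\mu) = \Vert (\bm{\tilde y} - \bm{\hat y}) + (\bm{\hat y} - \bm\mu) \Vert_2^2 = \Vert \bm{\tilde y} - \bm{\hat y} \Vert_2^2 + 2(\bm{\tilde y} - \bm{\hat y})'(\bm{\hat y} - \bm\mu) + \Vert \bm{\hat y} - \bm\mu \Vert_2^2.
\]
Next, taking $\mathbb{E}_{\bm{\tilde y} | \bm y}$ of both sides, the cross term vanishes because $\mathbb{E}_{\bm{\tilde y} | \bm y}(\bm{\tilde y} - \bm{\hat y}) = \bm{0}$ by the very definition $\bm{\hat y} = \mathbb{E}_{\bm{\tilde y} | \bm y}\bm{\tilde y}$, while $\bm{\hat y} - \bm\mu$ is nonrandom. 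This yields
\[
\mathbb{E}_{\bm{\tilde y} | \bm y}\{\mbox{RSS}(\bm{\tilde y}, \bm\mu)\} = C + \mbox{RSS}(\bm{\hat y}, \bm\mu), \qquad C \coloneqq \mathbb{E}_{\bm{\tilde y} | \bm y}\Vert \bm{\tilde y} - \bm{\hat y}\Vert_2^2,
\]
where the moment hypothesis $\mathbb{E}_{\bm{\tilde y} | \bm y}\Vert\bm{\tilde y}\Vert_2^2 < \infty$ guarantees that $C$ is finite (and that every expectation above is well defined), and crucially $C$ is independent of $\bm\mu$.

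Finally, applying this identity separately to $\bm\mu_1$ and $\bm\mu_2$ and subtracting, the shared constant $C$ cancels; hence $\mathbb{E}_{\bm{\tilde y} | \bm y}\{\mbox{RSS}(\bm{\tilde y}, \bm\mu_1)\} \le \mathbb{E}_{\bm{\tilde y} | \bm y}\{\mbox{RSS}(\bm{\tilde y}, \bm\mu_2)\}$ holds if and only if $\mbox{RSS}(\bm{\hat y}, \bm\mu_1) \le \mbox{RSS}(\bm{\hat y}, \bm\mu_2)$, which is exactly the claimed equivalence. I do not anticipate any real obstacle here: once the decomposition is in hand, the result is an immediate consequence of shifting both sides by a common finite constant. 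The only point that warrants care is confirming that $C$ is finite so that the expected losses are not $+\infty$ and the cancellation is legitimate; this is precisely the role of the stated square-integrability assumption. The conceptual payoff worth emphasizing is that the left-hand (out-of-sample, predictive-expectation) ordering can be assessed purely through the right-hand in-sample quantity $\mbox{RSS}(\bm{\hat y}, \cdot)$, requiring only the single posterior functional $\bm{\hat y}$ rather than the full predictive distribution.
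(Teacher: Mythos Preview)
Your proof is correct and essentially identical to the paper's: both establish that $\mathbb{E}_{\bm{\tilde y}|\bm y}\{\mbox{RSS}(\bm{\tilde y},\bm\mu)\} - \mbox{RSS}(\bm{\hat y},\bm\mu)$ is a finite constant independent of $\bm\mu$, from which the equivalence follows immediately. The paper expresses this constant as $\mathbb{E}_{\bm{\tilde y}|\bm y}\Vert\bm{\tilde y}\Vert_2^2 - \Vert\bm{\hat y}\Vert_2^2$ while you write it as $\mathbb{E}_{\bm{\tilde y}|\bm y}\Vert\bm{\tilde y}-\bm{\hat y}\Vert_2^2$, which are of course equal.
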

\begin{proof}
Since $\mathbb{E}_{\bm{\tilde y} | \bm y} \{\mbox{RSS}(\bm{\tilde y}, \bm \mu)\} -  \mbox{RSS}(\bm{\hat y}, \bm \mu) = \mathbb{E}_{\bm{\tilde y} | \bm y}  \Vert \bm{\tilde y}\Vert_2^2 - \Vert \bm{\hat y} \Vert_2^2$ is finite and does not depend on $\bm \mu$, the ordering of $\bm \mu_1$ and $\bm \mu_2$ will be identical whether using  $\mathbb{E}_{\bm{\tilde y} | \bm y} \{\mbox{RSS}(\bm{\tilde y}, \bm \mu)\}$ or $\mbox{RSS}(\bm{\hat y}, \bm \mu)$.
\end{proof}
Notably, $\mathbb{E}_{\bm{\tilde y} | \bm y} \{\mbox{RSS}(\bm{\tilde y}, \bm \mu)\}$ is the key constituent in optimizing the predictive squared error loss \eqref{sq-loss}, while $\mbox{RSS}(\bm{\hat y}, \bm \mu)$ is simply the usual residual sum-of-squares (RSS) with $\bm{\hat y}$ replacing $\bm y$.

Now recall the optimization of \eqref{opt-subset}. For any subset $\mathcal{S}$, the optimal action is the least squares solution \eqref{sq-loss-action-2} with pseudo-data $\bm{\hat y}$. However, RSS in linear regression is ordered by nested subsets:  $\mbox{RSS}(\bm{\hat y}, \bm{\tilde X} \bm{\hat \delta}_{\mathcal{S}_1}) \le \mbox{RSS}(\bm{\hat y}, \bm{\tilde X} \bm{\hat \delta}_{\mathcal{S}_2})$ whenever $\mathcal{S}_2 \subseteq \mathcal{S}_1$. By Lemma~\ref{rss-ord-lem}, it follows that the solution of \eqref{opt-subset} is simply 
$$
\bm{\hat \delta}_{\mathcal{\widehat S}} = ( \bm{\tilde X}'\bm \Omega \bm{\tilde X})^{-1}\bm{\tilde X}'\bm\Omega\bm{\hat y}, \quad \mathcal{\widehat S} = \{1,\ldots,p\}
$$ 
for $\bm{\tilde X} = (\bm{\tilde x}_1,\ldots,\bm{\tilde x}_{\tilde n})'$. As with \eqref{sq-loss-action-2}, a generalized inverse can be substituted if necessary. 
The main consequence is that the optimal actions in \eqref{sq-loss-action-1} and \eqref{opt-subset} alone cannot select variables or subsets: \eqref{sq-loss-action-1}  provides the optimal action for a given subset $\mathcal{S}$, while \eqref{opt-subset} trivially returns the full set of covariates.  Despite the posterior predictive expectation in \eqref{opt-subset}, this optimality is only valid in-sample and is unlikely to persist for out-of-sample prediction. Hence, this optimality is unsatisfying.

Yet Lemma~\ref{rss-ord-lem} provides a path forward. Rather than fixing a subset $\mathcal{S}$ in  \eqref{sq-loss-action-1} or optimizing over all subsets in \eqref{opt-subset}, suppose we compare among subsets of a fixed size $k < p$. Equivalently, this constraint can be representation as an $\ell_0$-penalty augmentation to the loss function \eqref{sq-loss}, i.e., 
\begin{equation}\label{opt-subset-k}
\bm{\hat \delta}_k \coloneqq \arg\min_{\{\mathcal{S}:\vert\mathcal{S}\vert \le k\}, \bm{\delta}_\mathcal{S}} \mathbb{E}_{\bm{\tilde y} | \bm y} \mathcal{L}(\{{\tilde y}_i\}_{i=1}^{\tilde n}, \bm \delta_\mathcal{S}),
\end{equation}
where the inequality follows from Lemma~\ref{rss-ord-lem}. In direct contrast with previous approaches for Bayesian variable selection via decision analysis (e.g., \citealp{hahn2015decoupling,woody2019model,KowalPRIME2020}) we do \emph{not}  use convex relaxations to $\ell_1$-penalties, which create unnecessarily restrictive search paths and introduce additional bias in the coefficient estimates. 

Under the loss \eqref{sq-loss}, the solution to \eqref{opt-subset-k} reduces to  
\begin{equation}\label{solve-subset-k}
\bm{\hat \delta}_k =  \arg\min_{\{\mathcal{S}:\vert\mathcal{S}\vert \le k\}} \sum_{i=1}^{\tilde n} \omega(\bm{\tilde x}_i)\Vert {\hat y}_i - \bm{\tilde x}_i'\bm \delta_\mathcal{S}\Vert_2^2 
\end{equation}
using the same argument as Lemma~\ref{sq-loss-action}. In particular, the solution $\bm{\hat \delta}_k$ resembles classical subset selection \eqref{classic}, but uses the fitted values $\hat y_i$ from $\mathcal{M}$ instead of the data $\bm y$ and further generalizes to include possibly distinct covariates $\{\bm{\tilde x}_i\}$ and weights $\{\omega(\bm{\tilde x}_i)\}$. 

Because of the representation in \eqref{solve-subset-k}, we can effectively solve \eqref{opt-subset-k} by leveraging existing algorithms for subset selection. However, our broader interest in the collection of near-optimal subsets places greater emphasis on the search and filtering process. For any two subsets $\mathcal{S}_1$ and $\mathcal{S}_2$ of equal size $k = |\mathcal{S}_1| = |\mathcal{S}_2|$, Lemma~\ref{rss-ord-lem} implies that  $\mathbb{E}_{\bm{\tilde y} | \bm y} \{\mbox{RSS}(\bm{\tilde y}, \bm{\tilde X} \bm\delta_{\mathcal{S}_1})\}  \le \mathbb{E}_{\bm{\tilde y} | \bm y} \{\mbox{RSS}(\bm{\tilde y}, \bm{\tilde X} \bm\delta_{\mathcal{S}_2})\}$ if and only if $\mbox{RSS}(\bm{\hat y}, \bm{\tilde X} \bm\delta_{\mathcal{S}_1}) \le \mbox{RSS}(\bm{\hat y}, \bm{\tilde X} \bm\delta_{\mathcal{S}_2})$. Therefore, we can order the linear actions from \eqref{sq-loss-action-1} among all equally-sized subsets simply by ordering the values of $\mbox{RSS}(\bm{\hat y}, \bm{\tilde X} \bm\delta_{\mathcal{S}})$. This result resembles the analogous scenario in classical linear regression on $\{(\bm x_i, y_i)\}_{i=1}^n$:  subsets of fixed size maintain the same ordering whether using RSS or information criteria such as AIC, BIC, or Mallow's $C_p$. Here, the criterion of interest is the posterior predictive expected RSS, $\mathbb{E}_{\bm{\tilde y} | \bm y} \{\mbox{RSS}(\bm{\tilde y}, \bm{\tilde X} \bm\delta_{\mathcal{S}})\}$, and the RSS reduction occurs with the model $\mathcal{M}$ fitted values $\hat y_i = \mathbb{E}_{\bm{\tilde y} | \bm y} \bm{\tilde y}(\bm{\tilde x}_i)$ serving as pseudo-data. 

Because of this RSS-based ordering among equally-sized linear actions, we can leverage the computational advantages of the \emph{branch-and-bound algorithm} (BBA)  for efficient subset exploration \citep{Furnival2000}.  Using a tree-based enumeration of all possible subsets, BBA avoids an exhaustive subset search by carefully eliminating non-competitive subsets (or branches) according to RSS. BBA is particularly advantageous for  %
(i) selecting $s_{max} \le p$ covariates, 
(ii) filtering to $m_k \le {p \choose k}$ subsets for each size $k$, and 
(iii) exploiting the presence of  covariates that are almost always present for low-RSS models \citep{Miller1984}. 
In the present setting, the key inputs to the algorithm are the covariates $\{\bm{\tilde x}_i\}$, the pseudo-data $\{\hat y_i\}$, and the weights $\{\omega(\bm{\tilde x}_i)\}$. In addition, we specify the maximum number of subsets $m_k \le {p \choose k}$ to return for each size $k$. As $m_k$ increases, BBA returns more subsets---with higher RSS---yet computational efficiency deteriorates. We consider default values of $m_k = 100$ and $m_k = 15$ and compare the results in Section~\ref{app}. An efficient implementation of BBA is available in the  \texttt{leaps} package in \texttt{R}. Note that our framework is also compatible with many other subset search algorithms (e.g., \citealp{Bertsimas2016}).


In the case of moderate to large $p$, we screen to $s_{max} \le p$ covariates using the original model $\mathcal{M}$. Specifically, we select the $s_{max}$ covariates with the largest absolute (standardized) linear regression coefficients. When $\mathcal{M}$ is a nonlinear model, we use the optimal linear coefficients $\bm{\hat \delta}_\mathcal{S}$ on the full subset $\mathcal{S} = \{1,\ldots,p\}$ of (standardized) covariates. The use of marginal screening is common in both frequentist \citep{Fan2008} and Bayesian \citep{bondell2012consistent} high-dimensional linear regression models, with accompanying consistency results in each case. Here, sceening is motivated by {computational scalability} and {interpretability}: BBA is quite efficient for $p \le 35$ and $m_k \le 100$, while the interpretation of a subset of covariates---acting jointly to predict accurately---is muddled as the subset size increases. By default, we fix $s_{max} = 35$.  We emphasize that although this screening procedure relies on marginal criteria, it is based on a joint model $\mathcal{M}$ that incorporates all $p$ covariates. In that sense, our \emph{screening} procedure resembles the most popular Bayesian variable \emph{selection} strategies based on posterior inclusion probabilities  \citep{barbieri2004optimal} or hard-thresholding  \citep{datta2013asymptotic}. 



\subsection{Subset search for logistic classifiers}\label{sec-logit}
Classification and binary regression operate on $\{0,1\}$, rendering the squared error loss  \eqref{sq-loss} unsuitable. Consider a binary predictive functional $h({\tilde y}) \in \{0,1\}$ where ${\tilde y} \sim p_\mathcal{M}({\tilde y} | \bm y )$. In this framework, binarization can come from one of two sources. Most common, the data are binary $y_i \in \{0,1\}$,  paired with the identity functional $h({\tilde y}) = \tilde y$, and $\mathcal{M}$ is a Bayesian classification (e.g., probit or logistic regression) model. Less common, non-binary data can be modeled  via $\mathcal{M}$ and paired with a functional $h$ that maps to  $\{0,1\}$. For example, we may be interested in selecting variables to predict exceedance of a threshold, $h(\tilde y) = \mathbb{I}\{\tilde y \ge \tau\}$, for some $\tau$ based on real-valued data $\bm y$. The latter case is an example of \emph{targeted prediction} \citep{Kowal2020target}, which customizes posterior summaries or decisions for any functional $h$. This approach is distinct from fitting separate models to each empirical functional $\{h(y_i)\}_{i=1}^n$---which is still compatible with the first setting above---and instead requires only a single Bayesian reference model $\mathcal{M}$ for all target functionals $h$. 

For classification or binary prediction of $h(\tilde y) \in \{0,1\}$, we replace the squared error loss \eqref{sq-loss} with the {aggregate} and {weighted} cross-entropy loss,
\begin{equation}\label{cross-ent}
\mathcal{L}[\{h({\tilde y}_i)\}_{i=1}^{\tilde n}, \bm \delta_\mathcal{S}] = \sum_{i=1}^{\tilde n}  \omega(\bm{\tilde x}_i)  \big[ h(\tilde y_i) \log \{\pi_\mathcal{S}(\bm{\tilde x}_i) \} + \{1 - h(\tilde y_i)\} \log\{1 - \pi_\mathcal{S}(\bm{\tilde x}_i) \}\big],
\end{equation}
where $h(\tilde y_i) \in \{0,1\}$ is the predictive variable at $\bm{\tilde x}_i$ under $\mathcal{M}$ and $\pi_\mathcal{S}(\bm{\tilde x}_i) \coloneqq \{1 + \exp(-\bm{\tilde x}_i' \bm \delta_\mathcal{S})\}^{-1}$. The cross-entropy is also the \emph{deviance} or negative log-likelihood of a series of independent Bernoulli random variables $h(\tilde y_i)$ each with success probability $\pi_\mathcal{S}(\bm{\tilde x}_i)$ for $i=1,\ldots, \tilde n$. However, \eqref{cross-ent} does not imply a distributional assumption for the decision analysis; all distributional assumptions are encapsulated within $\mathcal{M}$, including the posterior predictive distribution of  $h(\tilde y_i)$.  As before, $\bm \delta_\mathcal{S}$ is the linear action with zero coefficients for all $j \not \in \mathcal{S}$, where  $\mathcal{S} \subseteq \{1,\ldots,p\}$ is a subset of active variables. 

The optimal action \eqref{action} is obtained for each subset $\mathcal{S}$ by computing expectations with respect to the posterior predictive distribution under $\mathcal{M}$ and minimizing the ensuing quantity. As in the case of squared error loss, key simplifications are available: 
\begin{equation}\label{opt-logit}
\bm{\hat \delta}_\mathcal{S} = \arg\min_{\bm{\delta}_\mathcal{S}} 
\sum_{i=1}^{\tilde n}  \omega(\bm{\tilde x}_i)  \big[ \hat h_i \log \{\pi_\mathcal{S}(\bm{\tilde x}_i) \} + (1 - \hat h_i)\log\{1 - \pi_\mathcal{S}(\bm{\tilde x}_i) \}\big]
\end{equation}
where $\hat h_i  \coloneqq \mathbb{E}_{\bm{\tilde y} | \bm y} h( \tilde y_i)$ is the posterior predictive expectation of $h(\tilde y_i)$ under $\mathcal{M}$. The representation in  \eqref{opt-logit} is quite useful: it is the negative log-likelihood for a logistic regression model with pseudo-data $\{(\bm{\tilde x}_i, \hat h_i)\}_{i=1}^{\tilde n}$. Standard algorithms and software, such as iteratively reweighted least squares (IRLS) in the \texttt{glm} package in \texttt{R}, can be applied to solve \eqref{opt-logit} for any subset $\mathcal{S}$. 

Instead of fitting a logistic regression to the observed binary variables $h(y_i) \in \{0,1\}$, the optimal action under cross-entropy \eqref{cross-ent} fits to the posterior predictive probabilities $\hat h_i  = p_\mathcal{M}\{h(\tilde y_i)  = 1 | \bm y\} \in [0,1]$ under $\mathcal{M}$. For a well-specified model $\mathcal{M}$, these posterior probabilities $\hat h_i$ can be more informative than the binary empirical functionals $h(y_i)$: the former lie on a continuum between the endpoints zero and one. Furthermore, for non-degenerate models $\mathcal{M}$ with  $\hat h_i \in (0,1)$, the optimal action \eqref{opt-logit} resolves the issue of separability, which is a persistent challenge in classical logistic regression.

Despite the efficiency of IRLS for a fixed subset $\mathcal{S}$, the computational savings of BBA for subset search rely on the RSS from a linear model. As such, solving \eqref{opt-logit} for all or many subsets incurs a much greater computational cost. Yet IRLS is intrinsically linked with RSS. At convergence, IRLS obtains a weighted least squares solution 
\begin{equation}\label{irls-est}
\bm{\hat \delta}_\mathcal{S} = ( \bm{\tilde X}_\mathcal{S}'\bm{W} \bm{\tilde X}_\mathcal{S})^{-1}\bm{\tilde X}_\mathcal{S}'\bm{W} \bm{z}
\end{equation}
where $\bm{W}  \coloneqq \mbox{diag}\{w_i\}_{i=1}^{\tilde n}$ for weights $w_i \coloneqq \omega(\bm{\tilde x}_i) \hat \pi_\mathcal{S}(\bm{\tilde x}_i)\{1 - \hat \pi_\mathcal{S}(\bm{\tilde x}_i)\}$, fitted probabilities $\hat \pi_\mathcal{S}(\bm{\tilde x}_i) \coloneqq \{1 + \exp(-\bm{\tilde x}_i' \bm{\hat \delta}_\mathcal{S})\}^{-1}$, and pseudo-data 
\begin{equation}\label{data-0}
z_i \coloneqq  \log \frac{\hat \pi_\mathcal{S}(\bm{\tilde x}_i)}{1-\hat \pi_\mathcal{S}(\bm{\tilde x}_i)}+ \frac{\hat h_i - \hat \pi_\mathcal{S}(\bm{\tilde x}_i)}{\hat \pi_\mathcal{S}(\bm{\tilde x}_i)\{1 - \hat \pi_\mathcal{S}(\bm{\tilde x}_i)\}}
\end{equation}
with $\bm{z} \coloneqq (z_1,\ldots,  z_{\tilde n})'$. By design, the weighted least squares objective associated with \eqref{irls-est} is a second-order Taylor approximation to the predictive expected cross-entropy loss:
\begin{equation} \label{wirls} 
\mathbb{E}_{\bm{\tilde y} | \bm y}\mathcal{L}[\{h({\tilde y}_i)\}_{i=1}^{\tilde n}, \bm{\hat \delta}_\mathcal{S}] \approx \sum_{i=1}^{\tilde n} \omega(\bm{\tilde x}_i)  \frac{\{\hat h_i - \hat \pi_\mathcal{S}(\bm{\tilde x}_i)\}^2}{\hat \pi_\mathcal{S}(\bm{\tilde x}_i)\{1 - \hat \pi_\mathcal{S}(\bm{\tilde x}_i)\}} = 
\sum_{i=1}^{\tilde n} w_i (z_i - \bm{\tilde x}_i' \bm{\hat \delta}_\mathcal{S})^2.
\end{equation}

The weighted least squares approximation in \eqref{wirls} summons BBA for subset search. \cite{Hosmer1989} adopted this strategy for subset selection in classical logistic regression on $y_i\in\{0,1\}$. Here, both the goals and the optimization criterion are distinct: we are interested in curating a collection of near-optimal subsets---rather than selecting a single ``best" subset---and the weighted least squares objective \eqref{wirls} inherits the  fitted probabilities $\hat h_i$ from the Bayesian model $\mathcal{M}$ along with the weights $\omega(\bm{\tilde x}_i)$. 

Ideally, we might apply BBA directly based on the covariates $\{\bm{\tilde x}_i\}$, the pseudo-data $\{z_i\}$, and the weights $\{w_i\}$. However, both $z_i$ and $w_i$ depend on $ \hat \pi_\mathcal{S}(\bm{\tilde x}_i)$ and therefore are subset-specific. As a result, BBA cannot be applied without significant modifications. A suitable alternative is to construct subset-invariant psuedo-data and weights by replacing   $\hat \pi_\mathcal{S}(\bm{\tilde x}_i)$ with the corresponding estimate from the full model, $\hat h_i$. Specifically, let 
\begin{equation}\label{modify}
\hat z_i \coloneqq \log\{\hat h_i/(1-\hat h_i)\} , \quad \hat w_i \coloneqq \omega(\bm{\tilde x}_i) \hat h_i ( 1 - \hat h_i),
\end{equation}
both of which depend on $\mathcal{M}$ rather than the individual subsets $\mathcal{S}$. The pseudo-data $\hat z_i$ is defined similarly to $z_i$ in \eqref{data-0}, where the second term now cancels. Finally, BBA subset search can be applied using the covariates $\{\bm{\tilde x}_i\}$, the pseudo-data $\{\hat z_i\}$, and the weights $\{\hat w_i\}$. As for squared error, we restrict each subset size to $m_k = 100$ or $m_k = 15$ for all subsets of size $k=1,\ldots,p$. Despite the critical role of the weighted least squares approximation in \eqref{wirls} for subset search, all further evaluations and comparisons rely on the exact cross-entropy loss \eqref{cross-ent}.

\subsection{Predictive evaluations for identifying near-optimal subsets}\label{sec-accept}
The BBA subset search filters the $2^p$ possible subsets to the $m_k$ best subsets for each size $k=1,\ldots,s_{max}$ according to posterior predictive expected loss using the weighted squared error loss for prediction (Section~\ref{sec-linear}) or the cross-entropy loss for classification (Section~\ref{sec-logit}). However, as noted below Lemma~\ref{rss-ord-lem}, further comparisons based on these expected losses are trivial: the full set $\mathcal{S} = \{1,\ldots,p\}$ always obtains the minimum, which precludes variable selection. Selection of a single ``best" subset of each size $k$ invites additional difficulties: if multiple subsets perform similarly---which is common for correlated covariates---then selecting $m_k = 1$ subset will not be robust or stable against perturbations of the data. Equally important, restricting to $m_k = 1$ subset is blind to competing subsets that offer similar predictive performance yet may differ substantially in the composition of covariates; see Sections~\ref{sims}-\ref{app}. These challenges persist for both classical and Bayesian approaches.

We instead curate and explore a collection of near-optimal subsets.  The notion of ``near-optimal" derives from the \emph{acceptable family} of \cite{Kowal2020target}. Informally, the acceptable family uses out-of-sample predictive metrics to gather those predictors that match or nearly match the performance of the best out-of-sample predictor with nonnegligible posterior predictive probability under $\mathcal{M}$. The out-of-sample evaluation uses a modified $K$-fold cross-validation procedure. Let $\mathcal{I}_k \subset \{1,\ldots,n\}$ denote the $k$th validation set, where each data point appears in (at least) one validation set, $\cup_{k=1}^K \mathcal{I}_k = \{1,\ldots,n\}$. By default, we use $K=10$ validation sets that are equally-sized, mutually exclusive, and selected randomly from $\{1,\ldots,n\}$. Define an evaluative loss function $L(y, \bm{\tilde x}' \bm{\hat \delta}_{\mathcal{S}})$ for the optimal linear coefficients of subset $\mathcal{S}$, and let $\mathbb{S}$ denote the collection of subsets obtained from the BBA filtering process.  Typically, the evaluative loss $L$ is identical to the loss $\mathcal{L}$ used for optimization, but this restriction is not required. For each data split $k$ and each subset $\mathcal{S} \in \mathbb{S}$, the out-of-sample \emph{empirical} and \emph{predictive} losses are
\begin{equation}
\label{out-loss-k}
{L}_{\mathcal{S}}^{out}(k) \coloneqq  \frac{1}{|\mathcal{I}_k|} \sum_{i \in \mathcal{I}_k} L(y_i, \bm x_i' \bm{\hat \delta}_{\mathcal{S}}^{-\mathcal{I}_k}), \quad 
\widetilde{{L}}_{\mathcal{S}}^{out}(k)  \coloneqq  \frac{1}{|\mathcal{I}_k|} \sum_{i \in \mathcal{I}_k} L(\tilde y_i^{-\mathcal{I}_k}, \bm x_i'\bm{\hat \delta}_{\mathcal{S}}^{-\mathcal{I}_k})
\end{equation}
respectively, where $\bm{\hat \delta}_{\mathcal{S}}^{-\mathcal{I}_k} \coloneqq \arg\min_{\bm\delta_\mathcal{S}} \mathbb{E}_{\bm{\tilde y} | \bm y^{-\mathcal{I}_k}} \mathcal{L}(\{{\tilde y}_i\}_{i \not\in \mathcal{I}_k}, \bm \delta_\mathcal{S}) $ is estimated using only the training data  $\bm y^{-\mathcal{I}_k} \coloneqq \{ y_i\}_{i \not \in \mathcal{I}_k}$ and $\tilde y_i^{-\mathcal{I}_k} \sim p_{\mathcal{M}}({\tilde y}_i | \bm y^{-\mathcal{I}_k})$ is the posterior predictive distribution at $\bm x_i$ conditional only on the training data. Averaging across all data splits, we obtain ${L}_{\mathcal{S}}^{out} \coloneqq K^{-1} \sum_{k=1}^K {L}_{\mathcal{S}}^{out}(k)$ and $ \widetilde{{L}}_\mathcal{S}^{out}  \coloneqq K^{-1} \sum_{k=1}^K\widetilde{{L}}_\mathcal{S}^{out}(k).$

The distinction between the empirical loss ${L}_{\mathcal{S}}^{out}$ and the predictive loss $ \widetilde{{L}}_\mathcal{S}^{out}$ is important. The empirical loss is a point estimate of the risk under predictions from $\bm{\hat \delta}_\mathcal{S}$ based on the data $\bm y$. By comparison, the predictive loss provides a distribution of the out-of-sample loss based on the model $\mathcal{M}$. Both are valuable: ${L}_{\mathcal{S}}^{out}$ is entirely empirical and captures the classical notion of $K$-fold cross-validation, while $ \widetilde{{L}}_\mathcal{S}^{out}$ leverages the Bayesian model to propagate the uncertainty from the model-based data-generating process.

For any two subsets $\mathcal{S}_1$ and $\mathcal{S}_2$, consider the   percent increase in out-of-sample 
 predictive loss from $\bm{\hat \delta}_{\mathcal{S}_1}$ to $\bm{\hat \delta}_{\mathcal{S}_2}$:
\begin{equation}\label{Dtilde}
\widetilde{{D}}_{\mathcal{S}_1,\mathcal{S}_2}^{out} \coloneqq 100\times(\widetilde{{L}}_{\mathcal{S}_2}^{out} - \widetilde{{L}}_{\mathcal{S}_1}^{out})/\widetilde{{L}}_{\mathcal{S}_1}^{out}.
\end{equation}
 Since $\widetilde{{D}}_{\mathcal{S}_1,\mathcal{S}_2}^{out} $  inherits a predictive distribution under $\mathcal{M}$, we can leverage the accompanying uncertainty quantification to determine whether the predictive performances of $\bm{\hat \delta}_{\mathcal{S}_1}$ and $\bm{\hat \delta}_{\mathcal{S}_2}$ are sufficiently distinguishable. In particular, we are interested in comparisons to the best subset for out-of-sample prediction, 
 \begin{equation}\label{best}
\mathcal{S}_{min} \coloneqq \arg\min_{\mathcal{S} \in \mathbb{S}}  {L}_{\mathcal{S}}^{out},
\end{equation}
so that $\bm{\hat \delta}_{\mathcal{S}_{min}}$ is the optimal linear action associated with the subset $\mathcal{S}_{min}$ that minimizes the empirical $K$-fold cross-validated loss.  Unlike the RSS-based in-sample evaluations from Section~\ref{sec-linear}, the subset $\mathcal{S}_{min}$ can---and usually will---differ from the full set $\{1,\ldots,p\}$, which enables variable selection driven by out-of-sample predictive performance.

 Using  $ \mathcal{S}_{min} $ as an anchor, the acceptable family broadens to include near-optimal subsets:
 \begin{equation}\label{accept}
\mathbb{A}_{\eta, \varepsilon} \coloneqq \big\{ \mathcal{S} \in \mathbb{S}: \mathbb{P}_\mathcal{M}\big(\widetilde{{D}}_{\mathcal{S}_{min},\mathcal{S}}^{out}  < \eta \big) \ge \varepsilon \big\}, \quad \eta \ge 0, \varepsilon \in [0,1].
\end{equation}
 With \eqref{accept}, we collect all subsets $\mathcal{S}$ that perform within a {margin} $\eta \ge 0\%$ of the best subset, $\widetilde{{D}}_{\mathcal{S}_{min},\mathcal{S}}^{out}  < \eta$, with {probability} at least $\varepsilon \in [0,1]$. Equivalently, a subset $\mathcal{S}$ is acceptable   if and only if there exists a lower $(1- \varepsilon)$ posterior prediction interval for $\widetilde{{D}}_{\mathcal{S}_{min},\mathcal{S}}^{out}$ that includes $\eta$. Hence, \emph{unacceptable} subsets are those for which there is insufficient predictive probability (under $\mathcal{M}$) that the out-of-sample  accuracy of $\mathcal{S}$ is within a predetermined margin of the best subset. The acceptable family is nonempty, since $\mathcal{S}_{min} \in \mathbb{A}_{\eta, \varepsilon}$, and is expanded by increasing $\eta$ or decreasing $\varepsilon$. By default, we select $\eta = 0\%$ and $\varepsilon = 0.10$ and assess robustness in the simulation study (see also \citealp{Kowal2020target,KowalPRIME2020}).

Within the acceptable family, we isolate two subsets of particular interest: the best subset $\mathcal{S}_{min}$ from \eqref{best} and the smallest acceptable subset,
\begin{equation}\label{min-accept}
\mathcal{S}_{small} \coloneqq \arg\min_{\mathcal{S} \in \mathbb{A}_{\eta, \varepsilon}} |\mathcal{S}|.
\end{equation}
When $\mathcal{S}_{small}$ is nonunique, so that multiple subsets of the same minimal size are acceptable, we select from those subsets the one with the smallest empirical loss ${L}_{\mathcal{S}}^{out}$. By definition, $\mathcal{S}_{small}$ is the smallest set of covariates that satisfies the near-optimality condition in \eqref{accept}.
As noted  by \cite{hastie2009elements} and others, selection based on minimum cross-validated error, such as $\mathcal{S}_{min}$, often produces models or subsets that are more complex than needed for adequate predictive accuracy. The acceptable family $\mathbb{A}_{\eta, \varepsilon}$, and in particular $\mathcal{S}_{small}$, exploits this observation to provide alternative---and often much smaller---subsets of variables.

 To ease the computational burden, we adapt the importance sampling algorithm from  \cite{Kowal2020target}  to compute \eqref{accept} and all constituent quantities (see Appendix~\ref{sir-approx}). Crucially, this algorithm is based entirely on the in-sample posterior distribution under model $\mathcal{M}$, which avoids both (i) the intensive process of re-fitting $\mathcal{M}$ for each data split $k$ and (ii) data reuse that adversely affects downstream uncertainty quantification. Briefly, the algorithm uses the complete data posterior $p_\mathcal{M}(\bm \theta | \bm y)$ as a proposal for the training data posterior $p_\mathcal{M}(\bm \theta | \bm y^{-\mathcal{I}_k})$. The importance weights are then computed using the likelihood $p_\mathcal{M}(\bm y^{\mathcal{I}_k} | \bm \theta)$ under model $\mathcal{M}$.  Similar algorithms have been deployed for Bayesian model selection \citep{gelfand1992model}, evaluating prediction distributions \citep{vehtari2012survey}, and $\ell_1$-based Bayesian variable selection \citep{KowalPRIME2020}.

 In the case of new (out-of-sample) covariate values $\{\bm{\tilde x}_i\}_{i=1}^{\tilde n} \ne \{\bm{ x}_i\}_{i=1}^n$, the predictive loss may be defined without cross-validation: $ \widetilde{{L}}_\mathcal{S}^{out} \coloneqq  {\tilde n}^{-1} \sum_{i=1}^{\tilde n} L(\tilde y_i, \bm{\tilde x}_i'\bm{\hat \delta}_{\mathcal{S}})$, where ${\tilde y}_i \coloneqq \bm{\tilde y}(\bm{\tilde x}_i)$ is the predictive variable at $\bm{\tilde x}_i$ for each $i=1,\ldots, \tilde n$. The empirical loss is undefined without a corresponding observation of the response variable for each $\bm{\tilde x}_i$. Hence, we modify the acceptable family \eqref{accept} to instead reference the full subset $\mathcal{S} = \{1,\ldots,p\}$ in place of $\mathcal{S}_{min}$, which is no longer defined. When $\mathcal{M}$ is a linear model,  Corollary~\ref{cor-linear} implies that the corresponding reference is simply the posterior predictive expectation under $\mathcal{M}$.

\subsection{Co-variable importance}
 Although it is common to focus on a single subset for selection,  the  acceptable family $\mathbb{A}_{\eta, \varepsilon}$ provides a broad assortment of competing explanations: linear actions with distinct sets of covariates that all provide near-optimal predictive accuracy. Hence, we seek to further summarize $\mathbb{A}_{\eta, \varepsilon}$ beyond $\mathcal{S}_{min}$ and $\mathcal{S}_{small}$ to identify (i) which covariates appear in {any} acceptable subset, (ii) which covariates appear in {all} acceptable subsets, and  (iii) which covariates appear together in the same acceptable subsets. 

For covariates $j$ and $\ell$, the sample proportion of joint inclusion in $\mathbb{A}_{\eta, \varepsilon}$ achieves each of these goals: 
\begin{equation}\label{vi-co}
\mbox{VI}_{\rm incl}(j, \ell)  \coloneqq  |\mathbb{A}_{\eta, \varepsilon}|^{-1} \sum_{\mathcal{S} \in \mathbb{A}_{\eta, \varepsilon}} \mathbb{I}\{j,\ell \in \mathcal{S}\}
\end{equation}
and measures \emph{(co-) variable importance.}  Naturally, \eqref{vi-co} is generalizable to more than two covariates, but is particularly interesting for a single covariate: $\mbox{VI}_{\rm incl}(j) \coloneqq \mbox{VI}_{\rm incl}(j, j)$ is the proportion of acceptable subsets to which covariate $j$ belongs. When $\mbox{VI}_{\rm incl}(j) > 0$, covariate $j$ belongs to at least one acceptable subset. Such a result does not imply that covariate $j$ is necessary for accurate prediction, but rather that covariate $j$ is a component of at least one near-optimal linear subset. When $\mbox{VI}_{\rm incl}(j) = 1$, we refer to covariate $j$ as a \emph{keystone covariate}: it belongs to all acceptable subsets and therefore is deemed essential.  For $j\ne \ell$, $\mbox{VI}_{\rm incl}(j, \ell)$ highlights not only the covariates that co-occur, but also the covariates that rarely appear together. It is particularly informative to identify covariates $j$ and $\ell$ such that $\mbox{VI}_{\rm incl}(j)$ and $\mbox{VI}_{\rm incl}(\ell)$ are both large yet $\mbox{VI}_{\rm incl}(j, \ell)$ is small. In that case, covariates $j$ and $\ell$ are both important yet also redundant, as might be expected for highly correlated variables.

Although $\mbox{VI}_{\rm incl}$ is defined based on linear predictors for each subset $\mathcal{S}$, the variable importance metric applies broadly to (possibly nonlinear) Bayesian models $\mathcal{M}$ and a variety of evaluative loss functions $L$, and can be targeted locally via the weights $\omega(\bm{\tilde x}_i)$. The inclusion-based metric $\mbox{VI}_{\rm incl}$ can be extended to incorporate effect size, which is a more common strategy for variable importance. Related, \cite{Dong2019} aggregated model-specific variable importances across many ``good models".  Alternative approaches use leave-one-covariate-out predictive metrics (e.g., \citealp{Lei2018}), but are less appealing in the presence of correlated covariates.

\subsection{Posterior predictive uncertainty quantification}\label{sec-uq}
A persistent challenge in classical subset selection is the lack of accompanying uncertainty quantification. Given a subset $\mathcal{\hat S}$ selected using the data  $(\bm X, \bm y)$, familiar frequentist and Bayesian inferential procedures applied to $(\bm X_{\mathcal{\hat S}}, \bm y)$ are in general no longer valid. In particular, we cannot simply proceed as if only the selected covariates $\mathcal{\hat S}$ were supplied from the onset. Such analyses are subject to selection bias \citep{Miller1984}.  

A crucial feature of our subset filtering (Section~\ref{sec-linear}~and~Section~\ref{sec-logit}) and predictive evaluation (Section~\ref{sec-accept}) techniques is that, despite the broad searching and the out-of-sample targets, these quantities all remain \emph{in-sample posterior functionals} under $\mathcal{M}$. There is no data re-use or model re-fitting: every requisite term is a functional of the complete data posterior $p_\mathcal{M}(\bm \theta | \bm y)$ or $p_\mathcal{M}(\bm{\tilde y} | \bm y)$ from a single Bayesian model. Hence, the posterior distribution under $\mathcal{M}$ remains a valid facilitator of uncertainty quantification. 

We elicit a posterior predictive distribution for the action by removing the expectation in \eqref{action}:
\begin{equation}\label{p-action}
\bm{\tilde \delta} \coloneqq \arg\min_{\bm\delta} \mathcal{L}(\bm{\tilde y}, \bm \delta)
\end{equation}
which no longer integrates over $p_\mathcal{M}(\bm{\tilde y} | \bm y)$ and hence propagates the posterior predictive uncertainty. For the squared error loss \eqref{sq-loss}, the \emph{predictive action} is 
\begin{equation}\label{sq-loss-p-action}
\bm{\tilde \delta}_\mathcal{S} = ( \bm{\tilde X}_\mathcal{S}'\bm \Omega \bm{\tilde X}_\mathcal{S})^{-1}\bm{\tilde X}_\mathcal{S}'\bm\Omega\bm{\tilde y}
\end{equation}
akin to \eqref{sq-loss-action-2}, where $\bm{\tilde y} = ({\tilde y}_1,\ldots, {\tilde y}_{\tilde n})' \sim p_\mathcal{M}(\bm{\tilde y} | \bm y)$. The linear coefficients $\bm{\tilde \delta}_\mathcal{S}$ inherit a posterior predictive distribution from $\bm{\tilde y}$ and can be computed for any subset $\mathcal{S}$. Similar computations are available for the cross-entropy loss \eqref{cross-ent}. In both cases, draws $\{\bm{\tilde y}^s\}_{s=1}^S \sim p_\mathcal{M}(\bm{\tilde y} | \bm y)$ from the posterior predictive distribution are sufficient for uncertainty quantification of $\bm{\tilde \delta}_\mathcal{S}$. Under the usual assumption that $p_\mathcal{M}(\bm{\tilde y} | \bm \theta, \bm y ) = p_\mathcal{M}(\bm{\tilde y} | \bm \theta)$, these draws are easily obtained by repeatedly sampling $\bm\theta^s \sim p_\mathcal{M}(\bm \theta | \bm y)$  from the posterior and $\bm{\tilde y}^s \sim p_\mathcal{M}(\bm{\tilde y} |  \bm \theta = \bm \theta^s ) $ from the likelihood.

The predictive actions are computable for any subset $\mathcal{S}$, include those selected based on the predictive evaluations in Section~\ref{sec-accept}. Let $\mathcal{\hat S}$ denote a subset identified based on the posterior (predictive) distribution under $\mathcal{M}$, such as $\mathcal{S}_{min}$ or $\mathcal{S}_{small}$. The predictive action $\bm{\tilde \delta}_\mathcal{\hat S}$ using this subset in \eqref{sq-loss-p-action} is a posterior predictive functional. Unlike for a generic subset $\mathcal{S}$, the predictive action $\bm{\tilde \delta}_\mathcal{\hat S}$ is a functional of both  $\bm{\tilde y}$ and $\mathcal{\hat S}$, which factors into the interpretation. 

A similar projection-based approach is developed by \cite{woody2019model} for  Gaussian regression. In place of $ \mathcal{L}(\bm{\tilde y}, \bm \delta)$ in \eqref{p-action}, \cite{woody2019model} suggest  $ \mathbb{E}_{\bm{\tilde y} | \bm \theta} \mathcal{L}(\bm{\tilde y}, \bm \delta)$, which is a middle ground between \eqref{action} and \eqref{p-action}: it integrates over the uncertainty of $\bm{\tilde y}$ given model parameters $\bm \theta$, but preserves uncertainty due to $\bm \theta$. 
For example, under the linear regression model $\mathbb{E}_{\bm{\tilde y} | \bm \theta} \{\bm{\tilde y}(\bm{\tilde x})\} = \bm{\tilde x}'\bm\beta$, the analogous result in \eqref{sq-loss-p-action} is $( \bm{\tilde X}_\mathcal{S}'\bm \Omega \bm{\tilde X}_\mathcal{S})^{-1}\bm{\tilde X}_\mathcal{S}'\bm\Omega \bm{\tilde X} \bm \beta$; when  $\mathcal{S} = \{1,\ldots,p\}$, this simplifies to the regression coefficient $\bm \beta$. Both approaches have their merits, but we prefer \eqref{p-action} because of the connection to the observable random variables $\bm{\tilde y}$ rather than the model-specific parameters $\bm \theta$.


\section{Simulation study}\label{sims}
	
We conduct an extensive simulation study to evaluate competing subset selection techniques for prediction accuracy, uncertainty quantification, and variable selection. Prediction for real-valued data is discussed here; classification for binary data is presented in Appendix~\ref{sims-class}. Although we do evaluate individual subsets from the proposed framework---namely, $\mathcal{S}_{min}$ and $\mathcal{S}_{small}$---we are more broadly interested in the performance of the \emph{acceptable family} of subsets. In particular, the  acceptable family  is designed to collect \emph{many subsets} that offer \emph{near-optimal} prediction; both cardinality and aggregate predictive accuracy are critical. 

The simulation designs feature varying signal-to-noise ratios (SNRs) and dimensions $(n,p)$, including $p \gg n$, with correlated covariates and sparse linear signals. Covariates $x_{i,j}$ are generated from marginal standard normal distributions with $\mbox{Cor}(x_{i,j}, x_{i,j'}) = (0.75)^{|j - j'|}$ for $i=1,\ldots, n$ and $j=1,\ldots,p$. The $p$ columns are randomly permuted and augmented with an intercept. The true linear coefficients $\bm \beta^*$ are constructed by setting $\beta_0^* = -1$ and fixing $p_* = 5$ nonzero coefficients, with $\lceil p_*/2 \rceil$ equal to $1$ and $\lfloor p_*/2 \rfloor$ equal to $-1$, and the rest at zero. The data are generated independently as $y_i \sim N(y_i^*, \sigma_*^2)$ with $y_i^* \coloneqq \bm x_i'\bm\beta^*$ and $\sigma_* \coloneqq \mbox{sd}(\bm y^*)/\sqrt{\mbox{SNR}}$. We consider $(n, p) \in \{(50, 50), (200, 400), (500, 50)\}$ and $\mbox{SNR} \in \{0.25, 1\}$ for low and high SNR. 
Evaluations are conducted over 100 simulated datasets.

First, we evaluate point prediction accuracy across competing collections of near-optimal subsets. Each collection is built from the same Gaussian linear regression model with horseshoe priors \citep{carvalho2010horseshoe} $\mathcal{M}$  and estimated using \texttt{bayeslm} \citep{Hahn2019}.  The collections are generated from different candidate sets $\mathbb{S}$ based on distinct search methods: the proposed BBA method (\texttt{bbound(bayes)}), the adaptive lasso search (\texttt{lasso(bayes)}) proposed by \cite{hahn2015decoupling} and \cite{KowalPRIME2020}, and both forward (\texttt{forward(bayes)}) and backward (\texttt{backward(bayes)}) search. For each collection of  candidate subsets, we compute the acceptable families for $\eta = 0\%$, $\varepsilon = 0.1$,  $m_k = 15$, and the observed covariate values $\{\bm{\tilde x}_i\}_{i=1}^{\tilde n} = \{\bm{ x}_i\}_{i=1}^n$; variations of these specifications are presented in Appendix~\ref{sims-app}. These methods differ \emph{only} in the search process: if a particular subset $\mathcal{S}$ is identified by more than one of these competing methods, the corresponding point predictions---based on the optimal action  $\bm{\hat \delta}_{\mathcal{S}}$ in \eqref{sq-loss-action-2}---will be identical.

The competing collections of near-optimal subsets are evaluated in aggregate. At each simulation, we compute the $q$th quantile of the root mean squared errors (RMSEs) for the true mean $\{y_i^*\}_{i=1}^n$ across all subsets within that acceptable family, and then average that quantity across all simulations. For example, $q=1$ describes the predictive performance if we were to use the worst acceptable subset at each simulation, as determined by an oracle. These quantities summarize the distribution of RMSEs \emph{within} each acceptable family; we report the values for $q\in\{0, 0.25, 0.5, 0.75, 1\}$ and present the results as boxplots in Figure~\ref{fig:sims-pred}. Most notably, the proposed \texttt{bbound(bayes)} strategy produces up to 10 times the number of acceptable subsets as the other methods, yet crucially maintains equivalent predictive performance. Naturally, this expanded collection of subsets also produces a minimum RMSE ($q=0$) that outperforms the remaining methods. Clearly, the proposed approach is far superior at collecting \emph{more} subsets that provide \emph{near-optimal} predictive accuracy.

\begin{figure}[h!]
\begin{center}
\includegraphics[width=.49\textwidth]{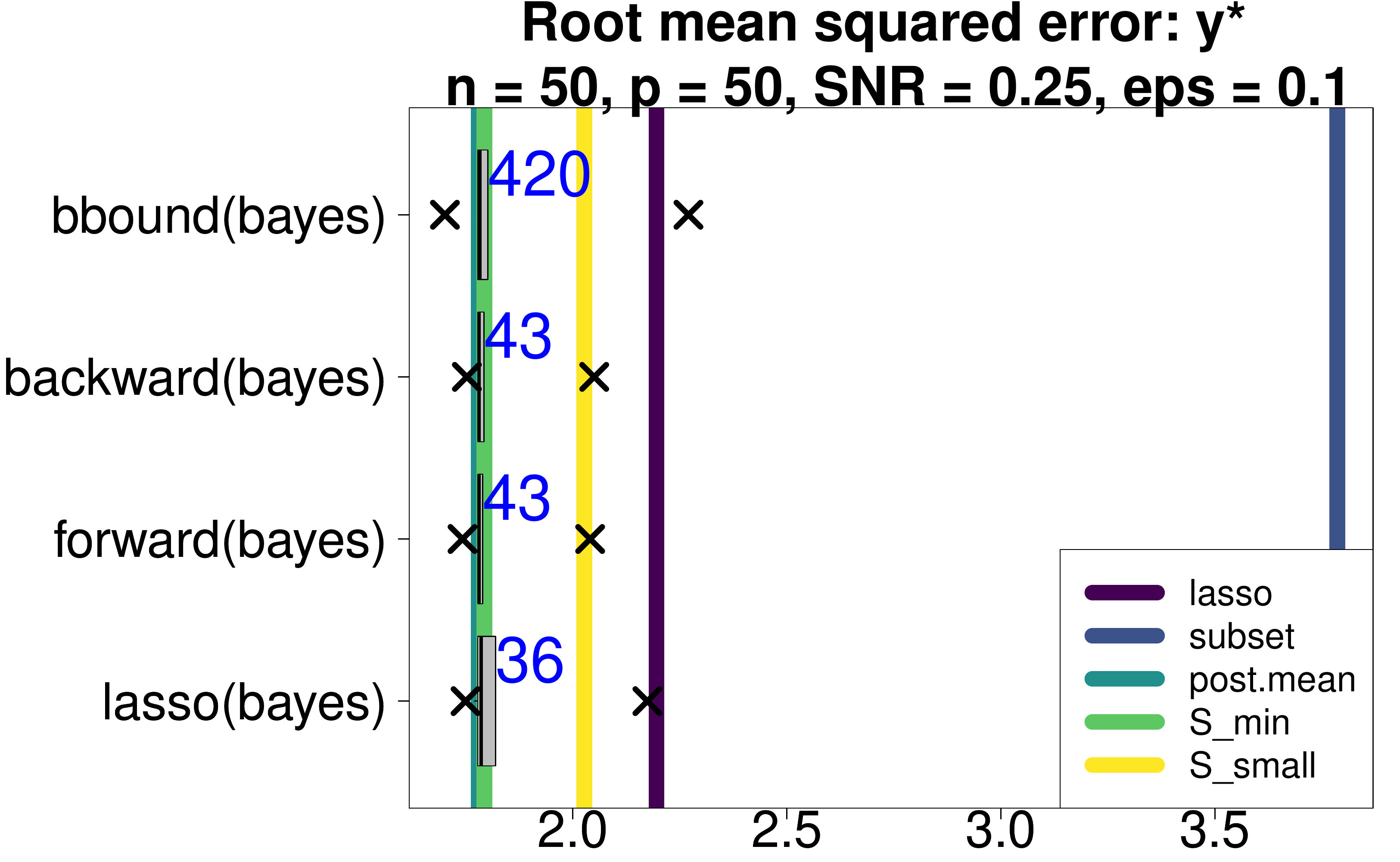}
\includegraphics[width=.49\textwidth]{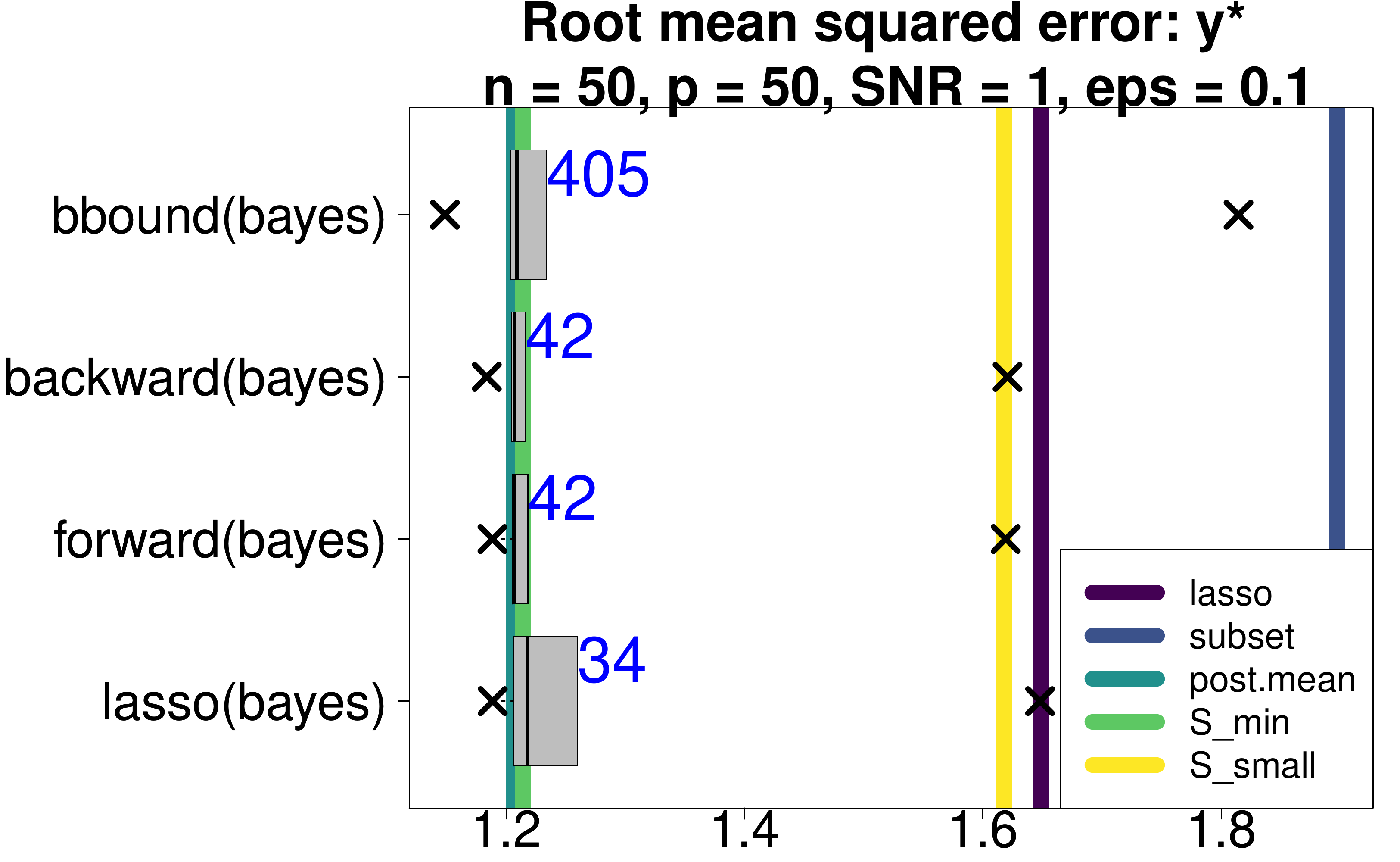}

\vspace{2mm}

\includegraphics[width=.49\textwidth]{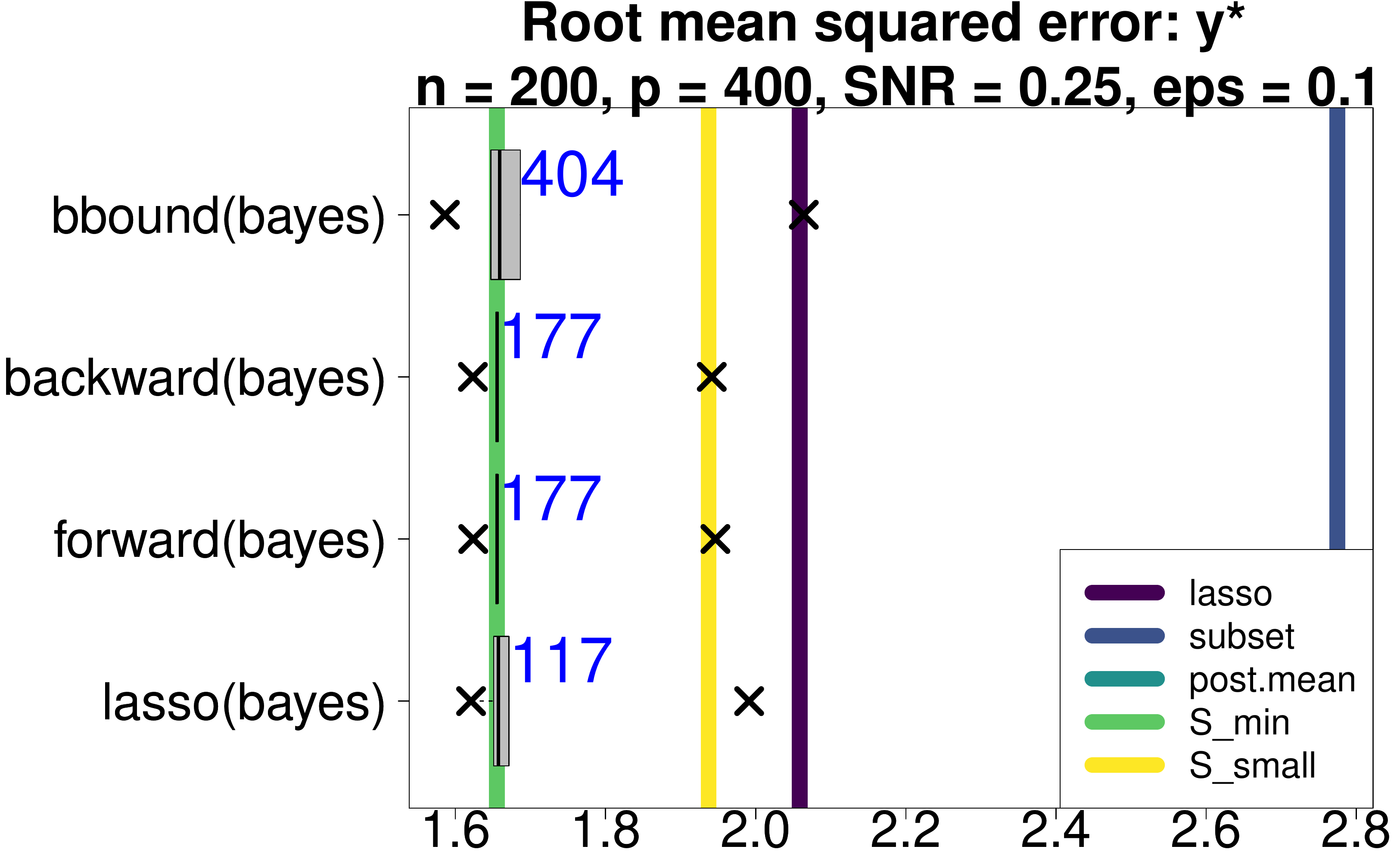}
\includegraphics[width=.49\textwidth]{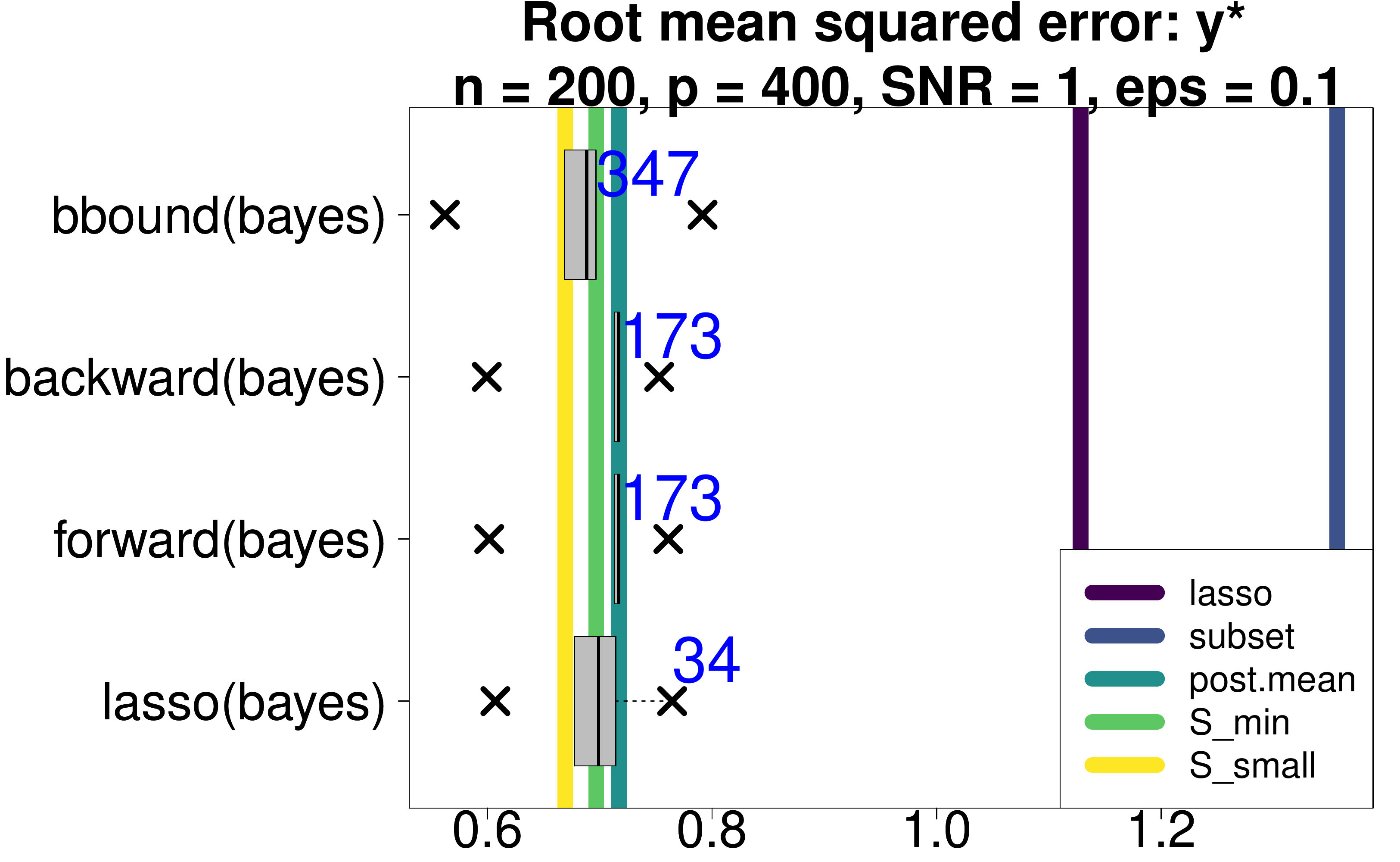}

\vspace{2mm}

\includegraphics[width=.49\textwidth]{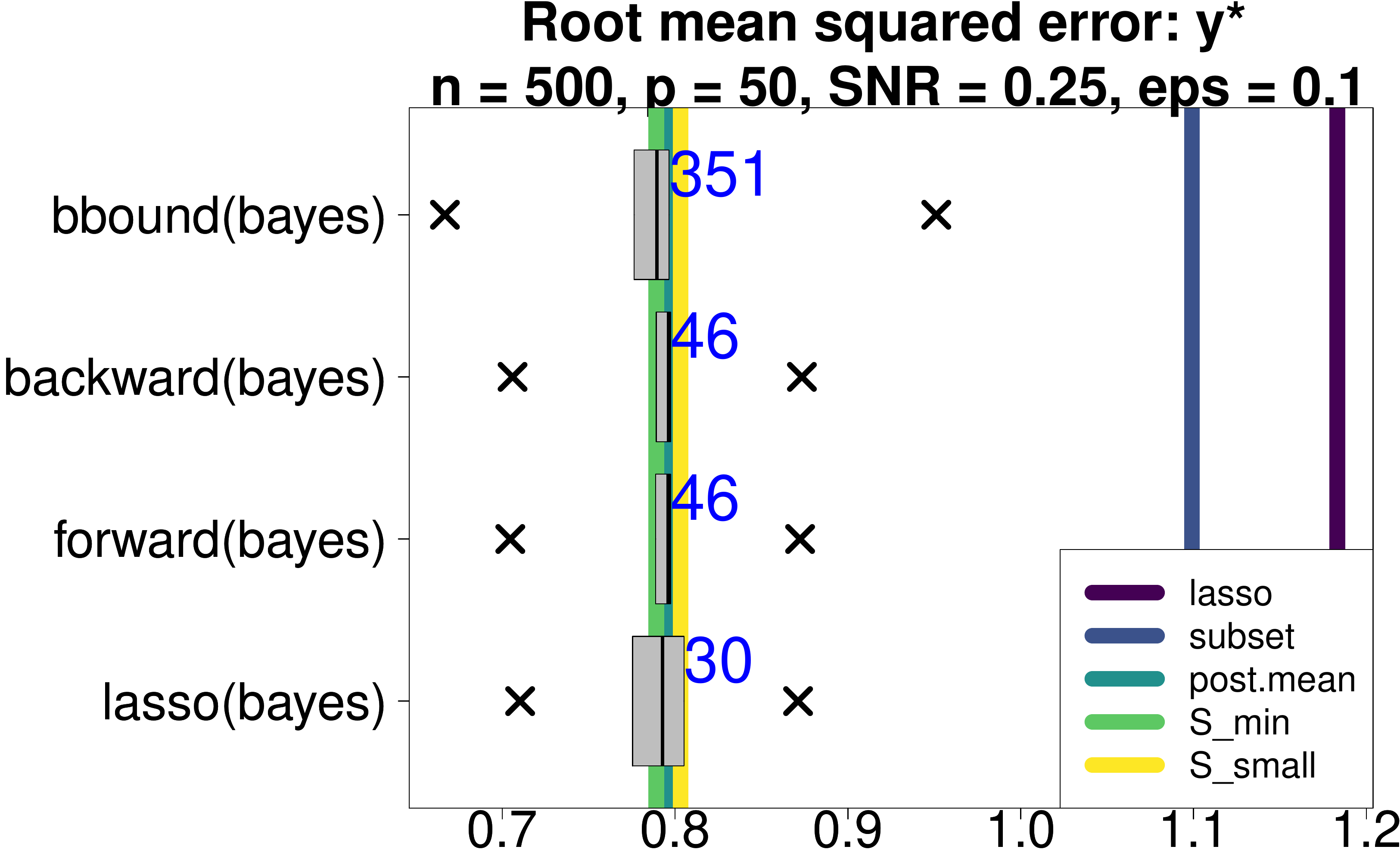}
\includegraphics[width=.49\textwidth]{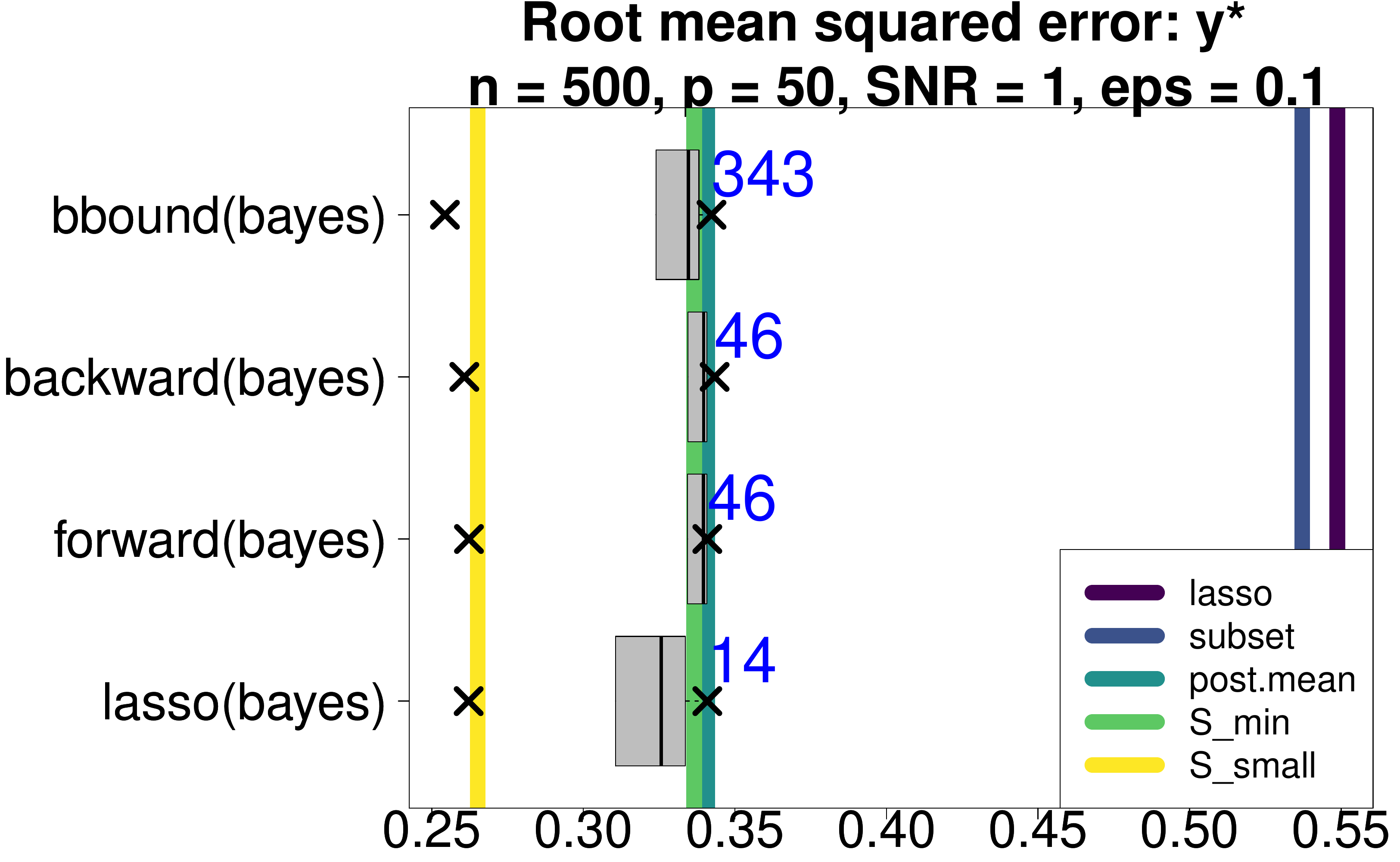}

\end{center}
\caption{ \small
Root mean squared errors (RMSEs) for predicting $y^*$ across $(n, p, \mbox{SNR})$ configurations. The boxplots summarize the RMSE quantiles for the subsets within each acceptable family, while the vertical lines denote RMSEs of competing methods. The average size of  each acceptable family is annotated. The proposed BBA search returns vastly more subsets that remain highly competitive, while $\mathcal{S}_{small}$ performs very well and substantially outperforms classical subset selection.
\label{fig:sims-pred}}
\end{figure}

Figure~\ref{fig:sims-pred} also summarizes the point prediction accuracy for several competing methods. First, we include the usual point estimate under $\mathcal{M}$ given by the posterior expectation of the regression coefficients $\bm \beta$ (\texttt{post.mean}), which does not include any variable or subset selection. Next, we compute point predictions for the key acceptable subsets $\mathcal{S}_{min}$ and $\mathcal{S}_{small}$ using the proposed \texttt{bbound(bayes)} approach; the competing search strategies produced similar results and are omitted.  Among frequentist estimators, we use the adaptive lasso (\texttt{lasso}; \citealp{zou2006adaptive}) with $\lambda$ chosen by 10-fold cross-validation and the one-standard-error rule \citep{hastie2009elements}. In addition, we include classical subset selection (\texttt{subset}) implemented using the \texttt{leaps} package in \texttt{R} with the final subset determined by AIC. When $p > 35$, we screen to the first 35 predictors that enter the model in the aforementioned adaptive lasso. Forward (\texttt{forward}) and backward (\texttt{backward}) selection were also considered, but were either noncompetitive or performed similarly to the adaptive lasso and are omitted from this plot. 

Most notably, the predictive performance of $\mathcal{S}_{small}$ is excellent, especially for high SNRs, and substantially outperforms the frequentist methods in all settings. Because $\mathcal{S}_{min}$ is overly conservative---i.e., it selects too many variables (see Table~\ref{tab:tprs})---it performs nearly identically to \texttt{post.mean}. Although $\mathcal{S}_{min}$ and \texttt{post.mean} outperform $\mathcal{S}_{small}$ when the SNR is low and the sample size is small, note that $\mathcal{S}_{small}$ is not targeted exclusively for optimal predictive performance; rather, it represents the smallest subset that obtains \emph{near-optimal}  performance. Lastly,  the regularization induced by $\mathcal{M}$ is greatly beneficial:  \emph{every} member of each acceptable family decisively outperforms classical subset selection across all scenarios. 

Next, we compare uncertainty quantification for the regression coefficients $\bm \beta^*$. We compute 90\% intervals using the predictive action $\bm{\tilde \delta}_\mathcal{S}$ from \eqref{sq-loss-p-action} for $\mathcal{S}_{small}$ and $\mathcal{S}_{min}$ based on the proposed \texttt{bbound(bayes)} procedure. In addition, we compute 90\% highest posterior density (HPD) credible intervals for $\bm \beta$  under $\mathcal{M}$ (\texttt{post.HPD})  and 90\% frequentist confidence intervals using \cite{Zhao2017}, which computes confidence intervals from a linear regression model that includes only the variables selected by the (adaptive) lasso (\texttt{lasso+lm}). The 90\% interval estimates are evaluated and compared in Figure~\ref{fig:sims-mciw} using interval widths and empirical coverage. The intervals from $\mathcal{S}_{small}$ using \eqref{p-action} are uniformly better (i.e., narrower) than the usual posterior HPD intervals under $\mathcal{M}$. In addition, the intervals from \cite{Zhao2017} are highly competitive, despite ignoring selection bias. In all cases, the intervals are overly conservative and achieve the nominal empirical coverage.

\begin{figure}[h!]
\begin{center}
\includegraphics[width=.32\textwidth]{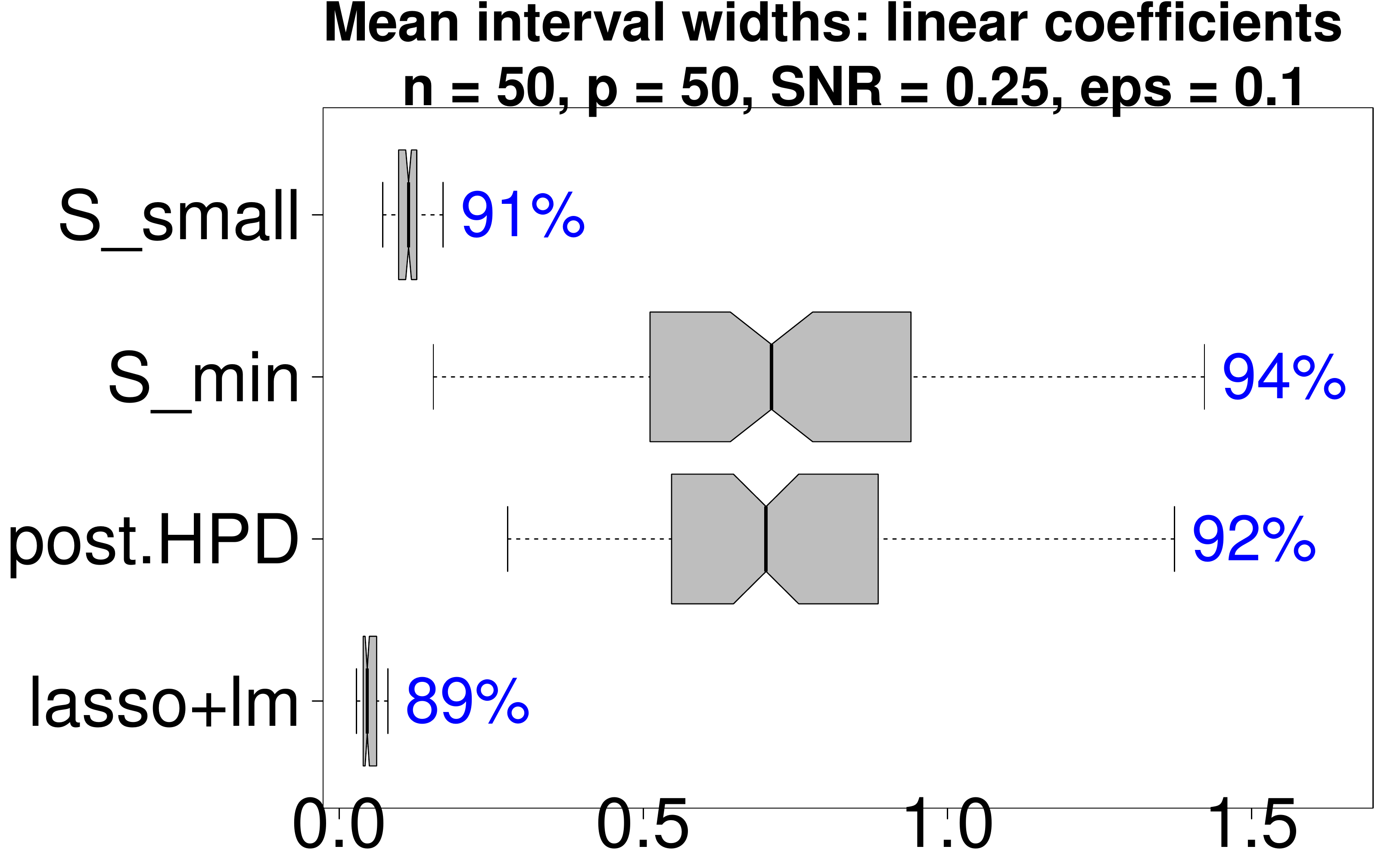}
\includegraphics[width=.32\textwidth]{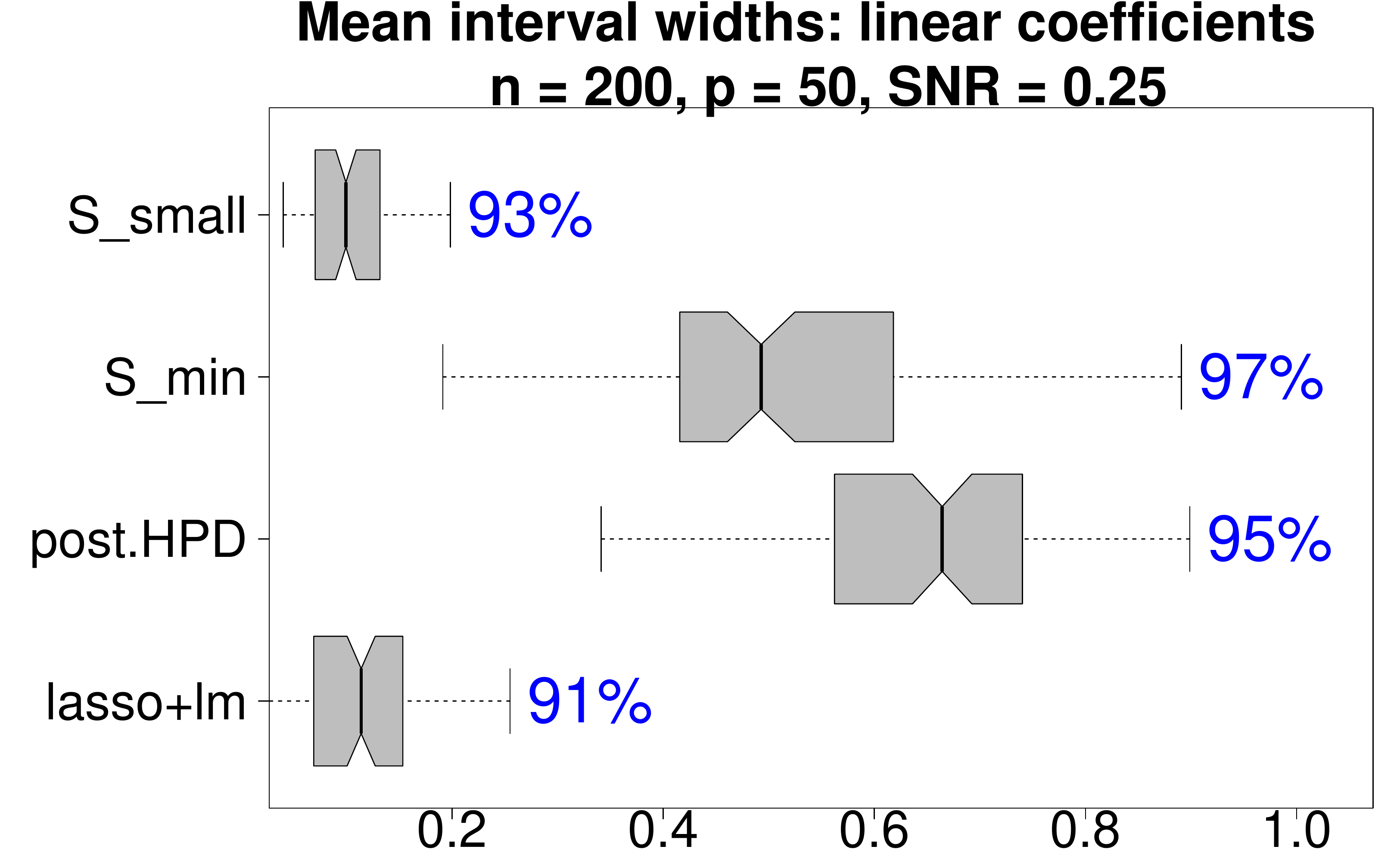}
\includegraphics[width=.32\textwidth]{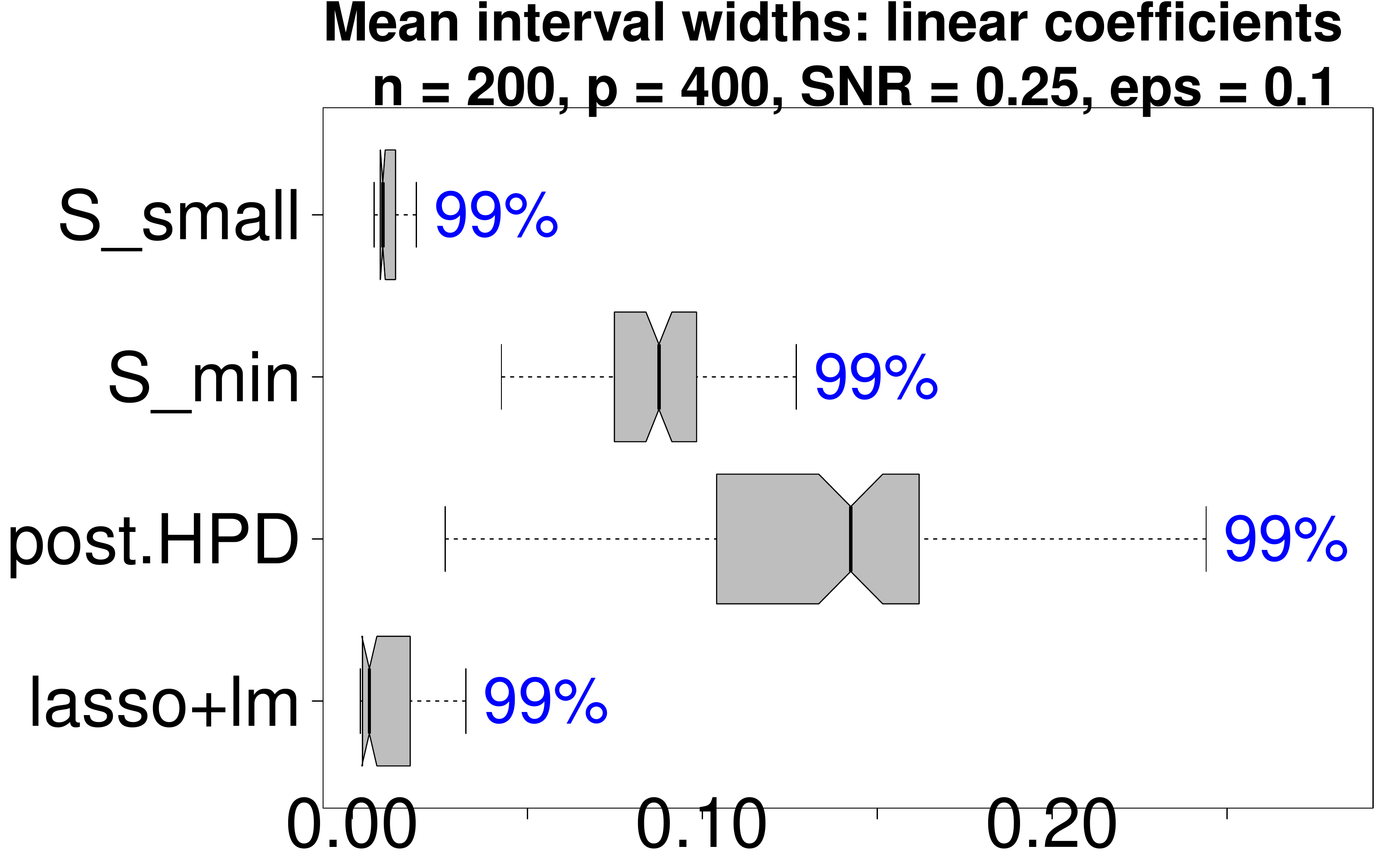}

\includegraphics[width=.32\textwidth]{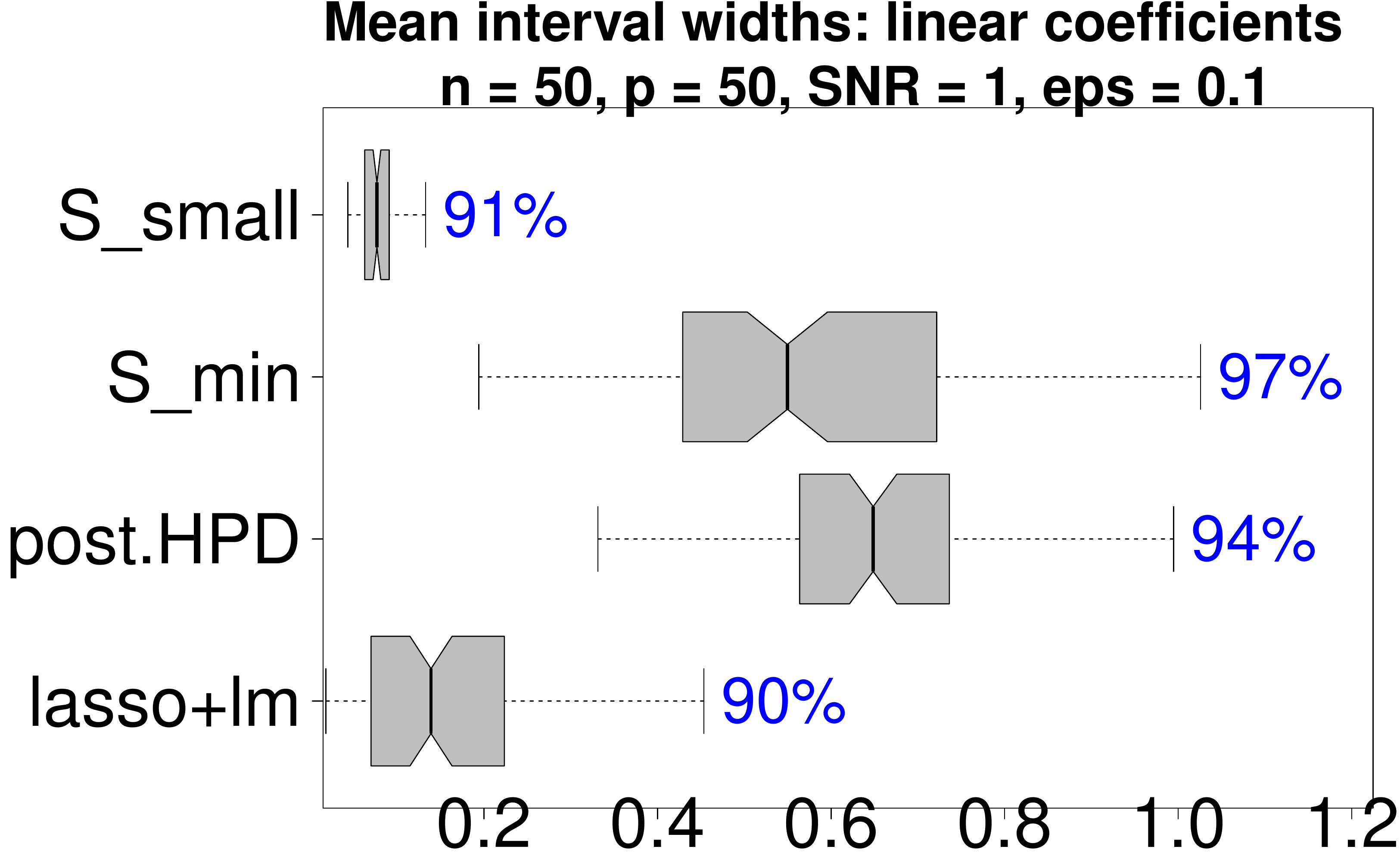}
\includegraphics[width=.32\textwidth]{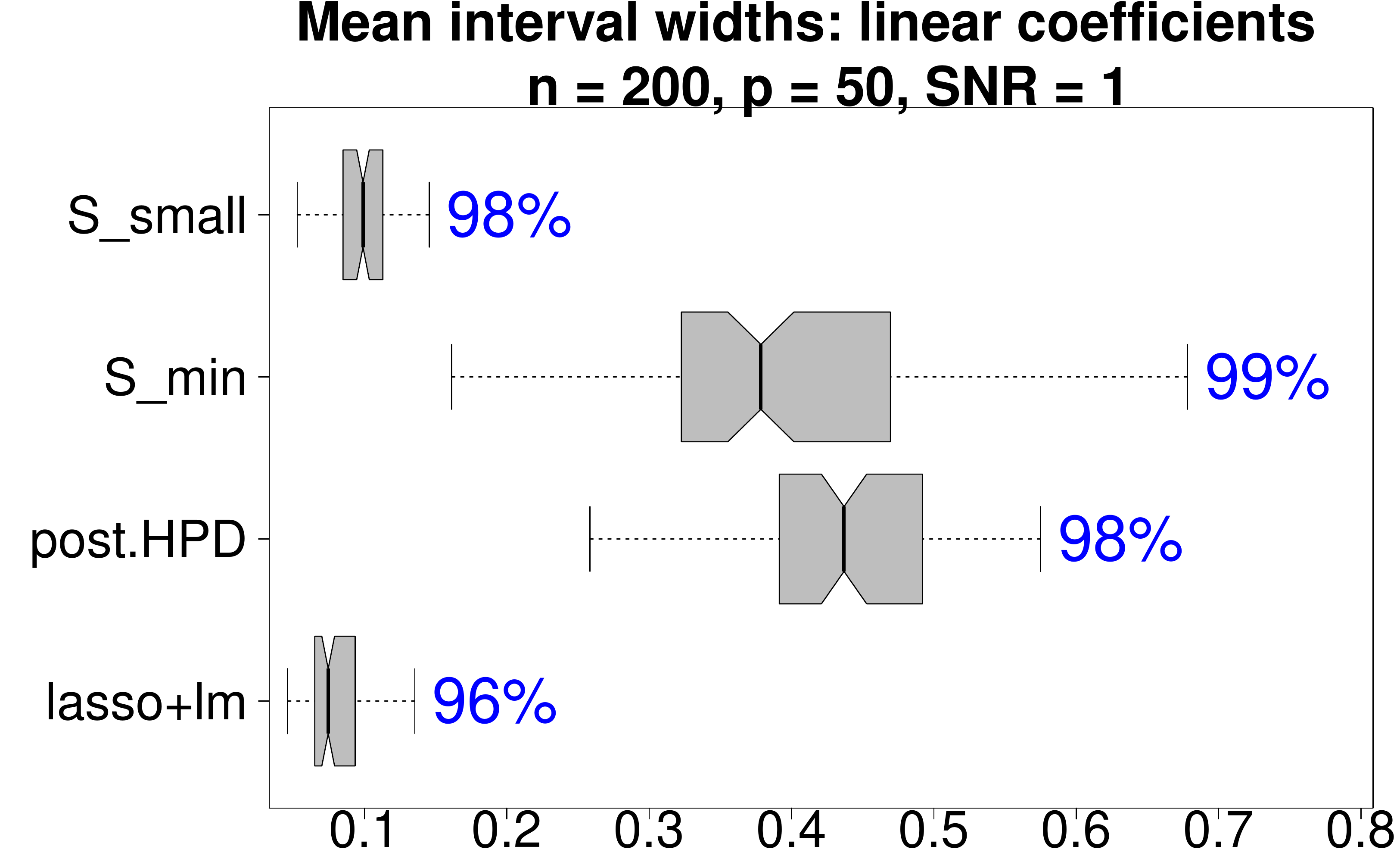}
\includegraphics[width=.32\textwidth]{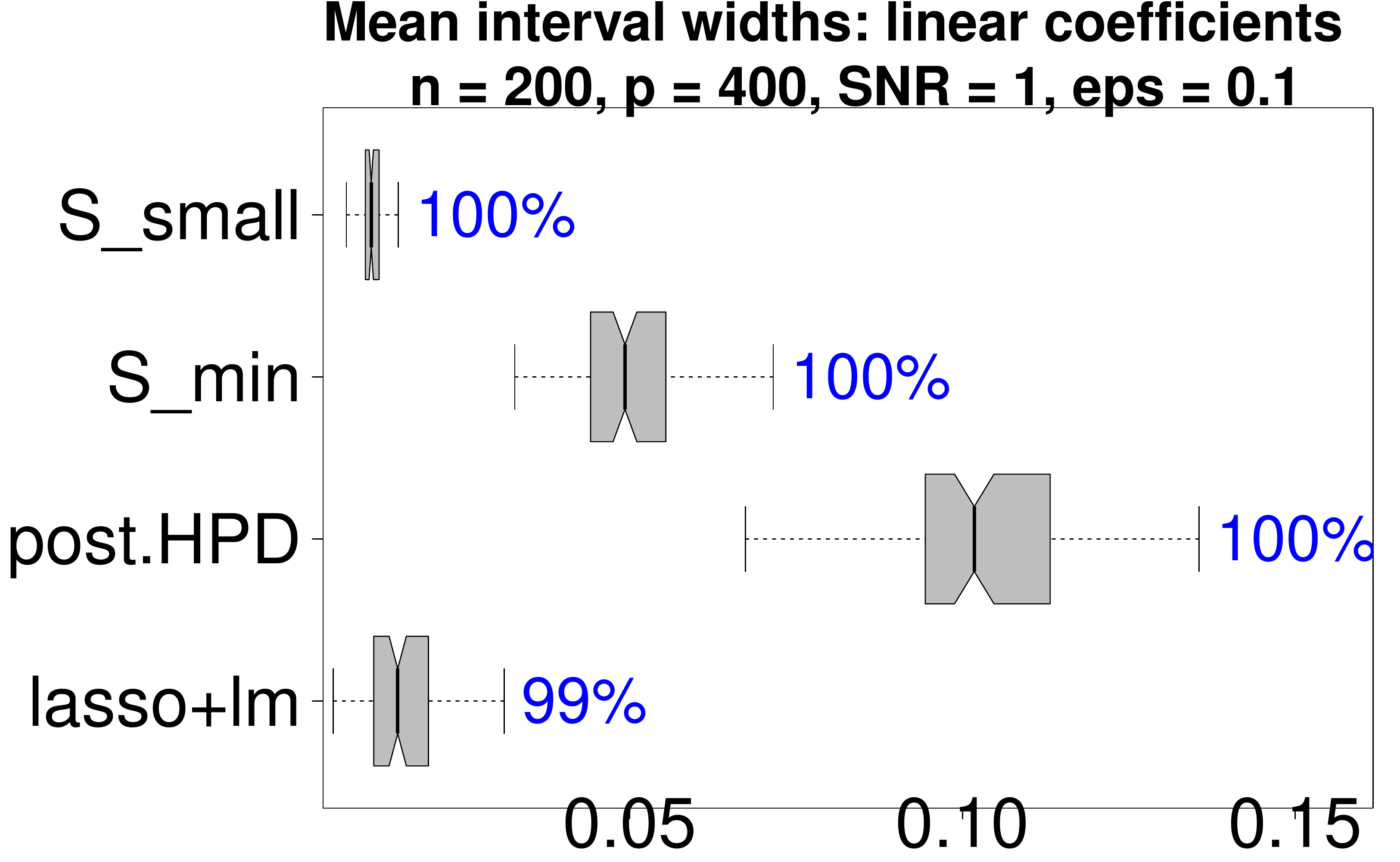}

\end{center}
\caption{ \small
Mean interval widths (boxplots) with empirical coverage (annotations) for $\bm \beta^*$. Non-overlapping notches indicate significant differences between medians. The proposed intervals based on $\mathcal{S}_{small}$ are significantly narrower than the usual HPD intervals under $\mathcal{M}$ yet maintain the empirical nominal 90\% coverage. 
\label{fig:sims-mciw}}
\end{figure}

Lastly, marginal variable selection is evaluated using true positive rates and true negative rates in Table~\ref{tab:tprs}. In addition to $\mathcal{S}_{min}$ and $\mathcal{S}_{small}$, we include the common Bayesian selection technique that includes a variable $j$ when the 95\% HPD interval for $\beta_j$ excludes zero (\texttt{posterior HPD}). 
The selection performance of $S_{small}$ is excellent and similar to the the adaptive lasso. Classical subset, forward, and backward selection are too aggressive, while the posterior interval-based selection is too conservative. Hence, despite the primary focus on the \emph{collection} of near-optimal subsets, the smallest acceptable subset $\mathcal{S}_{small}$ is a key member with excellent prediction, uncertainty quantification, and marginal selection performance.

\begin{table}[ht]
\centering
\caption*{$n=50, p = 50, \mbox{SNR} = 0.25$}\vspace{-3mm}
\begin{tabular}{rrrrrrrr}
  \hline
 & lasso & forward & backward & subset & posterior HPD & $S_{min}$  & $S_{small}$ \\ 
  \hline
TPR & 0.22 & 0.96 & 0.93 & 0.53 & 0.06 & 0.51 & 0.22 \\ 
  TNR & 0.98 & 0.06 & 0.10 & 0.63 & 1.00 & 0.81 & 0.98 \\ 
   \hline
\end{tabular}

\vspace{4mm} \caption*{$n=50, p = 50, \mbox{SNR} = 1$}\vspace{-1mm}
\begin{tabular}{rrrrrrrr}
  \hline
 & lasso & forward & backward & subset & posterior HPD & $S_{min}$  & $S_{small}$ \\ 
  \hline
TPR & 0.54 & 0.95 & 0.89 & 0.72 & 0.16 & 0.77 & 0.34 \\ 
  TNR & 0.92 & 0.06 & 0.11 & 0.64 & 1.00 & 0.78 & 0.99 \\
   \hline
\end{tabular}

\vspace{4mm} \caption*{$n=200, p = 400, \mbox{SNR} = 0.25$}\vspace{-1mm}
\begin{tabular}{rrrrrrrr}
  \hline
 & lasso & forward & backward & subset & posterior HPD & $S_{min}$  & $S_{small}$ \\ 
  \hline
TPR & 0.36 & 0.79 & 0.78 & 0.67 & 0.17 & 0.75 & 0.34 \\ 
  TNR & 1.00 & 0.52 & 0.54 & 0.96 & 1.00 & 0.95 & 1.00 \\ 
   \hline
\end{tabular}

\vspace{4mm} \caption*{$n=200, p = 400, \mbox{SNR} = 1$}\vspace{-1mm}
\begin{tabular}{rrrrrrrr}
  \hline
 & lasso & forward & backward & subset & posterior HPD & $S_{min}$  & $S_{small}$ \\ 
  \hline
TPR & 0.97 & 0.96 & 0.95 & 0.95 & 0.75 & 0.98 & 0.93 \\ 
  TNR & 0.99 & 0.57 & 0.60 & 0.96 & 1.00 & 0.95 & 1.00 \\ 
   \hline
\end{tabular}

\vspace{4mm} \caption*{$n=500, p = 50, \mbox{SNR} = 0.25$}\vspace{-1mm}
\begin{tabular}{rrrrrrrr}
  \hline
 & lasso & forward & backward & subset & posterior HPD & $S_{min}$  & $S_{small}$ \\ 
  \hline
TPR & 0.84 & 0.96 & 0.95 & 0.94 & 0.59 & 0.99 & 0.86 \\ 
  TNR & 0.98 & 0.83 & 0.83 & 0.83 & 1.00 & 0.70 & 0.98 \\
   \hline
\end{tabular}

\vspace{4mm} \caption*{$n=500, p = 50, \mbox{SNR} = 1$}\vspace{-1mm}
\begin{tabular}{rrrrrrrr}
  \hline
 & lasso & forward & backward & subset & posterior HPD & $S_{min}$  & $S_{small}$ \\ 
  \hline
TPR & 1.00 & 1.00 & 1.00 & 1.00 & 1.00 & 1.00 & 1.00 \\ 
  TNR & 1.00 & 0.84 & 0.83 & 0.83 & 1.00 & 0.67 & 0.99 \\ 
   \hline
\end{tabular}
\vspace{4mm}
\caption{\label{tab:tprs} \small
True positive rates (TPR) and true negative rates (TNR) for Gaussian synthetic data with $p^* + 1= 6$ active covariates including the intercept. The selection performance of $S_{small}$ is excellent and similar to the the adaptive lasso. Classical subset, forward, and backward selection are too aggressive, while the posterior interval-based selection is too conservative. 
}
\end{table}

The appendix contains simulation studies for classification (Appendix~\ref{sims-class}) and variations for $\varepsilon \in \{0.01, 0.1, 0.2\}$, $\{\bm{\tilde x}_i\}_{i=1}^{\tilde n}$, and the distributions of $\{\bm{x}_i\}_{i=1}^{n}$ and $\{\bm{\tilde x}_i\}_{i=1}^{\tilde n}$ (Appendix~\ref{sims-app}). All results are qualitatively similar to those presented here.

\section{Subset selection for predicting educational outcomes}\label{app}
Childhood educational outcomes are affected by adverse environmental exposures, such as poor air quality and lead, as well as social stressors, such as poverty, racial residential isolation, and neighborhood deviation. We study childhood educational development using end-of-grade reading test scores for a large cohort of fourth grade children in North Carolina \citep{ChildrensEnvironmentalHealthInitiative2020}. The reading scores are accompanied by a collection of student-level information detailed in Figure~\ref{fig:vars}, which includes air quality exposures, birth information, blood lead measurements, and social and economic factors (see also \citealp{KowalPRIME2020}). The goal is to identify subsets of these variables and interactions that offer near-optimal prediction of reading scores as well as accurate classification of at-risk students. 

\begin{figure}[h!]
\begin{center}
\includegraphics[width=1\textwidth]{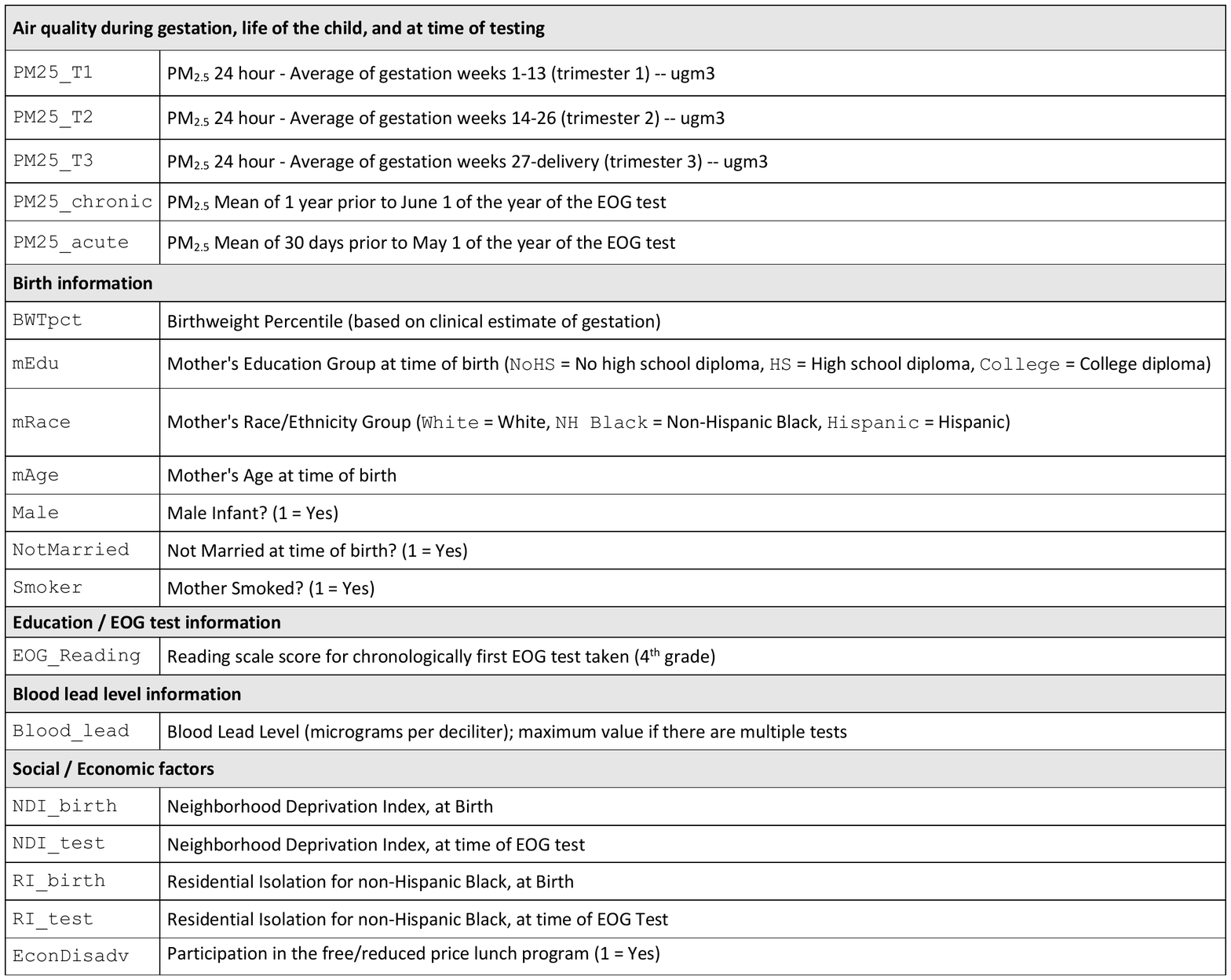}
\end{center}
\caption{ \small
Variables in the NC education dataset. Data are restricted to individuals with 37-42 weeks gestation, mother's age 15-44, $\texttt{Blood\_lead} \le 80$, birth order $\le 4$, no current limited English proficiency,  and residence in NC at time of birth and time of 4th grade test. 
 \label{fig:vars}}
\end{figure}

A prominent feature of the data is the correlation among the covariates. After centering and scaling the continuous covariates to mean zero and standard deviation 0.5, we augment the variables in Figure~\ref{fig:vars} (excluding the test scores) with interactions between  race and the social and economic factors. The resulting dimensions are $n=16806$ and $p=32$. Figure~\ref{fig:corrplot} displays the pairwise correlations among the covariates and the response variable. There are strong associations among the air quality exposures as well as among race and the social and economic factors. Due to the dependences among variables, it is likely that distinct subsets of similar predictive ability can be obtained by interchanging among these correlated covariates. Hence, it is advantageous to collect and study the near-optimal subsets.

We compute acceptable families for (i) the reading scores $y_i$ under squared error loss and (ii) the indicator $h(\tilde y_i) = \mathbb{I}\{\tilde y_i  \ge \tau_{0.1}\}$ under cross-entropy loss, where $\tau_{0.1}$ is the 0.1-quantile of the reading scores (see Appendix~\ref{app-class}). While task (i) broadly considers the spectrum of educational outcomes via reading scores, task (ii) targets at-risk students in the bottom 10\% of reading ability. Acceptable families and accompanying quantities for both tasks can be computed using the same Bayesian model $\mathcal{M}$: we focus on  Gaussian linear regression with horseshoe priors. 
The acceptable families are computed using the proposed BBA search with  $\eta = 0\%$, $\varepsilon = 0.1$, $m_k=100$  and $\{\bm{\tilde x}_i\}_{i=1}^{\tilde n} = \{\bm{ x}_i\}_{i=1}^n$; results for other $\eta$ values, $\varepsilon =0.2$, and $m_k=15$ are noted, while alternative target covariates $\{\bm{\tilde x}_i\}_{i=1}^{\tilde n}$ are in  Section~\ref{sec-oos}.

\subsection{Subset selection for predicting reading scores}
First, we predict reading scores using a linear model for $\mathcal{M}$ and squared error loss for $L$. 
Since acceptable family is defined based on $\widetilde{{D}}_{\mathcal{S}_{min},\mathcal{S}}^{out}$ in \eqref{accept}, we summarize its distribution in Figure~\ref{fig:d-lm}. For each $\mathcal{S} \in \mathbb{S}$, we display 80\% intervals, expectations, and the empirical analog ${{D}}_{\mathcal{S}, \mathcal{S}_{min}}^{out} \coloneqq 100\times({{L}}_{\mathcal{S}}^{out} - {{L}}_{\mathcal{S}_{min}}^{out})/{{L}}_{\mathcal{S}_{min}}^{out}$. The smaller subsets of sizes four to six demonstrate clear separation for certain subsets. Along with the intercept and the race indicators, these subsets include \texttt{mEdu} (college diploma), \texttt{EconDisadv}, and  \texttt{mEdu} (completed high school) in sequence. 
However, larger subsets are needed to procure near-optimal predictions for smaller margins, such as $\eta < 2\%$. While the best subset $\vert\mathcal{S}_{min}\vert = 29$ includes nearly all of the covariates, many  subsets with $\vert \mathcal{S}\vert > 10$ achieve within $\eta = 1\%$ of the accuracy of $\mathcal{S}_{min}$.

\begin{figure}[h!]
\begin{center}
\includegraphics[width=.6\textwidth]{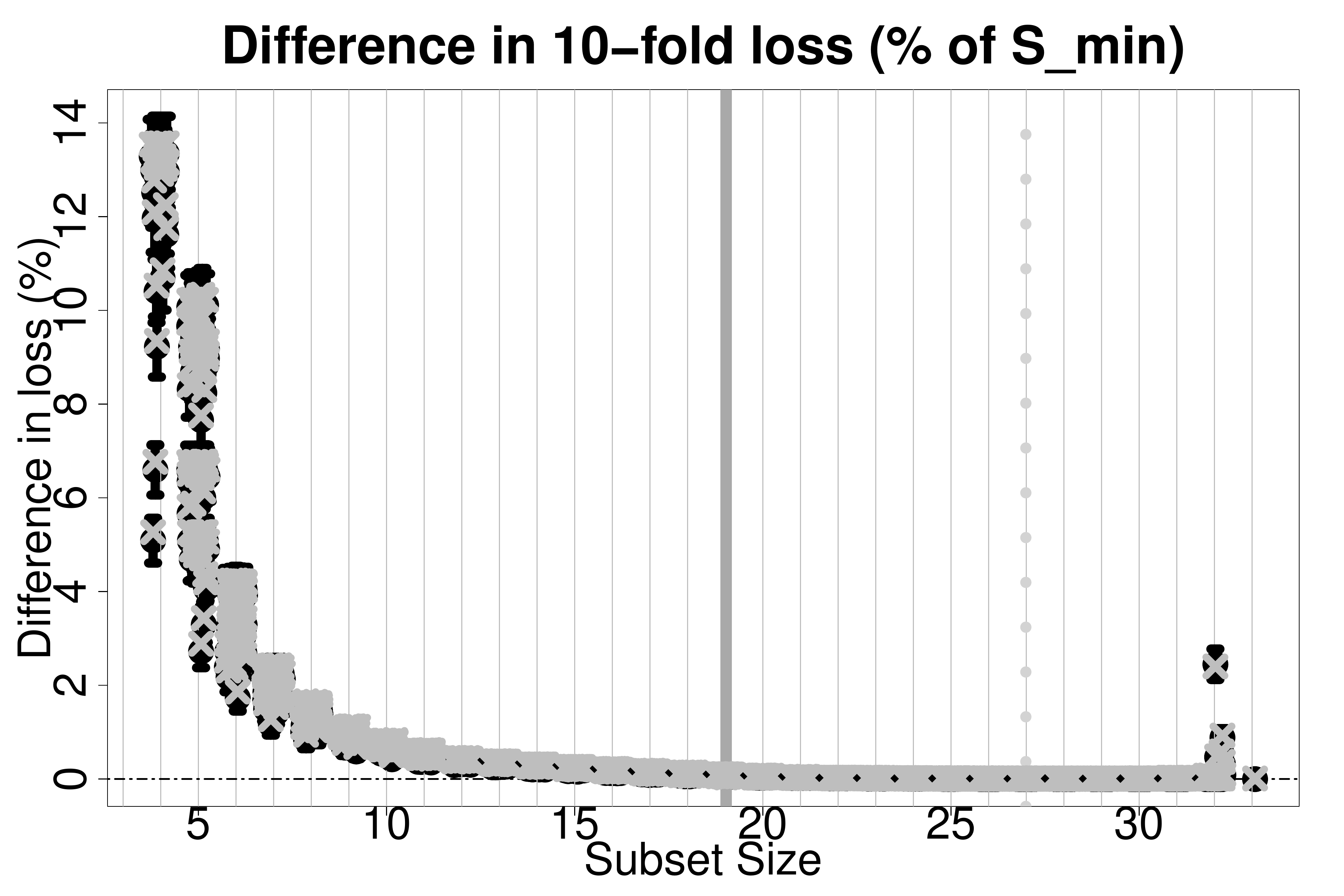}
\end{center}
\caption{ \small
The 80\% intervals (bars) and expected values (circles) for $\widetilde{{D}}_{\mathcal{S}_{min},\mathcal{S}}^{out}$ with ${{D}}_{\mathcal{S}, \mathcal{S}_{min}}^{out}$ 
(x-marks) under squared error loss for each subset size $\vert \mathcal{S}\vert$ with $\mathcal{S} \in \mathbb{S}$. We annotate $\mathcal{S}_{min}$ (dashed gray line) and $\mathcal{S}_{small}$ (solid gray line) 
for $\varepsilon = 0.10$ and $\eta = 0$ and jitter the subset sizes for clarity of presentation.  
\label{fig:d-lm}}
\end{figure}

 Among the $\vert \mathbb{S} \vert = 2761$ candidate subsets identified from the BBA search, there are $\vert \mathbb{A}_{0, 0.1}\vert = 1183$ acceptable subsets. We summarize $\mathbb{A}_{\eta, \varepsilon}$ via the co-variable importance metrics $\mbox{VI}_{\rm incl}(j)$ and  $\mbox{VI}_{\rm incl}(j, \ell)$   in Figures~\ref{fig:vi-lm}~and~\ref{fig:co-vi-lm}, respectively. Unlike many variable importance metrics that measure effect sizes, 
$\mbox{VI}_{\rm incl}(j)$ instead quantifies whether each covariate $j$ is a component of all, some, or no competitive subsets. There are many keystone covariates that appear in (nearly) all acceptable subsets, including environmental exposures (prenatal air quality and blood lead levels), economic and social factors (\texttt{EconDisadv}, mother's education level, neighborhood deprivation at time of test), and demographic information (race, gender), among others.

\begin{figure}[h!]
\begin{center}
\includegraphics[width=.75\textwidth]{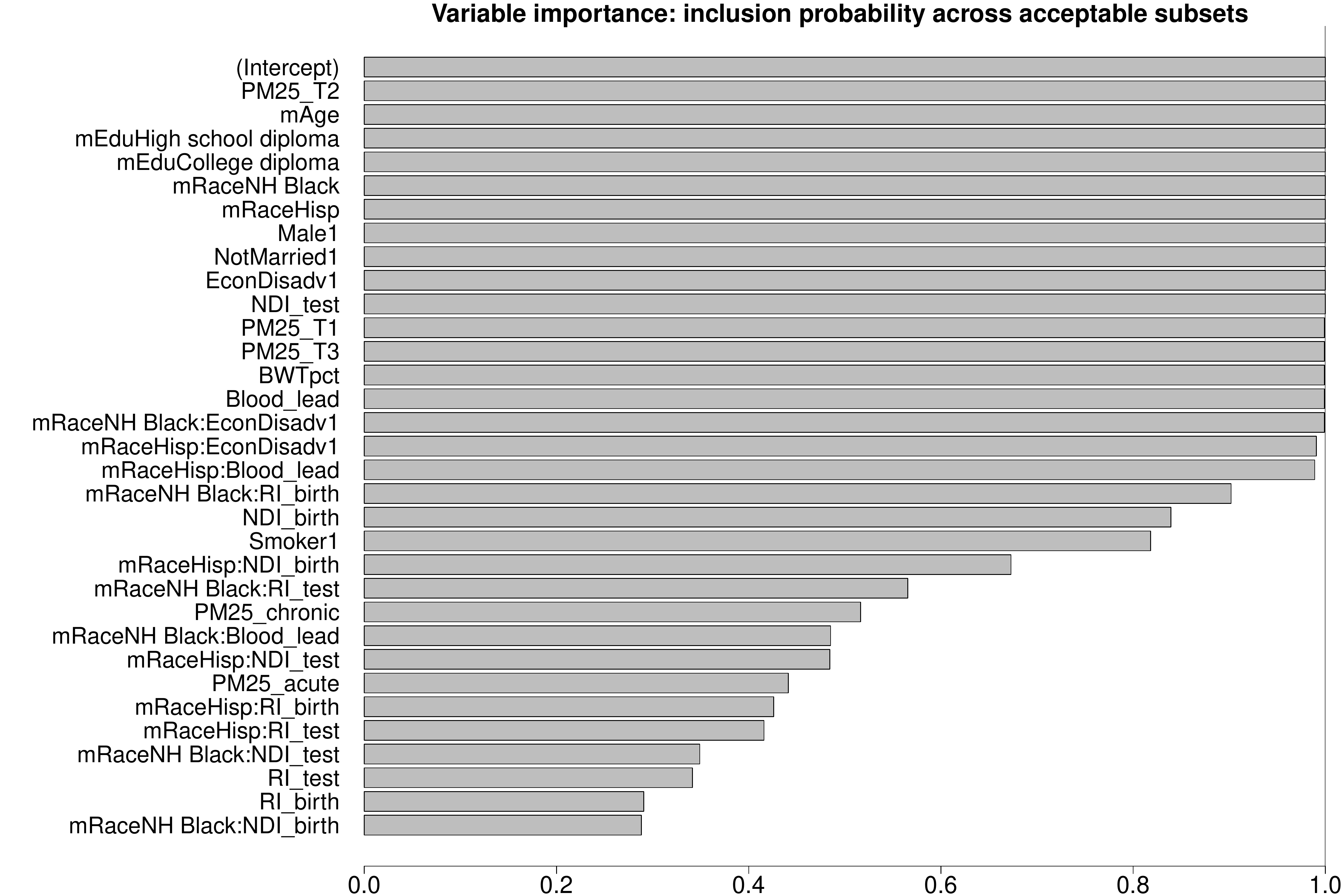}
\end{center}
\caption{ \small
Variable importance $\mbox{VI}_{\rm incl}(j)$ for prediction. There are several tiers: variables appear in (nearly) all,  many ($>70$\%), some ($>30$\%), or no acceptable subsets.
\label{fig:vi-lm}}
\end{figure} 

Interestingly,  chronic and acute $\mbox{PM}_{2.5}$ exposure each belong to nearly 50\% of acceptable subsets (Figure~\ref{fig:vi-lm}), yet rarely appear in the same acceptable subset (Figure~\ref{fig:co-vi-lm}). The pairwise correlations (Figure~\ref{fig:corrplot}) offer a reasonable explanation: these variables are weakly correlated with reading scores but highly correlated with one another. Similar results persist for neighborhood deprivation and racial residential isolation both at birth and time of test. Moreover, this analysis was conducted after removing one acute $\mbox{PM}_{2.5}$ outlier (50\% larger than all other values). When that outlier is kept in the data, acute $\mbox{PM}_{2.5}$ no longer belongs to \emph{any} acceptable subset, while $\mbox{VI}_{\rm incl}(j)$ for chronic $\mbox{PM}_{2.5}$ increases. These results are encouraging: the acceptable family identifies redundant yet distinct predictive explanations, but prefers the more stable covariate in the presence of outliers.

Next, we analyze the smallest acceptable subset $\mathcal{S}_{small}$ and incorporate uncertainty quantification for the accompanying linear coefficients. The $\vert \mathcal{S}_{small}\vert = 19$ selected covariates  are displayed in Figure~\ref{fig:coef-lm} alongside the point and 90\% intervals based on the proposed approach, the Bayesian model $\mathcal{M}$, and the adaptive lasso. The estimates and intervals for covariates excluded from $\mathcal{S}_{small}$ are identically zero for the proposed approach (and, in this case, the adaptive lasso as well), while the estimates and HPD intervals from $\mathcal{M}$ are dense for all covariates. Despite the encouraging simulation results for the  \cite{Zhao2017} frequentist intervals, these  intervals often exclude the adaptive lasso point estimates, which undermines  interpretability.

The estimates from $\mathcal{S}_{small}$ and $\mathcal{M}$ are similar with anticipated directionality: higher mother's education levels, lower blood lead levels in the child, less neighborhood deprivation, and absence of economic disadvantages predict higher reading scores. Prenatal air quality exposure ($\mbox{PM}_{2.5})$ is significant: due to seasonal effects, the 1st and 3rd trimester exposures have negative coefficients, while the 2nd trimester has a positive effect. Naturally, these effects can only be interpreted jointly. Among interaction terms, the negative effect of \texttt{NH Black} $\times $ \texttt{RI\_birth} suggests the lower reading scores among non-Hispanic Black students are accentuated by racial residential isolation in the neighborhood of the child's birth. Since we do not force all main effects into each subset, $\mathcal{S}_{small}$ does not contain an estimated effect of \texttt{RI\_birth} for other race groups. Other interactions, such as the positive effects of \texttt{Hisp} and \texttt{NH Black} by \texttt{EconDisadv} and \texttt{Hisp} $\times$ \texttt{Blood\_lead}, must also be interpreted carefully: the vast majority of Hispanic and non-Hispanic Black students belong to the \texttt{EconDisadv} group and have much higher blood lead levels on average, while each of  \texttt{NH Black}, \texttt{EconDisadv}, and \texttt{Blood\_lead} has a strong negative main effect.

\begin{figure}[h!]
\begin{center}
\includegraphics[width=.9\textwidth]{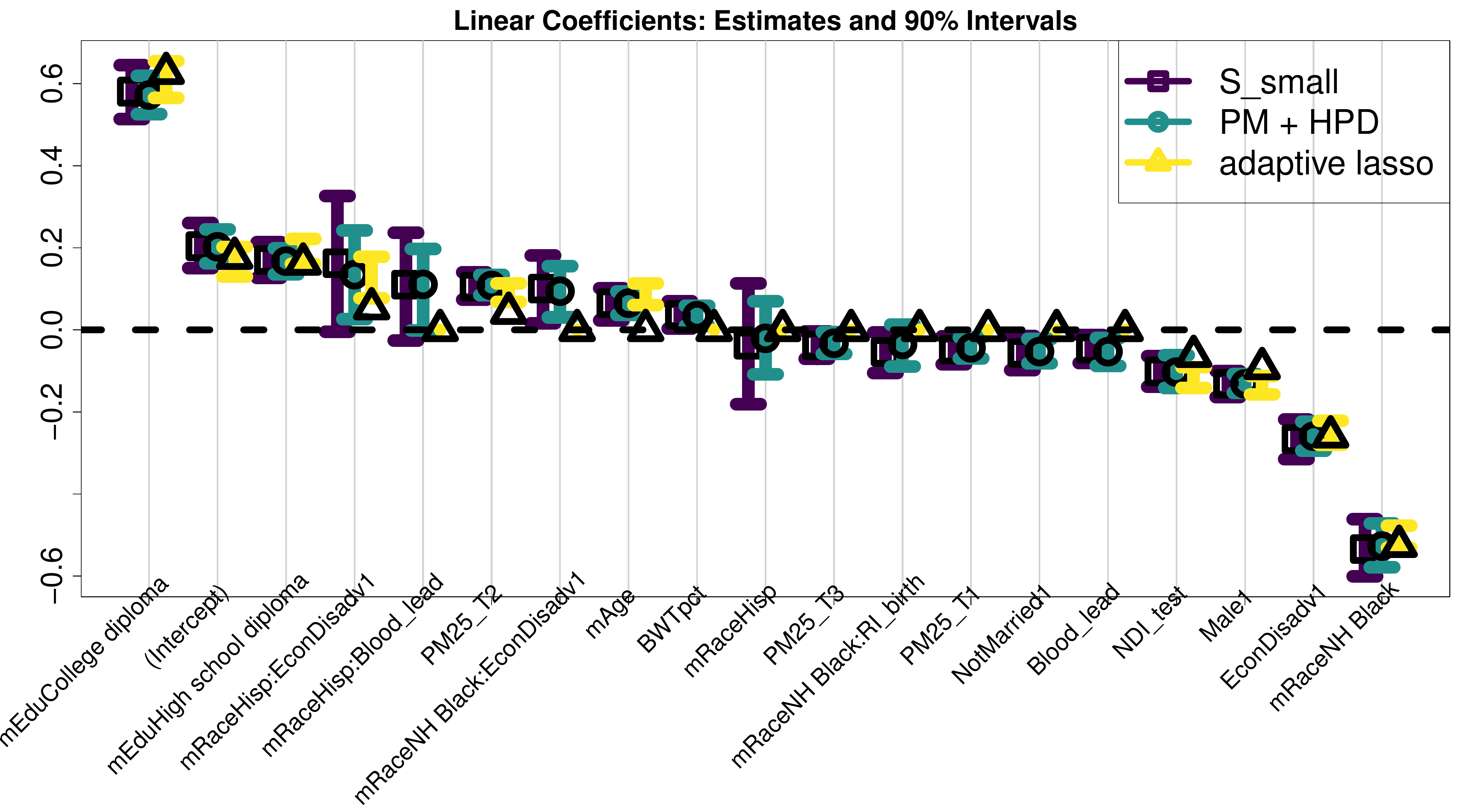}
\end{center}
\caption{ \small
Estimated linear effects and 90\% intervals for the variables in $\mathcal{S}_{small}$ based on the proposed approach, model $\mathcal{M}$, and the adaptive lasso. 
\label{fig:coef-lm}}
\end{figure}

The results are not particularly sensitive to $m_k$  or $\varepsilon$. 
When $m_k = 15$, there are $\vert \mathbb{S} \vert = 436$ candidate subsets and $\vert \mathbb{A}_{0, 0.1}\vert = 197$ acceptable subsets. The variable importance metrics broadly agree with Figures~\ref{fig:vi-lm}~and~\ref{fig:co-vi-lm}, while $\mathcal{S}_{small}$---and therefore Figure~\ref{fig:coef-lm}---is unchanged. When $\varepsilon = 0.2$ (and $m_k=100$ as before), the acceptable family reduces slightly to $\vert \mathbb{A}_{0, 0.1}\vert = 977$ members and $\mathcal{S}_{small}$ differs only in the addition of \texttt{Smoker} and \texttt{Hisp} $\times$ \texttt{NDI\_test}.



\subsection{Out-of-sample prediction}\label{sec-oos}
We evaluate the predictive capabilities of the proposed approach for 20 training/testing splits of the NC education data. The same competing methods are adopted from Section~\ref{sims}, including the distinct search strategies for collecting near-optimal subsets. Since $\mathcal{S}_{small}$ and $\mathcal{S}_{min}$ are reasonably robust to $m_k$, we select $m_k = 15$ for computational efficiency. In addition, we include the acceptable family defined by setting $\{\bm{\tilde x}_i\}_{i=1}^{\tilde n}$ to be the testing data covariate values (\texttt{bbound(Xtilde)}), which is otherwise identical to \texttt{bbound(bayes)}. Root mean squared prediction errors (RMSPEs)  are used for evaluation.

The results are presented in Figure~\ref{fig:app-oos}. Among single subset methods, $\mathcal{S}_{small}$ outperforms all competitors---including the classical \texttt{forward} and \texttt{backward} estimators and the smallest acceptable subsets from  \texttt{lasso(bayes)}, \texttt{forward(bayes)}, and \texttt{backward(bayes)} discussed in Section~\ref{sims} (not shown). The adaptive lasso selects fewer variables and is not competitive. Among search methods, Figure~\ref{fig:app-oos} confirms the results from the simulation study: the proposed BBA strategy (\texttt{bbound(bayes)}) identifies  10-25 times the number of subsets as the other search strategies, yet does not sacrifice any predictive accuracy in this expanded collection. Clearly, \texttt{bbound(bayes)} provides a more complete predictive picture, as the competing search strategies omit a massive number of subsets that \emph{do} offer near-optimal prediction. The acceptable family based on the out-of-sample covariates \texttt{bbound(Xtilde)} is much larger, which is reasonable: the subsets are computed and evaluated on covariates for which the accompanying response variables are not available. This collection of subsets sacrifices some predictive accuracy relative to the other search strategies, yet still outperforms the adaptive lasso---yet another testament to the importance of the regularization induced by $\mathcal{M}$.

\begin{figure}[h!]
\begin{center}
\includegraphics[width=.7\textwidth]{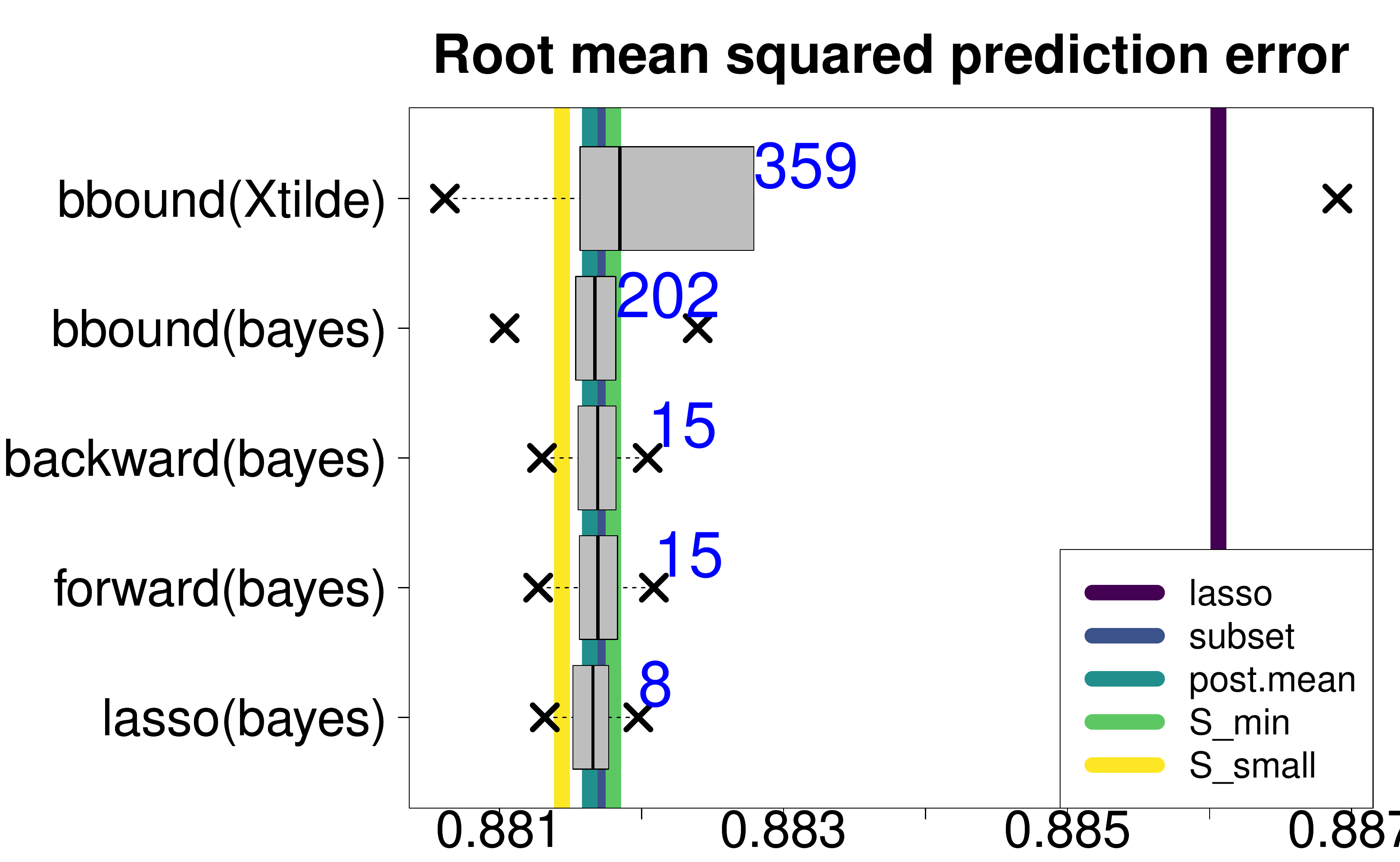}
\end{center}
\caption{ \small
Root mean squared prediction errors (RMSPEs) across 20 training/testing splits of the NC education data. The boxplots summarize the RMSPE quantiles for the subsets within each acceptable family, while the vertical lines denote RMSPEs of competing methods. The average size of each acceptable family is annotated. The proposed BBA search returns vastly more subsets that remain highly competitive, while the accompanying $\mathcal{S}_{small}$ subset performs best overall. 
\label{fig:app-oos}}
\end{figure} 

Although the RMSPE differences appear to be small, minor improvements are practically relevant: even a single point on a standardized test score can be the difference between progression to the next grade level (on the low end) or eligibility for intellectually gifted programs (on the high end). More generally, education data are prone to weak signals and small effect sizes, which are conditions under which \emph{many} methods may offer similar predictive performance. Nonetheless, our primary goal is not to substantially \emph{improve} prediction, but rather to identify and analyze a large collection of near-optimal subsets.


The importance of the broader BBA search strategy is further highlighted in comparison  \texttt{lasso(bayes)}, which uses a lasso search path  for decision analytic Bayesian variable selection \citep{hahn2015decoupling,KowalPRIME2020}. In particular,  \texttt{lasso(bayes)} generates only $\vert \mathbb{S} \vert = 25$ candidate subsets and  $\vert \mathbb{A}_{0, 0.1}\vert = 9$ acceptable subsets. By comparison, \texttt{bbound(bayes)} returns more than 100 times the number of candidate subsets \emph{and} acceptable subsets. Figure~\ref{fig:app-oos} shows that the subsets omitted by \texttt{lasso(bayes)} yet discovered by \texttt{bbound(bayes)} are indeed near-optimal. Further, the restrictive search path of \texttt{lasso(bayes)} does not guarantee greater stability: the smallest acceptable subsets for  \texttt{bbound(bayes)} and \texttt{lasso(bayes)} are nearly identical, yet the interquartile range of $\vert \mathcal{S}_{small} \vert$ across the 20 training/testing splits under \texttt{bbound(bayes)} is 0, while the same quantity under  \texttt{lasso(bayes)} is 2. Indeed, by this metric, \texttt{bbound(bayes)} actually provides the \emph{most stable} smallest acceptable subset across all search strategies considered.




\section{Discussion}\label{disc}
We developed decision analysis tools for Bayesian subset search, selection, and (co-) variable importance. The proposed strategy is outlined in Algorithm~\ref{alg:overview}.
Building from a Bayesian predictive model $\mathcal{M}$, we derived optimal linear actions for any subset of covariates. We explored the space of subsets using an adaptation of the branch-and-bound algorithm. After filtering to a manageable collection of promising subsets, we identified the \emph{acceptable family} of near-optimal subsets for linear prediction or classification. The acceptable family was summarized by a new (co-) variable importance metric---the frequency with which variables (co-) appear in all, some, or no acceptable subsets---and individual member subsets, including the ``best" and smallest subsets. Using the posterior predictive distribution from $\mathcal{M}$, we developed point and interval estimates for the linear coefficients of any subset. Simulation studies demonstrated better prediction, interval estimation, and variable selection for the proposed approach compared to existing Bayesian and frequentist selection methods---even for high-dimensional data with $p > n$.

\begin{algorithm}[ht]
\SetAlgoLined 
\begin{enumerate}
\item Fit a \emph{Bayesian predictive model} $\mathcal{M}$; 
\item Specify a \emph{loss function} for prediction or classification, including design points (covariate values) $\{\bm{\tilde x}_i\}_{i=1}^{\tilde n}$ and local weights $\{\omega(\bm{\tilde x}_i)\}_{i=1}^{\tilde n}$;
\item Filter to a family of \emph{candidate subsets} of covariates: 
\begin{enumerate}
\item Screen to the best $s_{max} \le p$ covariates based on $\mathcal{M}$;
\item Apply the branch-and-bound algorithm to select the best $m_k \le {p \choose k}$ subsets of each size $k=1,\ldots,p$;
\end{enumerate}
\item Collect the \emph{acceptable family} of subsets that offer near-optimal prediction or classification;
\item Summarize the acceptable family via \emph{(co-) variable importance}, the \emph{best} subset, and the \emph{smallest} subset.
\item  Obtain posterior predictive samples of the coefficients $\bm{\tilde \delta}_\mathcal{S}$ for any subset $\mathcal{S}$ of interest.
\end{enumerate}
 \caption{Bayesian subset selection for interpretable prediction and classification} \label{alg:overview}
\end{algorithm}

We applied these tools to a large education dataset to study the factors that predict educational outcomes. The analysis identified several keystone covariates that appeared in (almost) every near-optimal subset, including environmental exposures, economic and social factors, and demographic information. The co-variable importance metrics highlighted an interesting phenomenon where certain pairs of covariates belonged to many acceptable subsets, yet rarely appeared in the same acceptable subset. Hence, these variables are effectively interchangeable for prediction, which provides valuable context for interpreting their respective effects. We showed that the smallest acceptable subset offers excellent prediction of end-of-grade reading scores and classification of at-risk students using substantially fewer covariates. The corresponding linear coefficients described new and important effects, for example that greater racial residential isolation among non-Hispanic Black students is predictive of lower reading scores.  However, our results also caution against overreliance on any particular subset: we identified 
over 200 distinct subsets of variables that offer near-optimal out-of-sample predictive accuracy.
  


Future work will attempt to generalize these tools via the loss functions and the  actions. Alternatives to squared error and cross-entropy loss can be incorporated with an IRLS approximation strategy similar to Section~\ref{sec-logit}, which would maintain the methodology and algorithmic infrastructure from the proposed approach. Similarly, the class of parametrized actions can be expanded to include nonlinear predictors, such as trees or additive models, with acceptable families constructed in the same way. 



\acks{Research was sponsored by the Army Research Office (W911NF-20-1-0184) and the National Institute of Environmental Health Sciences of the National Institutes of Health (R01ES028819). The content, views, and conclusions  contained in this document are those of the authors and should not be interpreted as representing the official policies, either expressed or implied, of the Army Research Office, the North Carolina Department of Health and Human Services, Division of Public Health, the National Institutes of Health, or the U.S. Government. The U.S. Government is authorized to reproduce and distribute reprints for Government purposes notwithstanding any copyright notation herein.}


\appendix

\numberwithin{equation}{section}
\numberwithin{figure}{section}
\numberwithin{table}{section}

\section{Approximations for out-of-sample quantities}\label{sir-approx}
The acceptable family $\mathbb{A}_{\eta, \varepsilon}$ in \eqref{accept} is based on the out-of-sample predictive discrepancy metric $\widetilde{{D}}_{\mathcal{S}_1,\mathcal{S}_2}^{out}$ and the best subset $\bm{\hat \delta}_{\mathcal{S}_{min}}$ in \eqref{best}. These quantities both depend on the out-of-sample {empirical} and {predictive} losses ${L}_{\mathcal{S}}^{out}(k)$ and $\widetilde{{L}}_{\mathcal{S}}^{out}(k)$, respectively, in
\eqref{out-loss-k}. Hence, it is required to compute (i) the optimal action on the training data, $\bm{\hat \delta}_{\mathcal{S}}^{-\mathcal{I}_k} \coloneqq \arg\min_{\bm\delta_\mathcal{S}} \mathbb{E}_{\bm{\tilde y} | \bm y^{-\mathcal{I}_k}} \mathcal{L}(\{{\tilde y}_i\}_{i \in \mathcal{I}_k}, \bm \delta_\mathcal{S}) $, and (ii) samples from the out-of-sample predictive distribution $\tilde y_i^{-\mathcal{I}_k} \sim p_{\mathcal{M}}({\tilde y}_i | \bm y^{-\mathcal{I}_k})$. Because of the simplifications afforded by \eqref{sq-loss-action-1}-\eqref{sq-loss-action-2} and \eqref{opt-logit},  computing $\bm{\hat \delta}_{\mathcal{S}}^{-\mathcal{I}_k}$ only requires the out-of-sample expectations $\hat y_i^{-\mathcal{I}_k} \coloneqq  \mathbb{E}_{\bm{\tilde y} | \bm y^{-\mathcal{I}_k}} \{\bm{\tilde y}(\bm{\tilde x}_i)\}$ or $\hat h_i^{-\mathcal{I}_k}  \coloneqq \mathbb{E}_{\bm{\tilde y} | \bm y^{-\mathcal{I}_k}} h( \tilde y_i)$ for the classification setting. For many models $\mathcal{M}$, there is a further simplification that $\mathbb{E}_{\bm{\tilde y} | \bm y^{-\mathcal{I}_k}} \{\bm{\tilde y}(\bm{\tilde x}_i)\} = \mathbb{E}_{\bm \theta | \bm y^{-\mathcal{I}_k}} [\mathbb{E}_{\bm{\tilde y} | \bm \theta} \{\bm{\tilde y}(\bm{\tilde x}_i)\}]$, where often $\mathbb{E}_{\bm{\tilde y} | \bm \theta} \{\bm{\tilde y}(\bm{\tilde x}_i)\}$ has an explicit form (such as in regression models). Absent such simplifications, the expectations can be computed by averaging the draws of $\tilde y_i^{-\mathcal{I}_k} \sim p_{\mathcal{M}}({\tilde y}_i | \bm y^{-\mathcal{I}_k})$.

Although these terms can be computed by repeatedly re-fitting the Bayesian model $\mathcal{M}$ for each training/validation split $k=1,\ldots,K$, such an approach is computationally demanding. Instead, we apply a sampling-importance resampling (SIR) algorithm  to approximate these out-of-sample quantities. Notably, the SIR algorithm requires only a single fit of model $\mathcal{M}$ to the complete data---which is already needed for posterior inference---and therefore contributes minimally to the computational cost of the aggregate analysis.

The details are provided in Algorithm~\ref{alg:loo}.  By construction, the samples $\{{\tilde y}_i^{\tilde s}\}_{\tilde s = \tilde s_1}^{\tilde S}$ are from the out-of-sample predictive distribution $ p_{\mathcal{M}}({\tilde y}_i | \bm y^{-\mathcal{I}_k})$.  Based on Algorithm~\ref{alg:loo}, it is straightforward to compute draws of $\widetilde{{D}}_{\mathcal{S}_1,\mathcal{S}_2}^{out}$ for any $\mathcal{S}_1, \mathcal{S}_2 \in \mathbb{S}$, as well as the best subset $\bm{\hat \delta}_{\mathcal{S}_{min}}$ in \eqref{best}. Therefore,  the acceptable family $\mathbb{A}_{\eta, \varepsilon}$ is readily computable for any $\eta, \varepsilon$. By default,  we use $\tilde S = \lfloor S/2 \rfloor$ SIR samples.

\begin{algorithm}[h]
\SetAlgoLined
\begin{enumerate}
\item  Obtain posterior samples $\{\bm \theta^s\}_{s=1}^S \sim p_\mathcal{M}(\bm \theta | \bm y)$; 
\item For each training set $k = 1,\ldots,K$:
\begin{enumerate}
\item Compute $\log w_k^s \stackrel{c}{=} - \log p_\mathcal{M}(\bm y^{\mathcal{I}_k} | \bm \theta^s) = - \sum_{i \in \mathcal{I}_k} \log p_\mathcal{M}(\bm y_i | \bm \theta^s)$ (up to a constant);
\item Sample $\{\tilde s_1,\ldots, \tilde s_{\tilde S}\}$ without replacement from $\{1,\ldots, S\}$  with probability weights $\{w_k^1,\ldots, w_k^S\}$; 
\item Sample ${\tilde y}_i^{\tilde s} \sim p_\mathcal{M}( {\tilde y}_i | \bm \theta^{\tilde s})$ for $\tilde s = \tilde s_1,\ldots, \tilde s_{\tilde S}$ and  $i =1,\ldots,n$; 
\item Compute $\hat y_j^{-\mathcal{I}_k} \approx {\tilde S}^{-1} \sum_{\tilde s = \tilde s_1}^{\tilde S} {\tilde y}_j^{\tilde s}$  for $j \not \in \mathcal{I}_k$;
\item Compute $\bm{\hat \delta}_{\mathcal{S}}^{-\mathcal{I}_k}$ for each $\mathcal{S} \in \mathbb{S}$ by solving \eqref{sq-loss-action-2} using the training data covariates $\{\bm x_i\}_{i \not \in \mathcal{I}_k}$, the weights $\{\omega(\bm x_i)\}_{i \not \in \mathcal{I}_k}$, and the pseudo-data $\{\hat y_j^{-\mathcal{I}_k}\}_{j\not\in\mathcal{I}_k}$; 
\item Compute ${L}_{\mathcal{S}}^{out}(k)$ and $\{\widetilde{{L}}_{\mathcal{S}}^{out, \tilde s}(k)\}_{\tilde s = \tilde s_1}^{\tilde S}$ in \eqref{out-loss-k} using $\bm{\hat \delta}_{\mathcal{S}}^{-\mathcal{I}_k}$ and $\{\tilde y_i^{\tilde s}\}_{\tilde s = \tilde s_1}^{\tilde S}$; 
\end{enumerate}
\item Compute ${L}_{\mathcal{S}}^{out} = K^{-1} \sum_{k=1}^K {L}_{\mathcal{S}}^{out}(k)$ and $ \widetilde{{L}}_\mathcal{S}^{out, \tilde s}  = K^{-1} \sum_{k=1}^K\widetilde{{L}}_\mathcal{S}^{out, \tilde s}(k)$  for $\tilde s = \tilde s_1,\ldots,\tilde S$.
\end{enumerate}
 \caption{Out-of-sample predictive evaluations.} \label{alg:loo}
\end{algorithm}

Algorithm~\ref{alg:loo} recycles computations: steps 1-2(d) are shared among all subsets $\mathcal{S}$ and loss functions $L$. Hence, the algorithm is efficient even when the number of candidate subsets $\vert \mathbb{S}\vert$ is large---and notably does not require re-fitting the Bayesian model $\mathcal{M}$. Modifications for classification (Section~\ref{sec-logit}) are straightforward: steps (c), (d), and (e) are replaced by  (c$'$) $\bm{\tilde y}_i^{\tilde s} \sim p_\mathcal{M}( {\tilde y}_i | \bm \theta^{\tilde s})$ for $\tilde s = \tilde s_1,\ldots, \tilde s_{\tilde S}$ and  $i =1,\ldots,n$; (d$'$) $\hat h_j^{-\mathcal{I}_k} \approx {\tilde S}^{-1} \sum_{\tilde s = \tilde s_1}^{\tilde S} h(\bm{\tilde y}_j^{\tilde s})$ for $j \not \in \mathcal{I}_k$;
 and (e$'$) compute $\bm{\hat \delta}_{\mathcal{S}}^{-\mathcal{I}_k}$ by solving \eqref{opt-logit} using the training data covariates $\{\bm x_i\}_{i \not \in \mathcal{I}_k}$, the weights $\{\omega(\bm x_i)\}_{i \not \in \mathcal{I}_k}$, and the pseudo-data $\{\hat h_j^{-\mathcal{I}_k}\}_{j \not \in \mathcal{I}_k}$.

\section{Simulation study for classification} \label{sims-class}
The synthetic data-generating process for classification mimics that for prediction: the only difference is that the data are generated as $y_i \stackrel{indep}{\sim}\mbox{Bernoulli}(\pi_i^*)$ with  $\pi_i^* \coloneqq \{1 + \exp(-y_i^*)\}^{-1}$ and $y_i^* \coloneqq \bm x_i'\bm\beta^*$ as before. For the Bayesian model $\mathcal{M}$, we use a logistic regression model with horseshoe priors and estimated using \texttt{rstanarm} \citep{rstanarm}. The competing estimators are constructed similarly as before, now using cross-entropy loss \eqref{cross-ent} for the proposed approach and the logistic likelihood for the adaptive lasso. 

The classification performance is evaluated using cross-entropy loss for $\pi_i^*$ in Figure~\ref{fig:sims-class-loss} (top), using the same competing search methods as in Figure~\ref{fig:sims-pred}. As in the regression case, the proposed \texttt{bbound(bayes)} search procedure returns vastly more subsets in the acceptable family, yet maintains excellent classification performance within this collection. In addition, classification based on $\mathcal{S}_{min}$ and $\mathcal{S}_{small}$ substantially outperform the adaptive lasso. We also include the 90\% interval comparisons in Figure~\ref{fig:sims-class-loss} (bottom), which confirm the results from the prediction scenario: $\mathcal{S}_{small}$ and \cite{Zhao2017} (modified for the logistic case) achieve the nominal coverage with the narrowest intervals.


\begin{figure}[h!]
\begin{center}
\includegraphics[width=.49\textwidth]{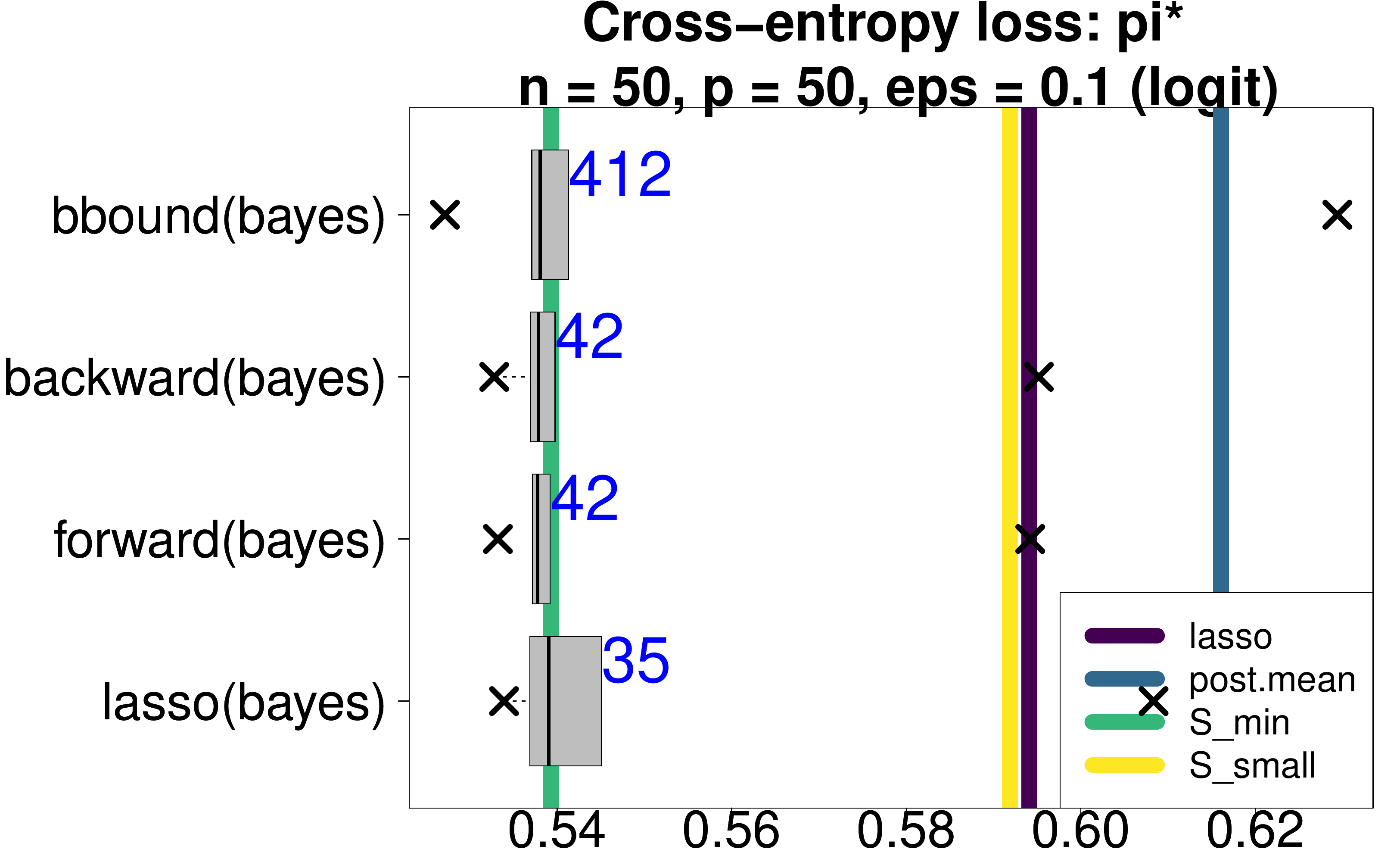}
\includegraphics[width=.49\textwidth]{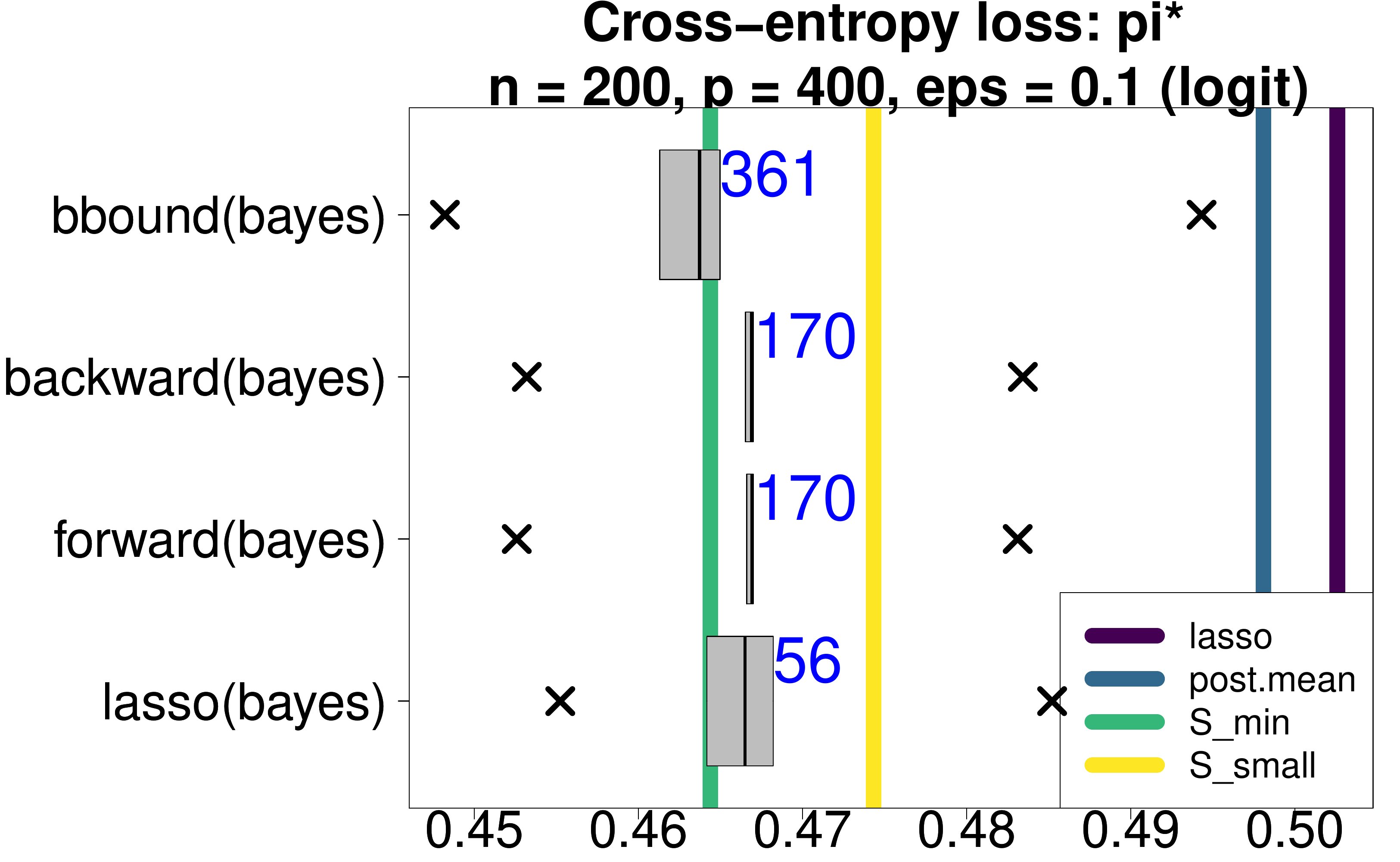} \vspace{2mm}

\includegraphics[width=.49\textwidth]{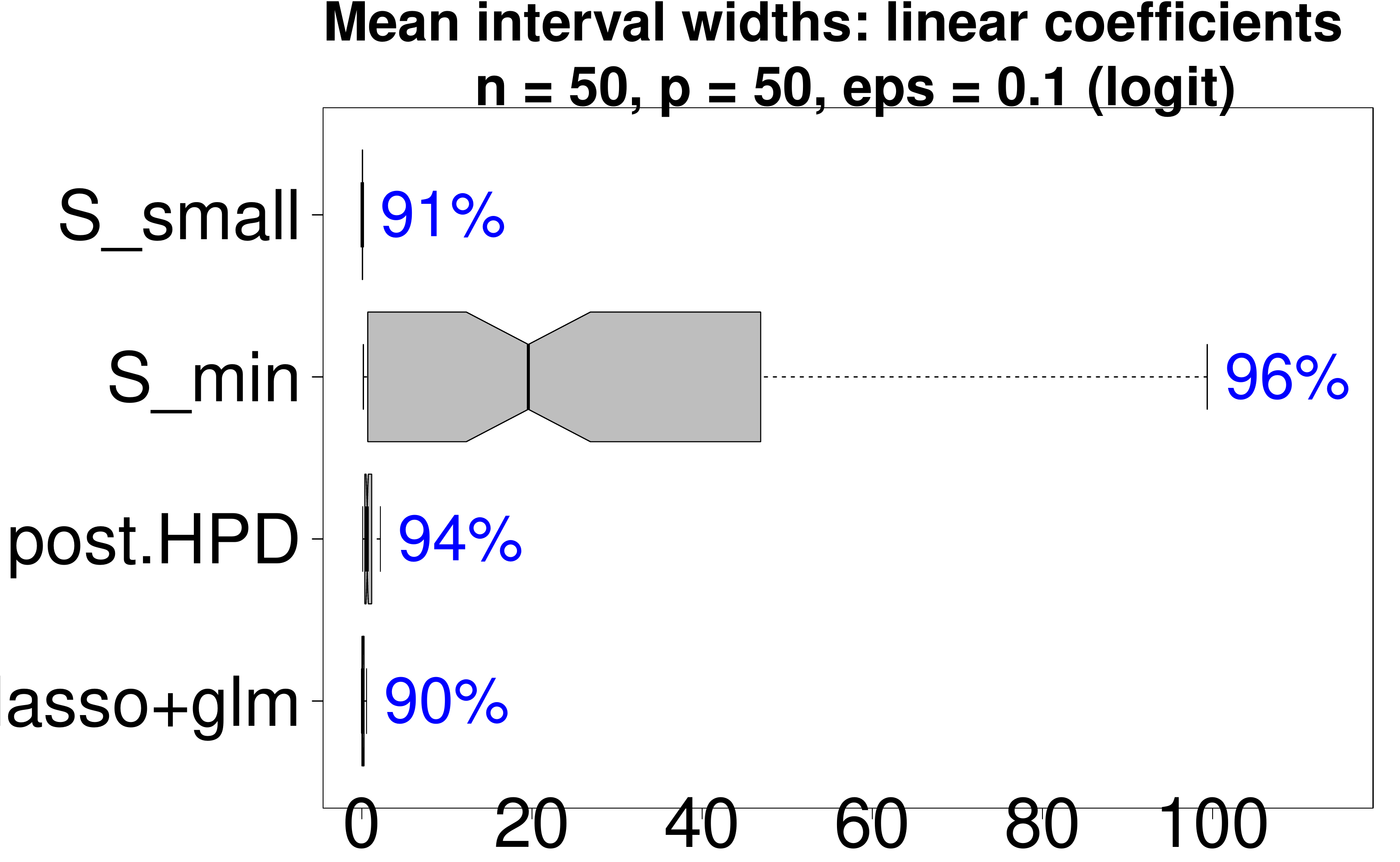}
\includegraphics[width=.49\textwidth]{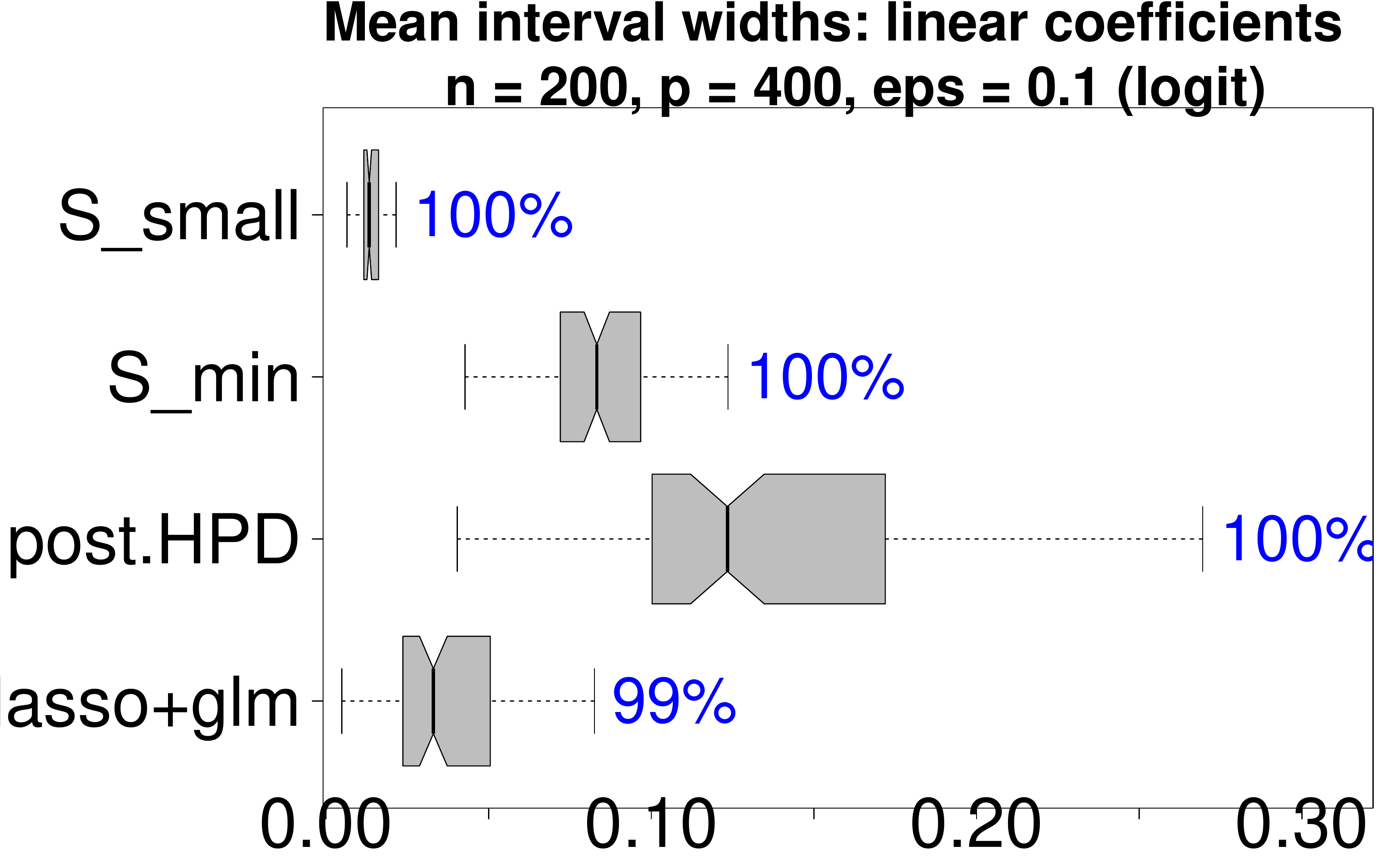}
\end{center}
\caption{ \small
{\bf Top:} cross-entropy loss for $\pi_i^*$. The boxplots summarize the cross-entropy quantiles for the subsets within each acceptable family, while the vertical lines denote cross-entropy of competing methods. The average size of  each acceptable family is annotated. {\bf Bottom:} Mean interval widths (boxplots) with empirical coverage (annotations) for $\bm \beta^*$. 
\label{fig:sims-class-loss}}
\end{figure}

 \section{Simulation study for sensitivity analysis}\label{sims-app}

We revisit the comparisons in Figure~\ref{fig:sims-pred} to consider sensitivities to $\varepsilon$ and $\{\bm{\tilde x}_i\}_{i=1}^{\tilde n}$. In Figure~\ref{fig:sims-pred-eps}, we report results for the acceptable families with  $\varepsilon = 0.01$ and $\varepsilon = 0.2$; the frequentist estimators and \texttt{post.mean} are unchanged from Figure~\ref{fig:sims-pred}.  Compared also to the analogous case in Figure~\ref{fig:sims-pred} with $\varepsilon = 0.1$, we see the expected ordering in the cardinalities of the acceptable families: larger values of $\varepsilon$ produce a more stringent criterion and therefore fewer acceptable subsets. Notably, the main results are not sensitive to the choice of $\varepsilon \in \{0.01, 0.1, 0.2\}$, and $\mathcal{S}_{small}$ performs exceptionally well in all cases.

\begin{figure}[h!]
\begin{center}

\includegraphics[width=.49\textwidth]{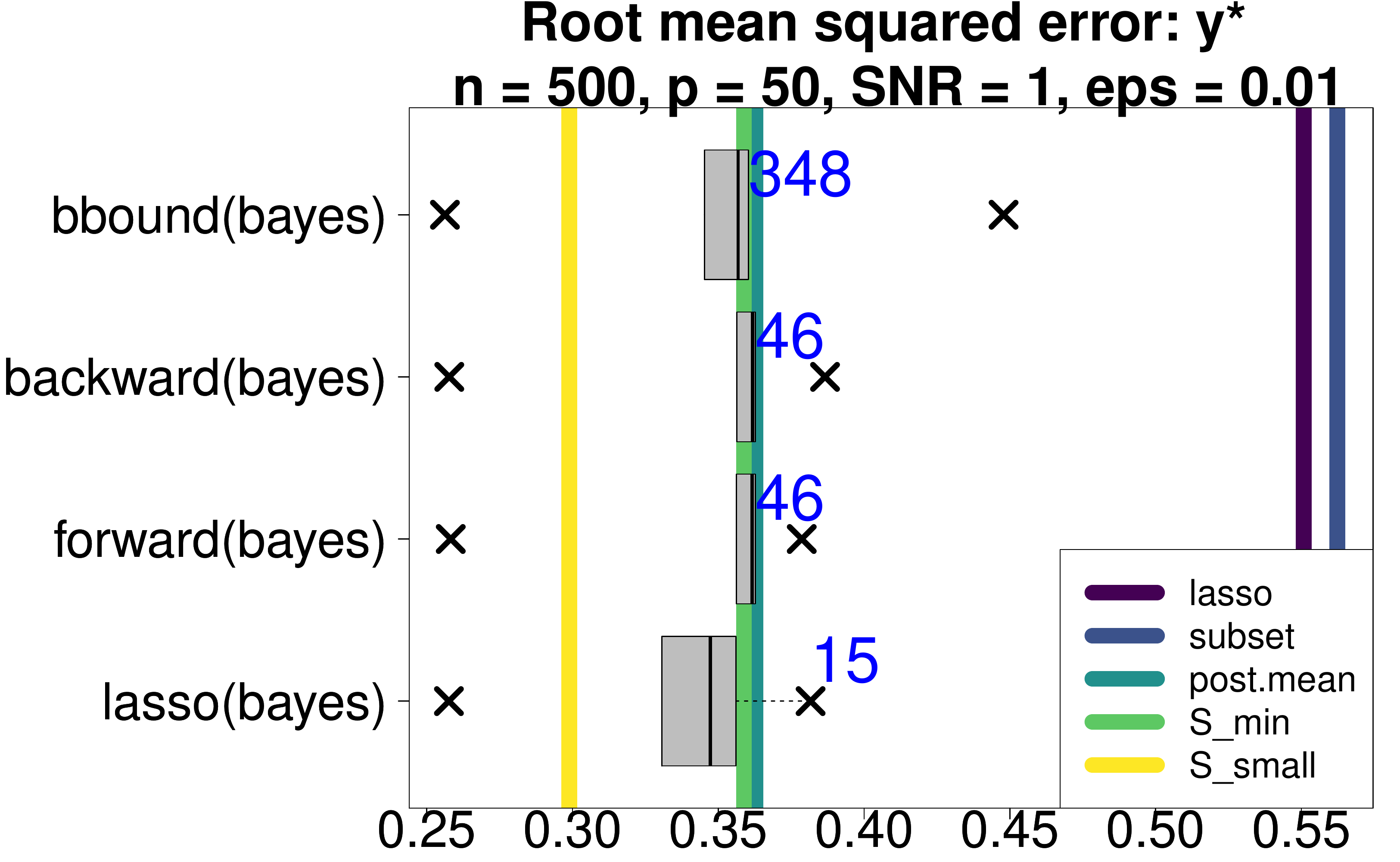}
\includegraphics[width=.49\textwidth]{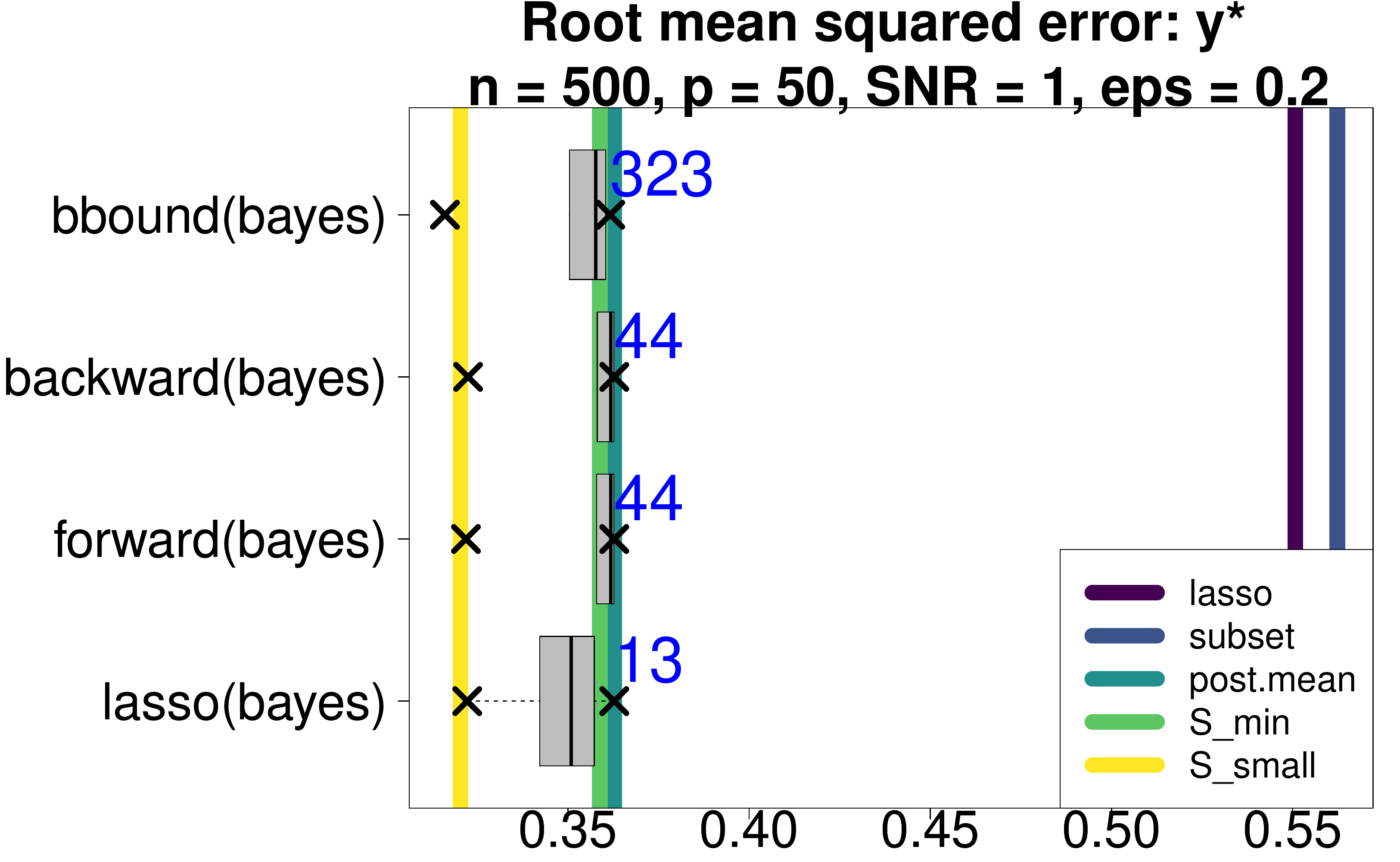}

\end{center}
\caption{ \small
Root mean squared errors (RMSEs) for predicting $y^*$. The boxplots summarize the RMSE quantiles for the subsets within each acceptable family; here, the acceptable families use $\varepsilon = 0.01$ (left) and $\varepsilon = 0.20$ (right). The vertical lines denote RMSEs of competing methods and the average size of  each acceptable family is annotated. The results suggest only minor sensitivity to the choice of $\varepsilon$. 
\label{fig:sims-pred-eps}}
\end{figure}

Next, we again modify the comparisons in Figure~\ref{fig:sims-pred} to evaluate prediction at a \emph{new} collection of covariates $\{\bm{\tilde x}_i\}_{i=1}^{\tilde n}$. We allow variations in the covariate data-generating process---either correlated standard normals from Section~\ref{sims} or iid Uniform(0,1)---and whether the observed covariates $\{\bm{ x}_i\}_{i=1}^{ n}$ follow the same data-generating process as the target covariates $\{\bm{\tilde x}_i\}_{i=1}^{\tilde n}$. These configurations generate four scenarios, all with $\{\bm{\tilde x}_i\}_{i=1}^{\tilde n} \ne \{\bm{ x}_i\}_{i=1}^{n}$. The competing acceptable families are each generated using this choice of $\{\bm{\tilde x}_i\}_{i=1}^{\tilde n}$ and the point predictions are evaluated for $y^*(\bm{\tilde x}_i) = \bm{\tilde x}_i '\bm \beta^*$ for $i=1,\ldots, \tilde n = 1000$. The results are in Figure~\ref{fig:sims-pred-xtilde}.  Naturally, performance worsens across the board when observed and target covariates differ in either their values ($\{\bm{x}_i\}_{i=1}^{n} \ne \{\bm{\tilde x}_i\}_{i=1}^{\tilde n}$) or their distributions. When the target covariates are iid uniform (top right, bottom left), all methods actually perform  better---likely due to the light-tailed (bounded) and independent covariate values. Yet perhaps most remarkably, the relative performance among the methods remains consistent, while $\mathcal{S}_{small}$ outperforms all competitors in all but one scenario---and decisively outperforms the frequentist methods in all cases.

\begin{figure}[h!]
\begin{center}

\includegraphics[width=.49\textwidth]{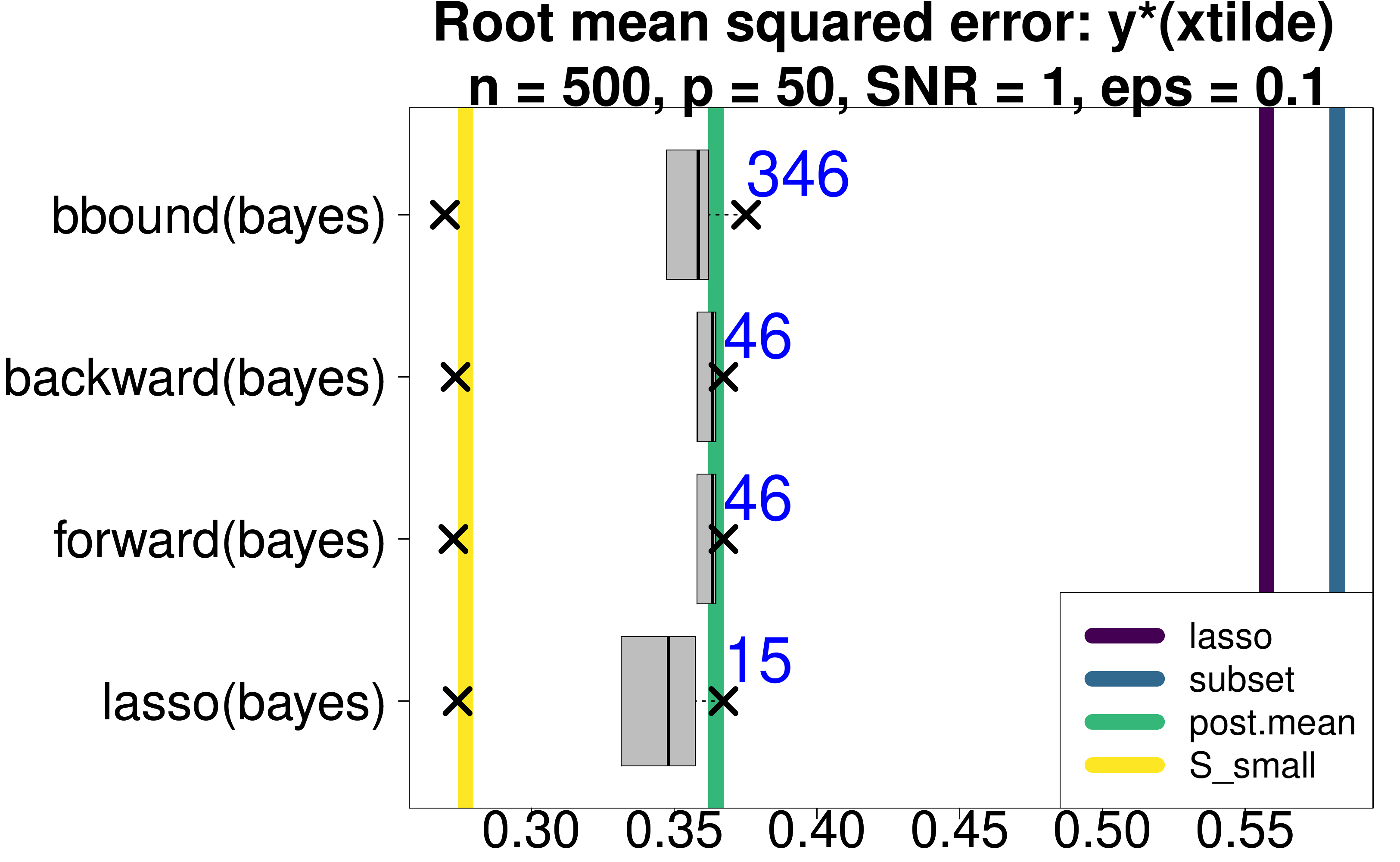}
\includegraphics[width=.49\textwidth]{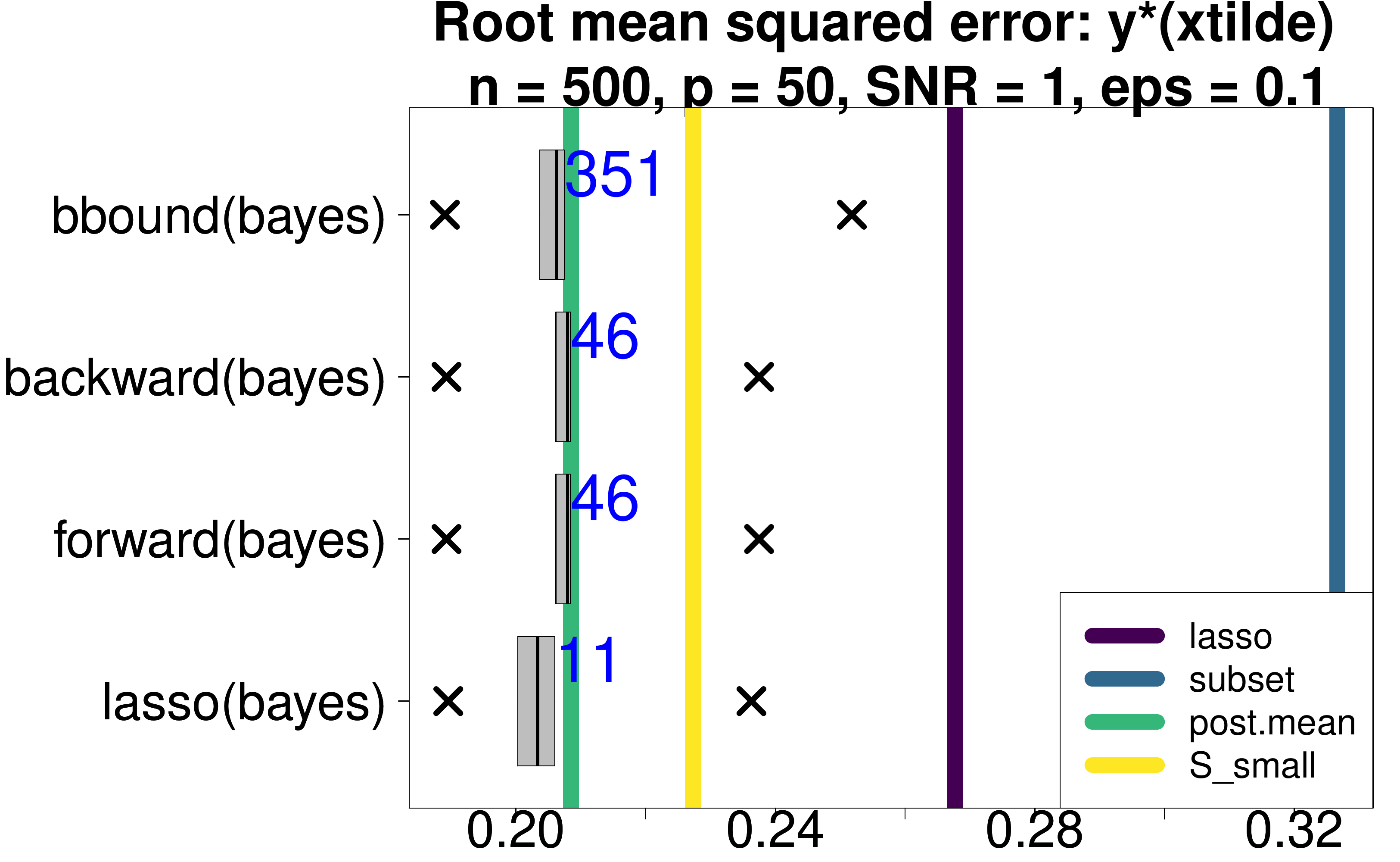} \vspace{2mm}

\includegraphics[width=.49\textwidth]{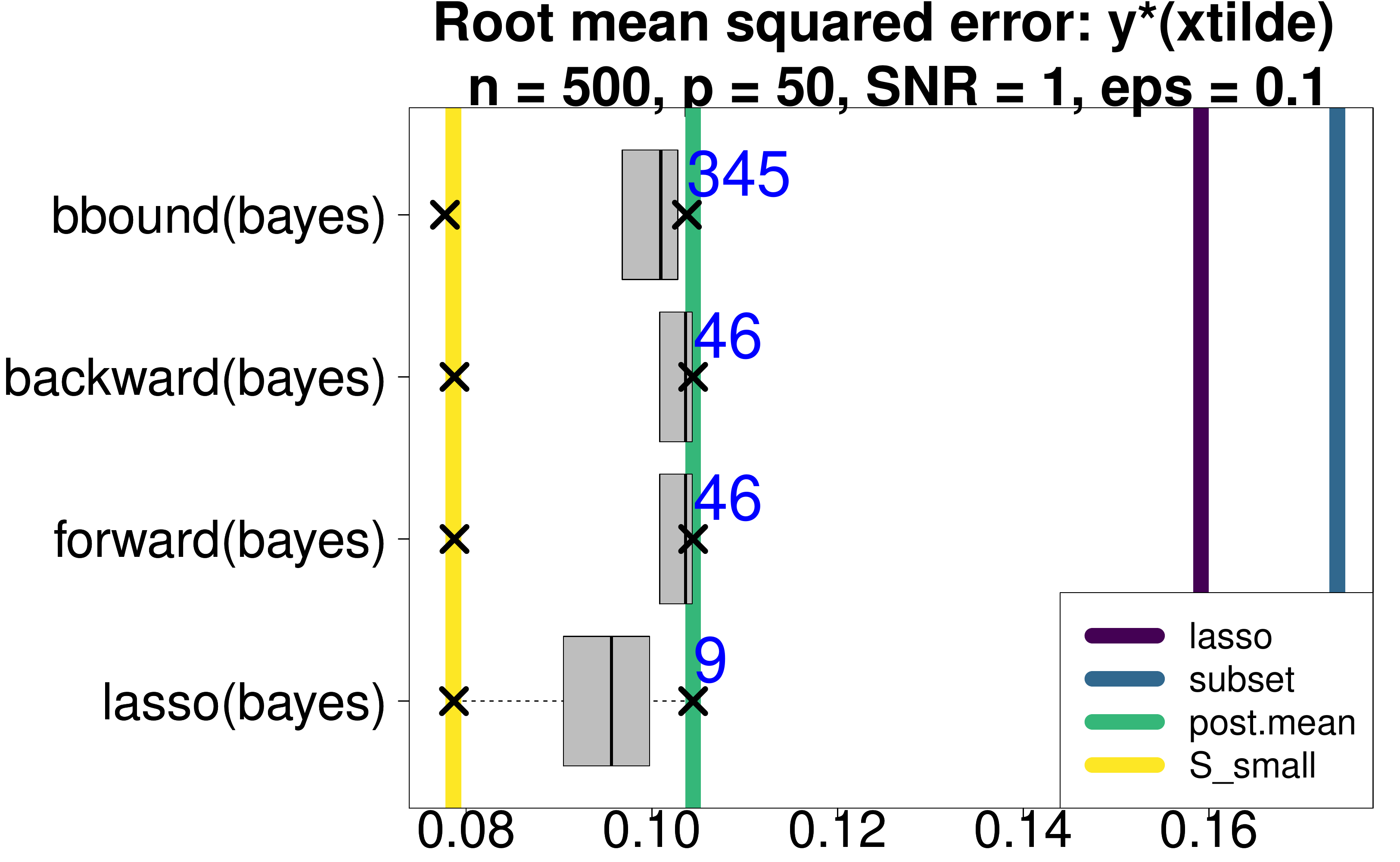}
\includegraphics[width=.49\textwidth]{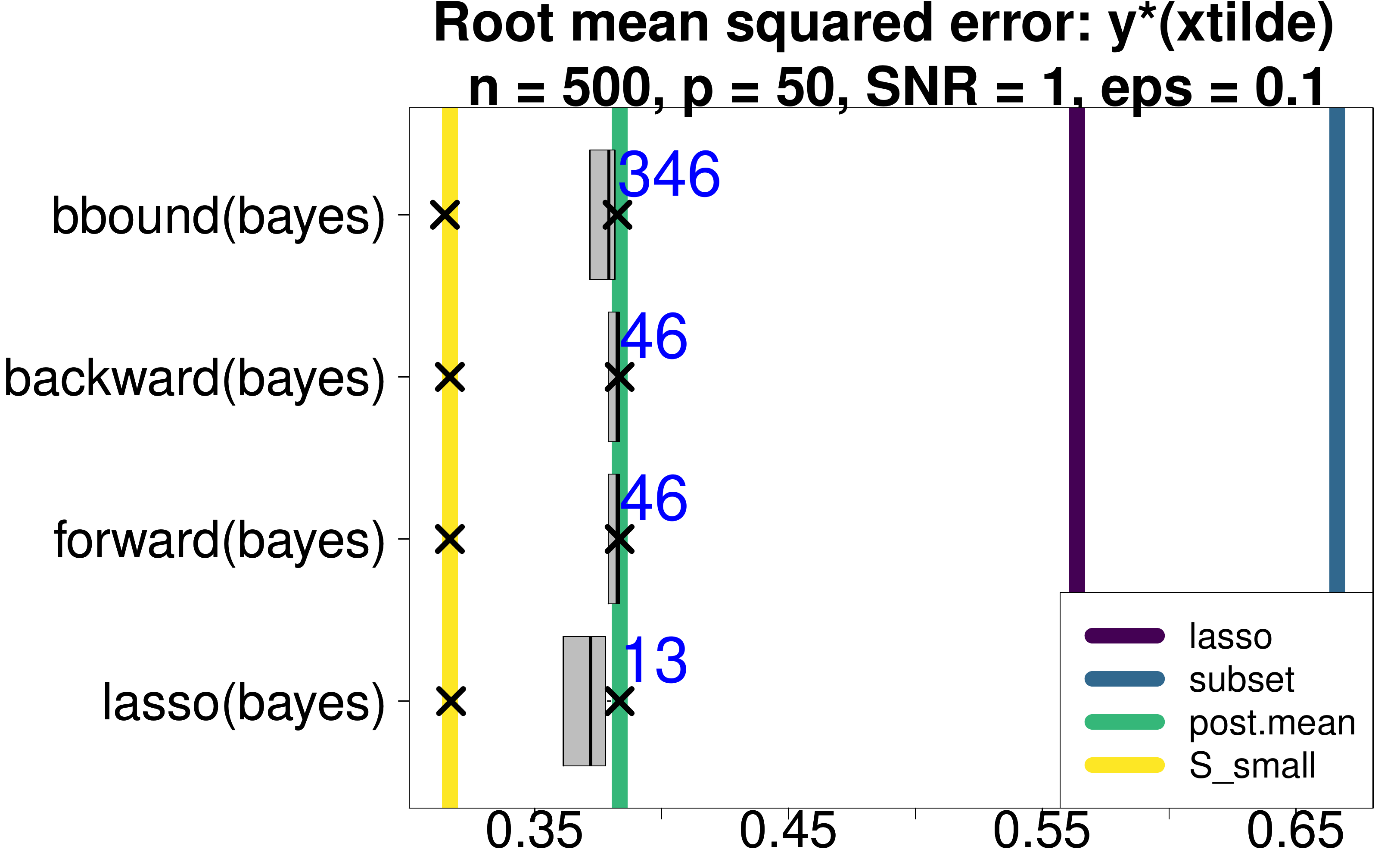}
\end{center}
\caption{ \small
Root mean squared errors (RMSEs) for predicting $y^*(\bm{\tilde x}_i) = \bm{\tilde x}_i' \bm \beta^*$ for covariates $\{\bm{\tilde x}_i\}_{i=1}^{\tilde n =1000}$  when the observed covariates $\{\bm{x}_i\}_{i=1}^n$ are correlated standard normals (top) or iid uniforms (bottom) and when the target covariates $\{\bm{\tilde x}_i\}_{i=1}^{\tilde n =1000}$ have the same distribution (left) or a different distribution (right) from $\{\bm{x}_i\}_{i=1}^n$. The boxplots summarize the RMSE quantiles for the subsets within each acceptable family. The vertical lines denote RMSEs of competing methods and the average size of  each acceptable family is annotated. 
\label{fig:sims-pred-xtilde}}
\end{figure}

\section{Subset selection for classifying at-risk students}\label{app-class}
We now focus the analysis on at-risk students via the functional $h(\tilde y_i) = \mathbb{I}\{\tilde y_i  \ge \tau_{0.1}\}$ under cross-entropy loss, where $\tau_{0.1}$ is the 0.1-quantile of the reading scores. The posterior predictive variables $\tilde y_i$ derive from the same model $\mathcal{M}$ as in Section~\ref{app}, which eliminates the need for additional model specification, fitting, and diagnostics. However, it is also possible to fit a  separate logistic regression model to the empirical functionals $h(y_i)$, which we do for the logistic adaptive lasso competitor.

It is most informative to study how the acceptable families $\mathbb{A}_{0, 0.1}$
differ for targeted classification relative to prediction. The comparative distributions of $\widetilde{{D}}_{\mathcal{S}_{min},\mathcal{S}}^{out}$ (Figures~\ref{fig:d-lm}~and~\ref{fig:d-logit}) clearly show that classification  admits smaller subsets capable  of matching the performance of $\mathcal{S}_{min}$  within 1\%.

\begin{figure}[h!]
\begin{center}
\includegraphics[width=.6\textwidth]{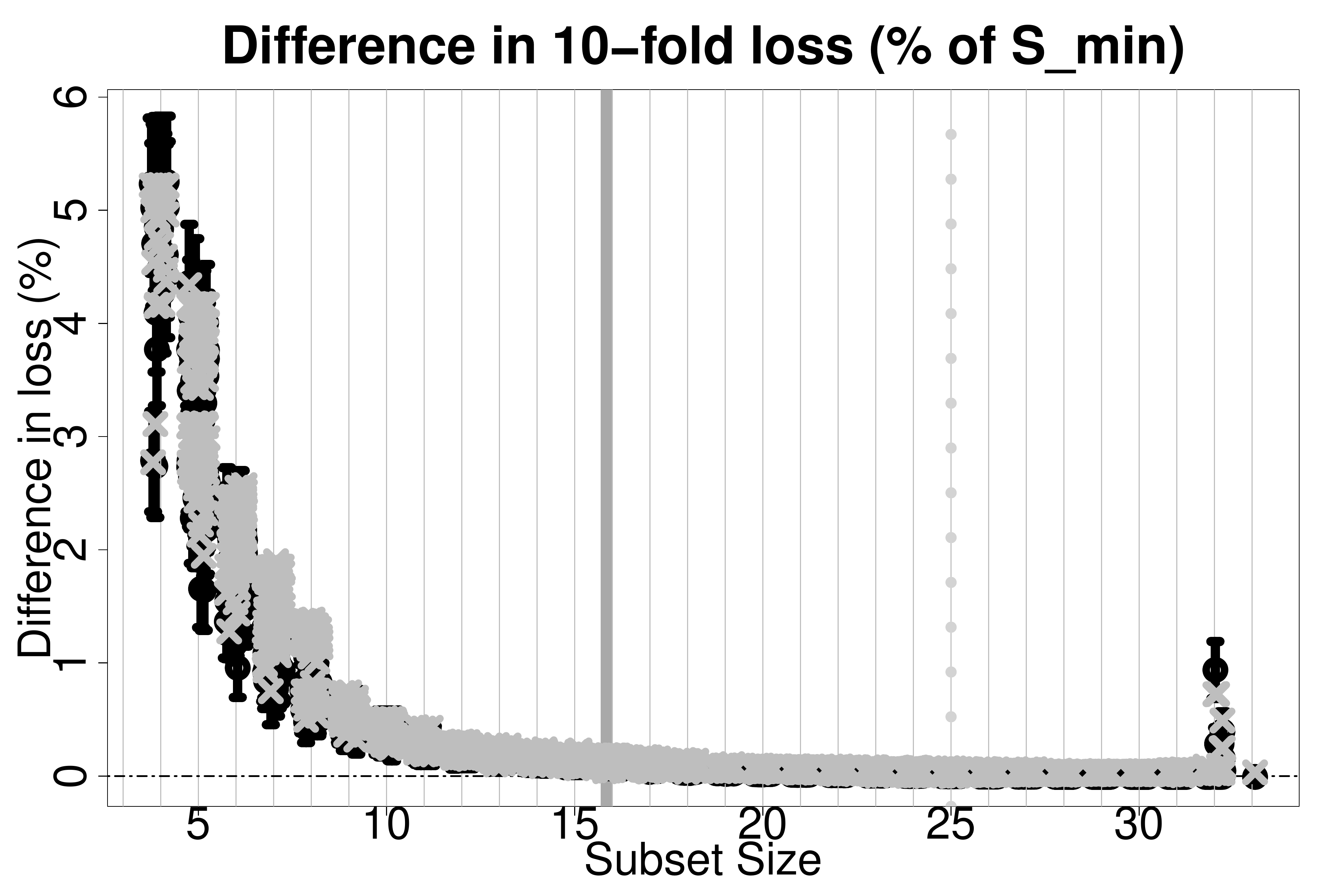}
\end{center}
\caption{ \small
The 80\% intervals (bars) and expected values (circles) for $\widetilde{{D}}_{\mathcal{S}_{min}, \mathcal{S}}^{out}$ with ${{D}}_{\mathcal{S}, \mathcal{S}_{min}}^{out}$ 
(x-marks) under cross-entropy for each subset size $\vert \mathcal{S}\vert$ with $\mathcal{S} \in \mathbb{S}$. We annotate $\mathcal{S}_{min}$ (dashed gray line) and $\mathcal{S}_{small}$ (solid gray line) 
 and jitter the subset sizes for clarity of presentation.  
\label{fig:d-logit}}
\end{figure}

 The variable importance metrics (Figures~\ref{fig:vi-logit}~and~\ref{fig:co-vi-logit}) identify the same keystone covariates as in the prediction setting. However, although there are many more acceptable subsets for classification ($\vert \mathbb{A}_{0, 0.1}\vert = 1547$) than for prediction ($\vert \mathbb{A}_{0, 0.1}\vert = 1183$),  each $\mbox{VI}_{\rm incl}(j)$ is generally much smaller for classification. 
Indeed, the smallest acceptable subset for classification is smaller, $\vert \mathcal{S}_{small}\vert = 16$, and is accompanied by 21 acceptable subsets of this size. In aggregate, these results suggest that near-optimal linear classification of at-risk students is achievable for a broader variety of smaller subsets of covariates.

\begin{figure}[h!]
\begin{center}
\includegraphics[width=.75\textwidth]{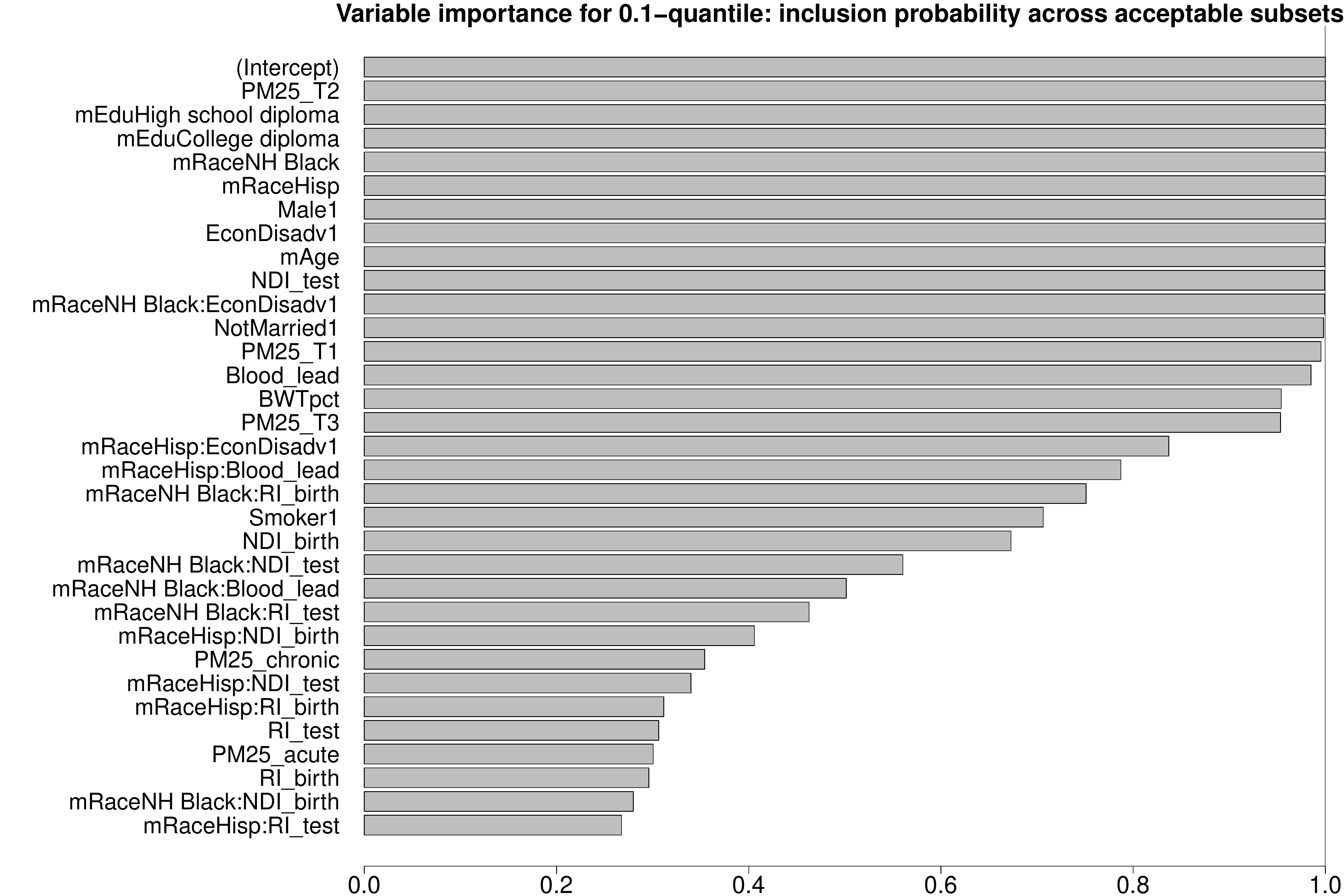}
\end{center}
\caption{ \small
Variable importance $\mbox{VI}_{\rm incl}(j)$ for classification of the 0.1-quantile ($m_k = 100$). 
\label{fig:vi-logit}}
\end{figure} 

Lastly, the selected covariates from $\mathcal{S}_{small}$ are plotted with 90\% intervals in Figure~\ref{fig:coef-logit}. Compared to the prediction version, $\mathcal{S}_{small}$ for classification replaces 1st trimester  $\mbox{PM}_{2.5}$ exposure with acute $\mbox{PM}_{2.5}$ exposure  and omits the interactions \texttt{Hisp} $\times$ \texttt{Blood\_lead}  and \texttt{Hisp} $\times$ \texttt{EconDisadv}, but is otherwise the same. The posterior expectations under $\mathcal{M}$ are excluded because they are not targeted for prediction of $h(\tilde y_i)$ and therefore are not directly comparable. However, there is some disagreement between $\mathcal{S}_{small}$ and the adaptive lasso, the latter of which excludes several keystone covariates and again suffers from inconsistency between the point and interval estimates.


\begin{figure}[h!]
\begin{center}
\includegraphics[width=.9\textwidth]{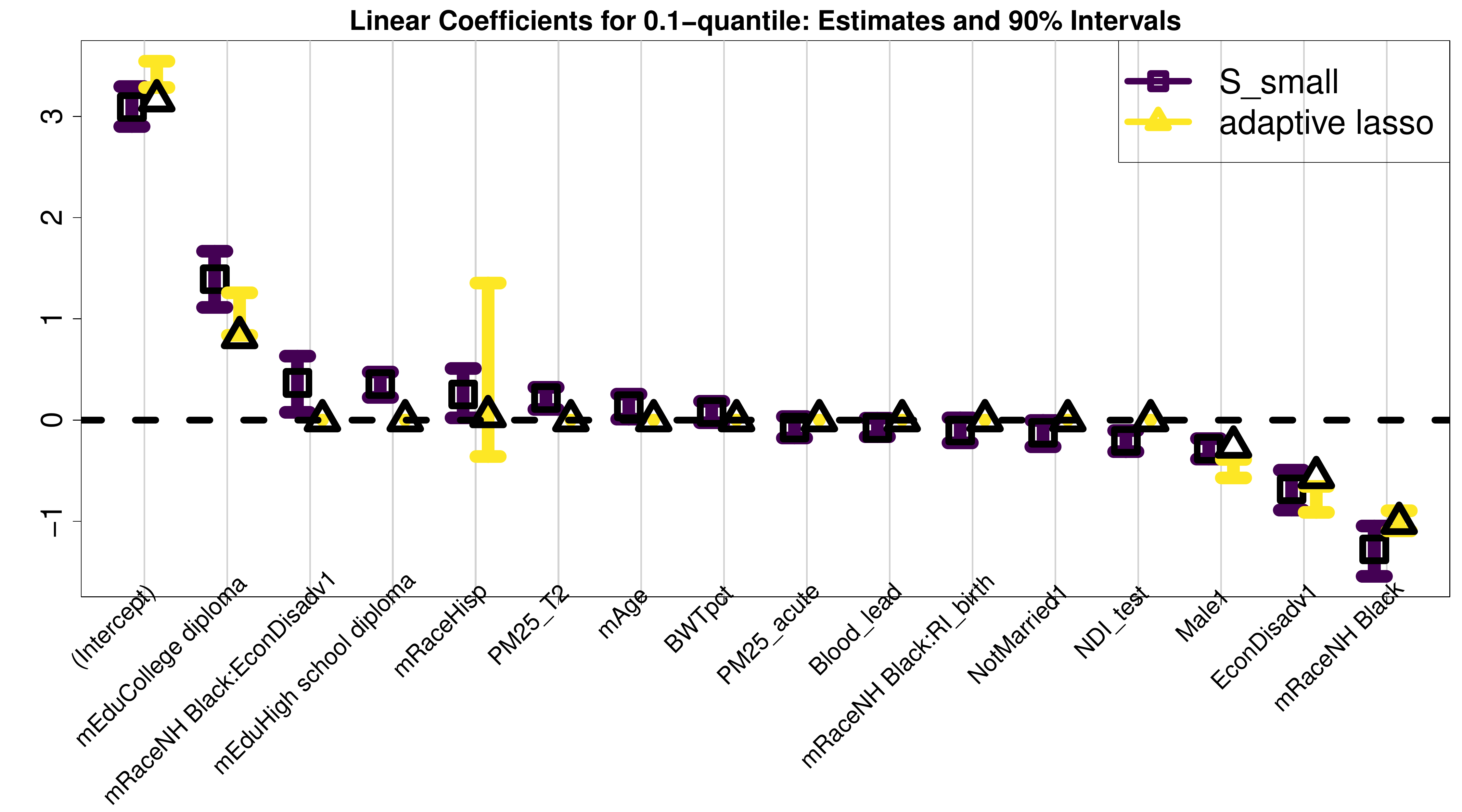}
\end{center}
\caption{ \small
Estimated linear effects and 90\% intervals for the variables in $\mathcal{S}_{small}$ based on the proposed approach  and the adaptive lasso. 
\label{fig:coef-logit}}
\end{figure}

\section{Supporting figures}\label{app-add}
These figures include: pairwise correlations among the covariate (Figure~\ref{fig:corrplot}) and co-variable importance for prediction   (Figure~\ref{fig:co-vi-lm}) and classification (Figure~\ref{fig:co-vi-logit}).

\begin{figure}[h!]
\begin{center}
\includegraphics[width=1\textwidth]{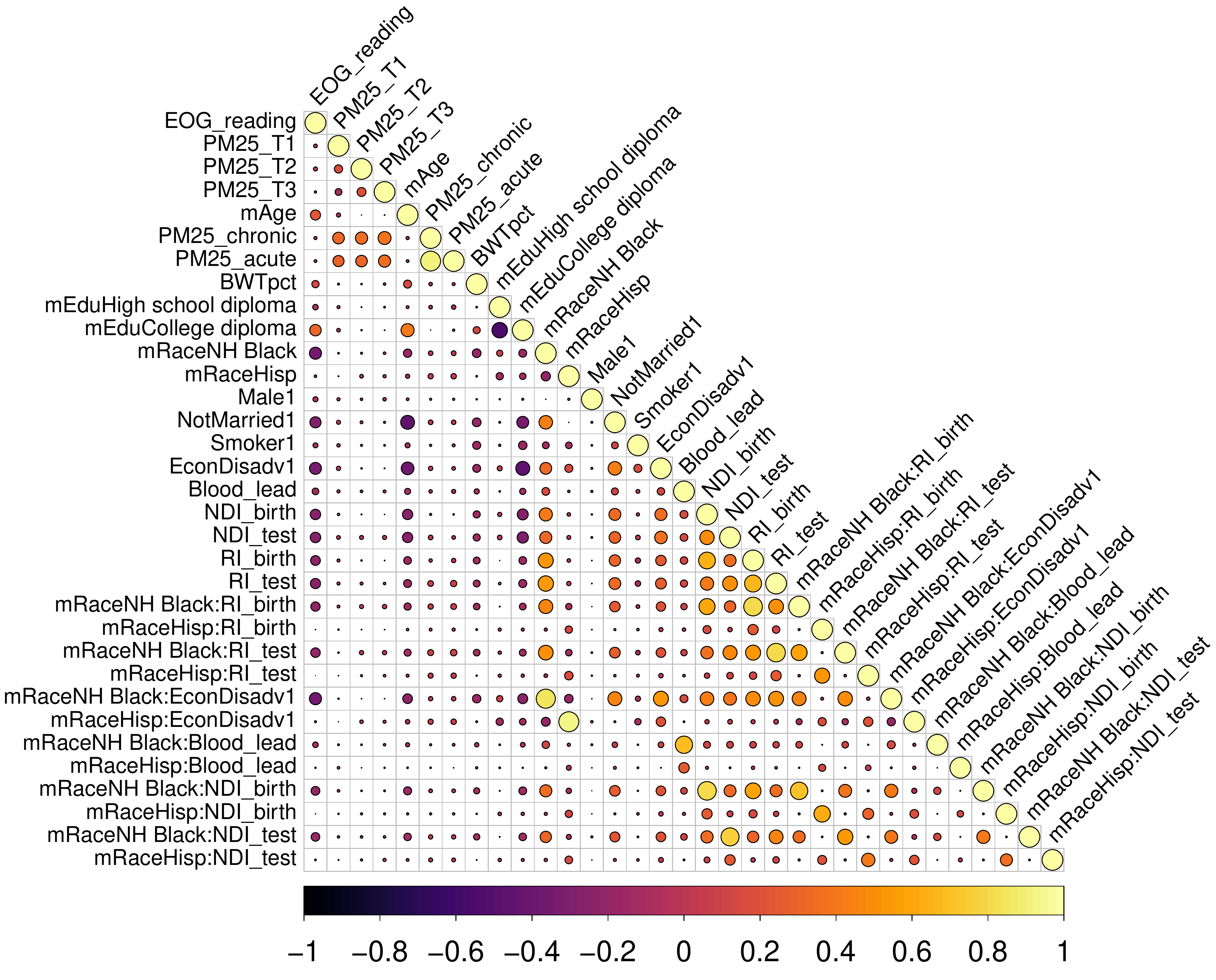}
\end{center}
\caption{ \small
Pairwise correlations among the covariates in the NC education data.
 \label{fig:corrplot}}
\end{figure}

\begin{figure}[h!]
\begin{center}
\includegraphics[width=1\textwidth]{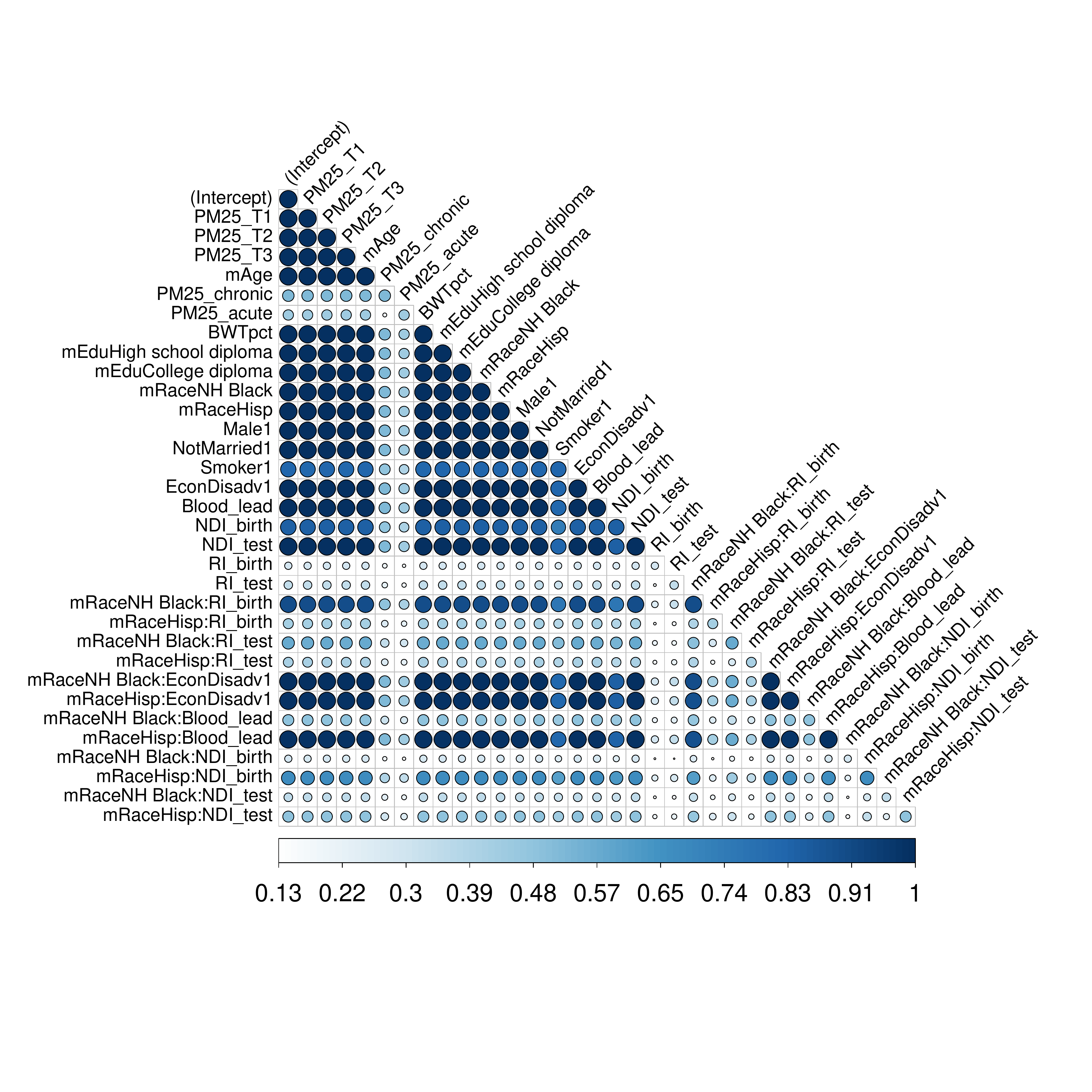}
\end{center}
\caption{ \small
Co-variable importance $\mbox{VI}_{\rm incl}(j, \ell)$ for prediction. For certain pairs of variables (chronic and acute $\mbox{PM}_{2.5}$ exposure; neighborhood deprivation and racial residential isolation both at birth and time of test), it is common for one---but not both---to appear in an acceptable subset. 
 \label{fig:co-vi-lm}}
\end{figure}

\begin{figure}[h!]
\begin{center}
\includegraphics[width=1\textwidth]{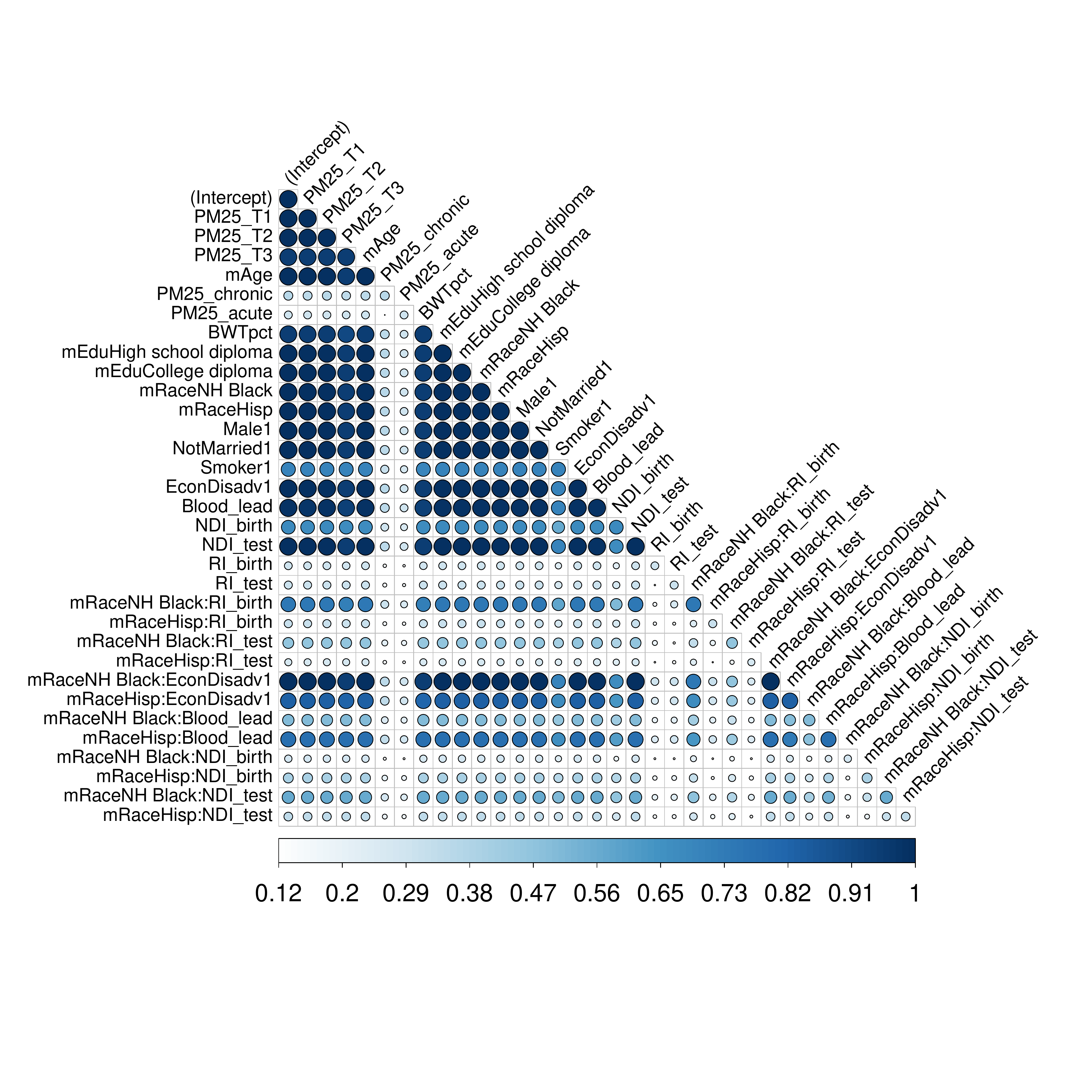}
\end{center}
\caption{ \small
Co-variable importance $\mbox{VI}_{\rm incl}(j, \ell)$ for classification of the 0.1-quantile.
 \label{fig:co-vi-logit}}
\end{figure}

\clearpage
\bibliography{refs.bib}

\end{document}